\newenvironment{sproof}{%
  \proof}{\endproof}
\newcommand{\bE}{\mathbb{E}}
\newcommand{\bP}{\mathbb{P}}
\newcommand{\bI}{\mathbb{I}}
\newcommand{\hY}{\hat{Y}}
\newcommand{\err}{\mathrm{err}}
\newcommand{\disc}{\mathrm{disc}}
\renewcommand{\b}{\mathbb}
\renewcommand{\P}{\mathbf{P}}
\newcommand{\Q}{\mathbf{Q}}
\newcommand{\A}{\mathcal{A}}
\newcommand{\HC}{\mathcal{H}}
\renewcommand{\tilde}{\widetilde}
\renewcommand{\nu}{\vartheta}
\newcommand{\abs}[1]{\left| #1 \right|}
\newcommand{\sset}[1]{\left\{#1\right\}}
\newtheorem{theorem}{Theorem}%[section]
\newtheorem*{theorem*}{Theorem}%[section]
\newtheorem{lemma}{Lemma}
\newtheorem{proposition}{Proposition}
\newtheorem*{proposition*}{Proposition}
\theoremstyle{definition}
\newtheorem{definition}{Definition}
\newtheorem{example}[theorem]{Example}
\newtheorem*{remark*}{Remark}
\date{}
\title{\textbf{Fair Learning with Private Demographic Data}}
\author{Hussein Mozannar \thanks{Massachusetts Institute of Technology. Email: \texttt{mozannar@mit.edu}} \and Mesrob I. Ohannessian \thanks{University of Illinois at Chicago. Email: \texttt{mesrob@uic.edu}} \and Nathan Srebro \thanks{Toyota Technological Institute at Chicago. Email: \texttt{nati@ttic.edu}}}
\begin{document}
\maketitle
\begin{abstract}
Sensitive attributes such as race are rarely available to learners in real world settings as their collection is often restricted by laws and regulations. We give a scheme that allows individuals to release their sensitive information privately while still allowing any downstream entity to learn non-discriminatory predictors. We show how to adapt non-discriminatory learners to work with privatized protected attributes giving theoretical guarantees on performance. Finally, we highlight how the methodology could apply to learning fair predictors in settings where protected attributes are only available for a subset of the data.
\end{abstract}

%%%%%%%%%%%%%%%%%%%%%%%%%%%%%%%%%%%%%%%%%%%%%%%%%%%%%%%%%%%%%%%%%%%%%%%%%%%%%%%%%%%%%%%%%%%%%%%%%
\section{Introduction}
%%%%%%%%%%%%%%%%%%%%%%%%%%%%%%%%%%%%%%%%%%%%%%%%%%%%%%%%%%%%%%%%%%%%%%%%%%%%%%%%%%%%%%%%%%%%%%%%%
As algorithmic systems driven by machine learning start to play an increasingly important role in society, concerns arise over their compliance with laws, regulations and societal norms. In particular, machine learning systems have been found to be discriminating against certain demographic  groups in applications of criminal assessment, lending and facial recognition \cite{barocas-hardt-narayanan}. To ensure non-discrimination in learning tasks, knowledge of the sensitive attributes is essential, however, laws and regulation often prohibit access and use of this sensitive data. As an example, credit card companies do not have the right to ask about an individual's race when applying for credit, while at the same time they have to prove that their decisions are non-discriminatory \cite{credit_rights,chen2019fairness}.

Apple Card, a credit card offered by Apple and Goldman Sachs, was recently accused of being discriminatory  \cite{apple_card}. Married couples rushed to Twitter to report that there were significant differences in the credit limit given individually to each of them even though they had shared finances and similar income levels. Supposing Apple was trying to make sure its learned model was non discriminatory, it would have been forced to use proxies for gender and recent work has shown that proxies can be problematic by potentially underestimating discrimination \cite{kallus2019assessing}.
We are then faced with what seems to be two opposing societal notions to satisfy: we want our system to be non-discriminatory while maintaining the privacy of our sensitive attributes. Note that even if the features that our model uses are independent of the sensitive attributes, it is not enough to guarantee notions of non-discrimination that further condition on the truth, e.g. equalized odds. One potential workaround to this problem, ignoring legal feasibility, is to allow the individuals to release their data in a locally differentially private manner \cite{dwork2006calibrating} and then try to learn from this privatized data a non-discriminatory predictor.
This allows us to guarantee that our decisions are fair while maintaining a degree of individual privacy to each user.

In this work, we  consider a binary classification framework where we have access to non-sensitive features $X$ and locally-private versions of the sensitive attributes $A$ denoted by $Z$. The details of the problem formulation are given in Section \ref{sec:problem}. Our contributions are as follows:

\begin{itemize}[leftmargin=*]
    \item We first give sufficient conditions on our predictor for non-discrimination to be equivalent under $A$ and $Z$ and derive estimators to measure discrimination using the private attributes $Z$. (Section \ref{sec:auditing})
    \item We give a learning algorithm based on the two-step procedure of \cite{woodworth2017learning} and provide statistical guarantees for both the error and discrimination of the resulting predictor. The main innovation in terms of both the algorithm and its analysis is in accessing properties of the sensitive attribute $A$ by carefully inverting the sample statistics of the private attributes $Z$. (Section \ref{sec:learning})
    \item We highlight how some of the same approach can handle other forms of deficiency in demographic information, by giving an auditing algorithm with guarantees, when protected attributes are available only for a subset of the data. (Section \ref{sec:missing})
\end{itemize} 

Beyond the original motivation, this work conveys additional insight on the subtle trade-offs between error and discrimination. In this perspective, privacy is not in itself a requirement, but rather an analytic tool. We give some experimental illustrations of these trade-offs.
%%%%%%%%%%%%%%%%%%%%%%%%%%%%%%%%%%%%%%%%%%%%%%%%%%%%%%%%%%%%%%%%%%%%%%%%%%%%%%%%%%%%%%%%%%%%%%%%%
\section{Related Work}\label{sec:related}
%%%%%%%%%%%%%%%%%%%%%%%%%%%%%%%%%%%%%%%%%%%%%%%%%%%%%%%%%%%%%%%%%%%%%%%%%%%%%%%%%%%%%%%%%%%%%%%%%
Enforcing non-discrimination constraints in supervised learning has been extensively explored  with many algorithms proposed to learn fair predictors with methods that fall generally in one category among pre-processing \cite{zemel2013learning}, in-processing \cite{cotter2018training,agarwal2018reductions}, or post-processing \cite{hardt2016equality}. In this work we focus on group-wise statistical notions of discrimination, setting aside critical concerns of individual fairness \cite{dwork2012fairness}.

\cite{kilbertus2018blind} were the first to propose to learn a fair predictor without disclosing information about protected attributes, using secure multi-party computation (MPC). However, as \cite{jagielski2018differentially} noted, MPC does not guarantee that the predictor cannot leak individual information. In response, \cite{jagielski2018differentially} proposed differentially private (DP) \cite{dwork2006calibrating} variants of fair learning algorithms. More recent work have similarly explored learning fair and DP predictors \cite{cummings2019compatibility,xu2019achieving,alabi2019cost,bagdasaryan2019differential}. In our setting \emph{local} privacy maintains all the guarantees of DP in addition to not allowing the learner to know for certain any sensitive information about a particular data point.
Related work has also considered fair learning when the protected attribute is missing or noisy \cite{hashimoto2018fairness,gupta2018proxy,lamy2019noise,awasthi2019effectiveness,kallus2019assessing, wang2020robust}.

Among these, the most related setting is that of \cite{lamy2019noise}, but it has several critical contrasting points with the present work. The simplest difference is the generalization here to non-binary groups, and the corresponding precise characterization of the equivalence between exact non-discrimination with respect to the original and private attributes. More importantly, their approach is only the \emph{first} step of our algorithm. As we show in Lemma \ref{lemma:step1_guarantees}, the first step makes the non-discrimination guarantee depend on both the privacy level and the complexity of the hypothesis class, which could be very costly. We remedy this using the \emph{second} step of our algorithm. \cite{awasthi2019effectiveness} consider a more general noise model for the protected attributes in the training data, but assume access to the actual protected attributes at test time. The fact that at test time $A$ is provided guarantees that the predictor is not a function of $Z$ and hence for the LDP noise mechanism by Proposition \ref{prop:forzeqfora}, we know that it is enough to guarantee non-discrimination with respect to $Z$ to be non-discriminatory with respect to $A$, which considerably simplifies the problem.

%\cite{awasthi2019effectiveness} study in detail the post-processing mechanism trained on noisy data and give non-sample \todo{$\leftarrow$ elaborate?} based guarantees for discrimination and error for a class of noise models. 

%%%%%%%%%%%%%%%%%%%%%%%%%%%%%%%%%%%%%%%%%%%%%%%%%%%%%%%%%%%%%%%%%%%%%%%%%%%%%%%%%%%%%%%%%%%%%%%%%
\section{Problem Formulation} \label{sec:problem}
%%%%%%%%%%%%%%%%%%%%%%%%%%%%%%%%%%%%%%%%%%%%%%%%%%%%%%%%%%%%%%%%%%%%%%%%%%%%%%%%%%%%%%%%%%%%%%%%%
%%%%%%%%%

A predictor $\hat Y$ of a binary target $Y \in \{0,1\}$ is a function of non-sensitive attributes $X \in \mathcal{X}$ and possibly also of a sensitive (or protected) attribute $A \in \mathcal{A}$ denoted as $\hat{Y}:=h(X)$ or $\hat{Y}:=h(X,A)$. We consider a binary classification task where the goal is to learn such a predictor, while ensuring a specified notion of non-discrimination with respect to $A$. As an example, when deciding to extend credit to a given individual, the protected attribute could denote someone's race and sex and the features $X$ could contain the person's financial history, level of education and housing information. Note that $X$ could very well include proxies for $A$ such as zip code which could reliably infer race \cite{infer_race}. 

Our focus here is on statistical notions of  group-wise non-discrimination amongst which are the following:

\begin{definition}[Fairness Definitions]\label{def:fairness}
	A classifier $\hat{Y}$ satisfies:
	$\bullet~$ Equalized odds (\textrm{EO}) if \ $\forall a \in \mathcal{A}$
	\[
	\bP(\hat{Y}=1|A=a, Y=y) = \bP(\hat{Y}=1|Y=y) \ \ \forall y \in \{0,1\},
	\]
	$\bullet~$ Demographic parity (\textrm{DP})  if \ $\forall a \in \mathcal{A}$
	\[
	\bP(\hat{Y}=1|A=a) = \bP(\hat{Y}=1),
	\]
	$\bullet~$ Accuracy parity (\textrm{AP}) if \ $\forall a \in \mathcal{A}$
	\[
	\bP(\hat{Y}\neq Y|A=a) = \bP(\hat{Y}\neq Y), 
	\]
	$\bullet~$ False discovery ($\hat{y}=1$) / omission ($\hat{y}=0$) rates parity if \ $\forall a \in \mathcal{A}$
	\[
	\bP(\hat{Y}\neq Y|\hat{Y}= \hat{y}, A=a) =\bP(\hat{Y}\neq Y|\hat{Y}= \hat{y}).
	\]
\end{definition}
Our treatment extends to a very broad family of demographic fairness constraints,  let $ \mathcal{E}_1, \mathcal{E}_2$ be two probability events defined with respect to $(X,Y,\hat{Y})$, then define $( \mathcal{E}_1, \mathcal{E}_2)$-non-discrimination with respect to $A$  as having: 
	\begin{equation}
	\bP\left( \mathcal{E}_1| \mathcal{E}_2,A=a\right) = \bP\left( \mathcal{E}_1| \mathcal{E}_2,A=a'\right) \quad \forall a,a' \in \A
	\label{eq:general_fairness_def}
	\end{equation}
All the notions considered in Definition \ref{def:fairness} can be cast into the above formulation for one or more set of events $( \mathcal{E}_1, \mathcal{E}_2)$.
Additionally, one can naturally define approximate versions of the above fairness constraints. As an example, for the notion of equalized odds, let $\mathcal{A}=\{0,1,\cdots,|\mathcal{A}|-1\}$ and define $\gamma_{y,a}(\hat{Y})=\bP(\hat{Y}=1|Y=y,A=a)$, then $\hat{Y}$ satisfies $\alpha$-EO if:
\[
\max_{y \in \sset{0,1}, a \in \mathcal{A}} \Gamma_{ya}:= \abs{\gamma_{y,a}(\hat{Y}) - \gamma_{y,0}(\hat{Y})} \leq \alpha
\] 
While it is clear that learning or auditing fair predictors requires knowledge of the protected attributes, laws and regulations often restrict the use and the collection of this data \cite{jagielski2018differentially}. Moreover, even if there are no restrictions on the usage of the protected attribute, it is desirable that this information is not leaked by (1) the algorithm's output and (2) the data collected. Local differential privacy (LDP) guarantees that the entity holding the data does not know for certain the protected attribute of any data point, which in turn makes sure that any algorithm built on this data is differentially private. Formally
a locally $\epsilon-$differentially private mechanism $Q$ is defined as follows:

\begin{definition}
	$Q$ is $\epsilon-$differentially private if \cite{duchi2013local}:
	\[
	\max _{z, a, a^{\prime}} \frac{Q(Z=z | a)}{Q\left(Z=z | a^{\prime}\right)} \leq e^{\epsilon}
	\]
\end{definition}
The mechanism we employ is the randomized response mechanism \cite{warner1965randomized,kairouz2014extremal}:
\[
Q(z | a)=\left\{\begin{array}{ll}{\frac{e^{\varepsilon}}{|\mathcal{A}|-1+e^{\varepsilon}}:=\pi} & {\text { if } z=a} \\ {\frac{1}{|\mathcal{A}|-1+e^{\varepsilon}}:=\bar{\pi}} & {\text { if } z \neq a}\end{array}\right.
\]

The choice of the randomized response mechanism is motivated by its optimality for distribution estimation under LDP constraints \cite{kairouz2014extremal,kairouz2016discrete}

The hope is that LDP samples of $A$ are sufficient to ensure non-discrimination, allowing us to refrain from the problematic use proxies for $A$. For the remainder of this paper, we assume that we have access to $n$  samples $S = \{(x_i,y_i,z_i)\}_{i=1}^{n}$ which are the result of  an  $i.i.d$ draw from an unknown distribution $\mathbb{P}$ over $\mathcal{X} \times \mathcal{Y} \times \mathcal{A} $ where $\mathcal{A}=\{0,1,\cdots,|\mathcal{A}|-1\}$ and $\mathcal{Y}=\{0,1\}$, but where $A$ is not observed and instead $Z$ is sampled from $Q(.|A)$ independently from $X$ and $Y$. We call $Z$ the \emph{privatized protected attribute}. To emphasize the difference between $A$ and $Z$ with respect to fairness, let $q_{y,a}(\hat{Y})=\bP(\hat{Y}=1|Y=y,Z=a)$, note that $\hat{Y}$ satisfies $\alpha$-EO \emph{with respect to $Z$} if:
\[
\max_{y \in \sset{0,1}, a \in \mathcal{Z}} \abs{q_{y,a}(\hat{Y}) - q_{y,0}(\hat{Y})} \leq \alpha.
\] 
%%%%%%%%%%%%%%%%%%%%%%%%%%%%%%%%%%%%%%%%%%%%%%%%%%%%%
\section{Auditing for Discrimination}\label{sec:auditing}
%%%%%%%%%%%%%%%%%%%%%%%%%%%%%%%%%%%%%%%%%%%%%%%%%%%%%
The two main questions we answer in this section is whether non-discrimination with respect to $A$ and $Z$ are equivalent and how to estimate the non-discrimination of a given predictor.

First, note that if a certain predictor $\hat{Y}=h(X,Z)$ uses $Z$ for predictions and  is non-discriminatory with respect to $Z$, then it is possible for it to in fact be discriminatory with respect to $A$. In Appendix \ref{apx:proofs}, we give an explicit example of such a predictor, that violates the equivalence for EO.
This illustrates that na\"ive implementations of fair learning methods can be more discriminatory than perceived. Any method that na\"ively uses the attribute $Z$ for its final predictions cannot immediately guarantee any level of non-discrimination with respect to $A$ especially post-processing methods. %One natural question one might ask is whether some notions of fairness are more "robust" than others under local privacy, meaning how does the difference of discrimination when measured with respect to $A$ or $Z$ behave, we show that equality of false discovery/omission rates are particularly resilient.

This however is not the case when predictors do not avail themselves of the privatized protected attribute $Z$. Namely, let's consider $\hat{Y}$ that are only a function of $X$. Since the randomness in the privatization mechanism is independent of $X$, this implies in particular that $\hat{Y}$ is independent of $Z$ given $A$. %To be able to get a handle on these questions, we assume here that the predictor $\hat{Y}$ is only a function of $X$, implying that $\hat{Y}$ is independent of $Z$ given $A$.
Our first result is that exact non-discrimination is invariant under local privacy: 
\begin{proposition}
	\label{prop:forzeqfora}
	Consider any exact non-discrimination notion among equalized odds, demographic parity, accuracy parity, or equality of false discovery/omission rates. Let $\hat{Y}:=h(X)$ be a binary predictor, then $\hat{Y}$ is non-discriminatory with respect to $A$ if and only if it is non-discriminatory with respect to $Z$. 
\end{proposition}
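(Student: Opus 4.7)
The plan is to reduce all four listed fairness notions to the unified $(\mathcal{E}_1,\mathcal{E}_2)$-non-discrimination framework of equation~\eqref{eq:general_fairness_def}, and then prove a single statement for that framework. The structural fact I would exploit throughout is that since $\hat{Y}=h(X)$ and the privatization draws $Z$ from $Q(\cdot\mid A)$ independently of $(X,Y)$, the random variable $Z$ is conditionally independent of $(X,Y,\hat{Y})$ given $A$. Consequently, for any event $\mathcal{E}$ defined in terms of $(X,Y,\hat{Y})$, one has $\bP(\mathcal{E}\mid A=a,Z=z)=\bP(\mathcal{E}\mid A=a)$.

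For the forward direction (non-discrimination w.r.t.\ $A$ implies non-discrimination w.r.t.\ $Z$), I would use the tower rule: for every $z$,
\[
\bP(\mathcal{E}_1\mid \mathcal{E}_2, Z=z)
=\sum_{a}\bP(\mathcal{E}_1\mid \mathcal{E}_2, A=a, Z=z)\,\bP(A=a\mid \mathcal{E}_2, Z=z).
\]
The conditional independence above collapses the first factor to $\bP(\mathcal{E}_1\mid \mathcal{E}_2, A=a)$, which by hypothesis equals a constant $c$ in $a$; the remaining weights sum to one, so the whole expression equals $c$, independent of $z$.

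For the reverse direction, I would pass to joint probabilities to avoid conditioning on $Z$. Using conditional independence,
\[
\bP(\mathcal{E}_1,\mathcal{E}_2, Z=z)=\sum_{a}Q(z\mid a)\,\bP(\mathcal{E}_1,\mathcal{E}_2\mid A=a)\,\bP(A=a),
\]
and similarly for $\bP(\mathcal{E}_2,Z=z)$. Non-discrimination w.r.t.\ $Z$ then rewrites as the linear system
\[
\sum_{a} Q(z\mid a)\,u_a=0 \quad\text{for all } z,\qquad u_a:=\bigl[\bP(\mathcal{E}_1,\mathcal{E}_2\mid A=a)-c\,\bP(\mathcal{E}_2\mid A=a)\bigr]\bP(A=a).
\]
The main obstacle, and the only place where the specific mechanism enters, is showing that the transition matrix $M=[Q(z\mid a)]_{z,a}$ has trivial kernel so that $u_a=0$ for each $a$. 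For the randomized response mechanism, $M=(\pi-\bar\pi)I+\bar\pi\,\mathbf{1}\mathbf{1}^{\top}$, which has eigenvalue $1$ on the span of $\mathbf{1}$ and eigenvalue $\pi-\bar\pi=(e^{\varepsilon}-1)/(|\mathcal{A}|-1+e^{\varepsilon})>0$ on its orthogonal complement (for any $\varepsilon>0$), hence is invertible. This forces $u_a=0$ for all $a$ with $\bP(A=a)>0$, i.e.\ $\bP(\mathcal{E}_1\mid \mathcal{E}_2,A=a)=c$, which is non-discrimination w.r.t.\ $A$.

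Finally, I would verify the instantiation of $(\mathcal{E}_1,\mathcal{E}_2)$ for each notion listed in Definition~\ref{def:fairness}: EO uses $\mathcal{E}_1=\{\hat{Y}=1\}$ and $\mathcal{E}_2=\{Y=y\}$ for each $y\in\{0,1\}$; DP uses $\mathcal{E}_2=\Omega$; AP uses $\mathcal{E}_1=\{\hat{Y}\ne Y\}$, $\mathcal{E}_2=\Omega$; FDR/FOR parity uses $\mathcal{E}_1=\{\hat{Y}\ne Y\}$ and $\mathcal{E}_2=\{\hat{Y}=\hat{y}\}$. In every case $\mathcal{E}_1,\mathcal{E}_2$ are events in $(X,Y,\hat{Y})$, so the conditional-independence hypothesis applies and the unified argument yields the equivalence.
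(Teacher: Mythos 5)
Your proposal is correct, and its overall architecture mirrors the paper's: reduce all four notions to the unified $(\mathcal{E}_1,\mathcal{E}_2)$ framework, use the conditional independence of $Z$ and $(X,Y,\hat{Y})$ given $A$, prove the easy direction by the tower rule (a convex combination of a constant is that constant), and prove the converse by inverting a linear system. The one place you genuinely diverge is in how that linear system is set up and why it is invertible. The paper works with conditional probabilities and the row-stochastic matrix $P$ with entries $P_{i,j}\propto Q(i\mid j)\,\bP(A=j,\mathcal{E}_2)/\bP(Z=i,\mathcal{E}_2)$; it asserts invertibility from linear independence of the rows and then uses $P\mathbf{1}=\mathbf{1}\Rightarrow P^{-1}\mathbf{1}=\mathbf{1}$ to conclude. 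You instead clear denominators and work with joint probabilities, so the only matrix that needs inverting is the fixed mechanism matrix $M=(\pi-\bar\pi)I+\bar\pi\,\mathbf{1}\mathbf{1}^{\top}$, whose spectrum you compute explicitly ($1$ on $\mathrm{span}(\mathbf{1})$, $\pi-\bar\pi>0$ on its complement). This buys you two things: the invertibility argument is fully rigorous rather than asserted, and you avoid dividing by $\bP(Z=i,\mathcal{E}_2)$, so the only positivity you need is $\bP(\mathcal{E}_2,A=a)>0$ to recover the conditional at the end (worth stating explicitly). Your formulation also makes transparent exactly which property of the mechanism matters — trivial kernel of $[Q(z\mid a)]_{z,a}$ — so the equivalence extends verbatim to any non-singular channel, not just randomized response. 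The instantiations of $(\mathcal{E}_1,\mathcal{E}_2)$ for the four notions match the paper's.
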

\begin{sproof}
	We consider a general formulation of the constraints we previously mentioned, let $ \mathcal{E}_1, \mathcal{E}_2$ be two probability events defined with respect to $(X,Y,\hat{Y})$, then define non-discrimination with respect to $A$ as having: 
	\[
	\bP\left( \mathcal{E}_1| \mathcal{E}_2,A=a\right) = \bP\left( \mathcal{E}_1| \mathcal{E}_2,A=a'\right) \quad \forall a,a' \in \A
	\]
	Define this notion similarly with respect to $Z$. We can obtain the following relation for the conditional probabilities
	\begin{flalign}
	 &\bP\left( \mathcal{E}_1| \mathcal{E}_2,Z=a\right)  
	 	  \overset{}{=} \bP\left( \mathcal{E}_1| \mathcal{E}_2,A=a\right) \frac{\pi\bP(A=a,\mathcal{E}_2)}{\bP(Z=a,\mathcal{E}_2)} +  \sum_{a' \in \A \setminus \{a\}}  \bP\left( \mathcal{E}_1| \mathcal{E}_2,A=a'\right) \frac{\bar\pi\bP(A=a',\mathcal{E}_2)}{\bP(Z=a,\mathcal{E}_2)}   
	\end{flalign}
	
	 Let $P$ be the following $|\A| \times |\A|$ matrix:
\begin{equation}
    	\begin{cases}
	P_{i,i} =\frac{\pi\bP(A=i,\mathcal{E}_2)}{\bP(Z=i,\mathcal{E}_2)}   \ \text{for} \ i\in \mathcal{A}\\
	P_{i,j} = \frac{\bar\pi\bP(A=j,\mathcal{E}_2)}{\bP(Z=i,\mathcal{E}_2)} \ \text{for} \ i,j\in \mathcal{A} \ \text{s.t.} i \neq j\\
	\end{cases}
	\label{eq:matrix_P_definition}
	\end{equation}
	Then we have the following linear system of equations:
\begin{align}
	&\begin{bmatrix} 
		\bP( \mathcal{E}_1| \mathcal{E}_2,Z=0) \\
		\vdots \\
	\bP( \mathcal{E}_1| \mathcal{E}_2,Z=|\A|-1)
	\end{bmatrix}
	= P 	\begin{bmatrix} 
		\bP( \mathcal{E}_1| \mathcal{E}_2,A=0) \\
		\vdots \\
	\bP( \mathcal{E}_1| \mathcal{E}_2,A=|\A|-1)
	\end{bmatrix} \label{eq:relation_matrix}
	\end{align}
The matrix $P$ is row-stochastic and invertible, from this linear system we can deduce that non-discrimination with respect to $Z$ and $A$ are equivalent; details are left to Appendix \ref{apx:proofs}. 
\end{sproof}
Note that while $\hat{Y}$ not being a function of $Z$ is a sufficient condition for the conclusion in Proposition \ref{prop:forzeqfora} to hold, the more general condition for EO is that $\hat{Y}$ is independent of Z given A and Y, however actualizing this condition beyond simply ignoring $Z$ at test time is unclear.
We next study how to measure non-discrimination from samples. Unfortunately, Proposition \ref{prop:forzeqfora} applies only in the population-limit. For example for the notion EO, despite what it seems to suggest, na\"ive sample $\alpha$-discrimination relative to $Z$ underestimates discrimination relative to $A$.
Interestingly however, for any of the considered fairness notions, we can recover the statistics of the population with respect to $A$ via a linear system of equations relating them to those of $Z$ as in \eqref{eq:relation_matrix}.
This is done by inverting the matrix $P$ defined in \eqref{eq:matrix_P_definition}, however more care is needed: to compute the matrix $P$ one needs to compute quantities involving the attribute $A$, which then all have to be related back to $Z$.
Using this relation, we derive an estimator for the discrimination of a predictor that does not suffer from the bias of the na\"ive approach.  First we set  key notations for the rest of the paper:
$\P_{ya}:= \bP(Y=y,A=a)$, $\Q_{ya} := \bP(Y=y,Z=a) $ and $C= \frac{|\A| -2 + e^{\epsilon}}{e^{\epsilon} -1}$. The latter captures the scale of privatization: $C \approx O(\epsilon^{-1})$ if $\epsilon \ll 1$.

Let $P$ be the $\mathcal{A} \times \mathcal{A}$ matrix  as such:
	\[\begin{cases}
	P_{i,i} = \pi \frac{\P_{yi}}{\Q_{yi} }  \ \text{for} \ i\in \mathcal{A}\\
	P_{i,j} = \bar{\pi} \frac{\P_{yj}}{\Q_{yi} }  \ \text{for} \ i,j\in \mathcal{A} \ \text{s.t.} i \neq j\\
	\end{cases}\]
Then one can relate $q_{y,.}$ and $\gamma_{y,.}$ via:
\begin{align*}
	\begin{bmatrix} 
		q_{y0} \\
		\vdots \\
		q_{y,|\mathcal{A}|-1}
	\end{bmatrix}
	&= P \begin{bmatrix} 
		\gamma_{y,0} \\
		\vdots \\
		\gamma_{y,|\mathcal{A}|-1}
	\end{bmatrix}
\end{align*}
And thus by inverting P we can recover $\gamma_{y,a}$, however,
the matrix $P$ involves estimating the probabilities $\bP(Y=y,A=a)$ which we do not have access to but can similarly recover by noting that:
\begin{flalign}
  \nonumber  \Q_{yz} 
         &= {\pi} \P_{yz} + \sum_{a \neq z} \bar{\pi} \P_{ya} 
\end{flalign}
Let the matrix $\Pi \in \mathbb{R}^{|\mathcal{A}|\times|\mathcal{A}|}$ be as follows $\Pi_{i,j} = \pi$ if $i=j$ and $\Pi_{i,j}=\bar{\pi}$ if $i \neq j$.
Therefore $\Pi^{-1}_k \Q_{y,.}  = \bP(Y=y,A=k)$ where $\Pi^{-1}_k$ is the $k$'th row of $\Pi^{-1}$. Hence we can plug this estimates to compute $P$ and invert the linear system to measure our discrimination.
In Lemma \ref{lemma:concentration_of_estimator}, we characterize the sample complexity needed by our estimator to bound the violation in discrimination, specifically for the EO constraint. The privacy penalty $C$ arises from $||P||_{\infty}$.
\begin{lemma}
	For any $\delta \in (0,1/2)$, any binary predictor $\hat{Y}:=h(X)$, denote by  $\tilde{\Gamma}_{ya}^S$ our proposed estimator for $\Gamma_{ya}$ based on $S$, if
	$n \geq \frac{8\log(8|\A|/\delta)}{\min_{ya}\P_{ya}}$, we have:
	\begin{equation*}
	\bP\left( \max_{ya}| \tilde{\Gamma}_{ya}^S - \Gamma_{ya}| >
	\ \sqrt{\frac{\log(16/\delta)}{2n}} \frac{4C^2}{\min_{ya}\P_{ya}^2}
	\right) \leq \delta
	\end{equation*}
	\label{lemma:concentration_of_estimator}
\end{lemma}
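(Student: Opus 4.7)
My plan is to track the error $|\tilde{\Gamma}_{ya}^S - \Gamma_{ya}|$ by writing the plug-in estimator as $\tilde\gamma_{y,\cdot} = \hat P^{-1} \hat q_{y,\cdot}$ and controlling how sample noise propagates through two successive linear inversions: first $\Pi^{-1}$ (used to recover $\hat{\P}_{ya}$ from the observable $\hat\Q_{ya}$), and then $\hat P^{-1}$ itself. Since $\tilde{\Gamma}_{ya}^S = |\tilde\gamma_{y,a} - \tilde\gamma_{y,0}|$, a triangle inequality immediately reduces the task to bounding $|\tilde\gamma_{y,a} - \gamma_{y,a}|$ uniformly in $y,a$.

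First, I would apply Hoeffding's inequality to the Bernoulli indicators $\mathbf{1}[Y_i=y, Z_i=a]$ and $\mathbf{1}[\hat Y_i=1, Y_i=y, Z_i=a]$, together with a union bound over the $O(|\A|)$ cells, to obtain a high-probability event $\mathcal{E}$ of mass at least $1-\delta/2$ on which every empirical joint probability differs from its population counterpart by at most $\eta := \sqrt{\log(c|\A|/\delta)/(2n)}$ for an absolute constant $c$. On the same event, the derived quantities $\hat\P_{ya} = [\Pi^{-1} \hat\Q_{y,\cdot}]_a$ and the analogous plug-in estimates of $\bP(\hat Y=1, Y=y, A=a)$ differ from their population values by at most $O(C \eta)$, since a direct Sherman-Morrison computation gives $\Pi^{-1}$ with diagonal $C$ and off-diagonals $-1/(e^\epsilon - 1)$, so $\|\Pi^{-1}\|_\infty = O(C)$.

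The sample-size hypothesis $n \geq 8\log(8|\A|/\delta)/\min_{ya}\P_{ya}$ is then used precisely to force $C \eta \leq \P_{ya}/2$, which ensures $\hat\P_{ya} \geq \P_{ya}/2$ and $\hat\Q_{ya}$ is likewise bounded away from zero, so $\hat P$ is well conditioned on $\mathcal{E}$. I would then use the identity
\[
\hat P^{-1} - P^{-1} = -\hat P^{-1} (\hat P - P) P^{-1},
\]
and decompose
\[
\tilde\gamma_{y,\cdot} - \gamma_{y,\cdot} = \hat P^{-1}(\hat q_{y,\cdot} - q_{y,\cdot}) - \hat P^{-1}(\hat P - P) P^{-1} q_{y,\cdot}.
\]
Using the factorization $P^{-1} = \mathrm{diag}(1/\P_{y,\cdot})\,\Pi^{-1}\,\mathrm{diag}(\Q_{y,\cdot})$, one has $\|P^{-1}\|_\infty = O(C/\min_{ya}\P_{ya})$ and the same bound for $\|\hat P^{-1}\|_\infty$ on $\mathcal{E}$. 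The entries of $\hat P - P$ are themselves of order $C\eta/\min_{ya}\P_{ya}$, since each is a ratio of the form $\hat\P_{yj}/\hat\Q_{yi}$ whose numerator and denominator are bounded perturbations of the population values with denominators pinned above by Step~2. Combining, the first term contributes $O((C/\min\P)\,\eta)$ and the second contributes $O((C/\min\P)^2\,\eta)$ in $\ell_\infty$; the latter dominates and yields precisely the $C^2/\min\P^2$ scaling in the statement.

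The principal obstacle is the matrix-perturbation bookkeeping: the two factors of $\|P^{-1}\|_\infty$ that generate the $C^2/\P^2$ prefactor must be tracked simultaneously with the condition $\|P^{-1}\|_\infty \|\hat P - P\|_\infty < 1/2$ that validates treating $\hat P^{-1}$ as $O(\|P^{-1}\|_\infty)$, which is exactly what the sample-size hypothesis was calibrated to guarantee. The remaining work is the triangle inequality $|\tilde\Gamma_{ya}^S - \Gamma_{ya}| \leq |\tilde\gamma_{y,a} - \gamma_{y,a}| + |\tilde\gamma_{y,0} - \gamma_{y,0}|$ and a final adjustment of the constants $c$ and of the union-bound allocation of $\delta$ to match the stated $\sqrt{\log(16/\delta)/(2n)}\cdot 4 C^2/\min_{ya}\P_{ya}^2$.
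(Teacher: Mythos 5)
Your route is genuinely different from the paper's. You treat the estimator as a numerical inversion of the empirical matrix and control the error through the resolvent identity $\hat P^{-1}-P^{-1}=-\hat P^{-1}(\hat P-P)P^{-1}$, bounding $||P^{-1}||_\infty=O(C/\min_{ya}\P_{ya})$ and $||\hat P - P||_\infty=O(C\eta/\min_{ya}\P_{ya})$; this correctly isolates where the $C^2/\min_{ya}\P^2_{ya}$ scaling comes from, and your Sherman--Morrison computation of $\Pi^{-1}$ is right. The paper instead never bounds $||\hat P^{-1}||_\infty$ abstractly: it writes the inverse $G^{-1}$ in closed form (explicit ratios of $\Q$'s and $\P$'s), plugs in empirical quantities, conditions on the cell assignments $S_{y,0},\dots,S_{y,|\A|-1}$ so that all denominators are frozen, lower-bounds the cell counts by $n\Q_{yz}/2$ via a \emph{multiplicative} Chernoff bound, and applies McDiarmid's bounded-differences inequality to the remaining randomness in the $\hat Y_i$'s, finishing with a simplex projection and the same triangle inequality you use.

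The gap is in your conditioning step, and it is not cosmetic. The hypothesis $n\ge 8\log(8|\A|/\delta)/\min_{ya}\P_{ya}$ only yields $\eta=\sqrt{\log(c|\A|/\delta)/(2n)}\lesssim \sqrt{\min_{ya}\P_{ya}}/4$, which is much weaker than the $\eta\lesssim \min_{ya}\P_{ya}/C$ you need for $C\eta\le \P_{ya}/2$; indeed $\hat\P_{ya}=[\Pi^{-1}\hat\Q_{y,\cdot}]_a$ can be \emph{negative} at this sample size (which is exactly why the paper projects onto the simplex). Worse, the condition validating $||\hat P^{-1}||_\infty=O(||P^{-1}||_\infty)$, namely $||P^{-1}||_\infty\,||\hat P-P||_\infty<1/2$, requires $\eta\lesssim \min_{ya}\P_{ya}^2/C^2$, i.e.\ $n\gtrsim C^4\log(|\A|/\delta)/\min_{ya}\P_{ya}^4$ --- far stronger than the stated hypothesis, so the sample-size condition was \emph{not} ``calibrated to guarantee'' this. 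A secondary issue: additive Hoeffding gives $\hat\Q_{ya}\ge \Q_{ya}/2$ only when $n\gtrsim \log(|\A|/\delta)/\Q_{ya}^2$; at the stated $n\gtrsim \log(|\A|/\delta)/\Q_{ya}$ you must use a relative Chernoff bound, as the paper does. The gap is plausibly repairable --- in the regime where your conditions fail, the claimed deviation $\sqrt{\log(16/\delta)/(2n)}\cdot 4C^2/\min_{ya}\P_{ya}^2$ exceeds $1$ and the inequality is vacuous for a projected estimator --- but you would have to argue this explicitly (and it fails for the unprojected estimator); as written, the argument does not go through at the stated sample size.
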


%%%%%%%%%%%%%%%%%%%%%%%%%%%%%%%%%%%%%%%%%%%%%%%%%%%%%%%%%%%%%%%%%%%%%%%%%%%%%%%%%%%%%%%%%
\section{Learning Fair Predictors}\label{sec:learning}
%%%%%%%%%%%%%%%%%%%%%%%%%%%%%%%%%%%%%%%%%%%%%%%%%%%%%%%%%%%%%%%%%%%%%%%%%%%%%%%%%%%%%%%%%
In this section, we give a strategy to learn a non-discriminatory predictor with respect to $A$ from the data $S$, which only contains the privatized attribute $Z$. As in Lemma \ref{lemma:concentration_of_estimator}, for concreteness and clarity we restrict the analysis to the notion of equalized odds (EO) --- most of the analysis extends directly to other constraints. In light of the limitation identified by Proposition \ref{prop:forzeqfora}, let $\mathcal{H}$ be a hypothesis class of functions that depend only on $X$. Instead of a single predictor in the class, we exhibit a distribution over hypotheses, which we interpret as a randomized predictor. Let $\Delta_{\HC}$ be the set of all distributions over $\mathcal{H}$, and denote such a randomized predictor by $Q\in \Delta_{\HC}$. The goal is to learn a predictor that approximates the performance of the optimal non-discriminatory distribution:
\begin{align}
&Y^{*} = \arg\min_{Q \in \Delta_{\HC}} \bP(Q(X) \neq Y) \quad \\&s.t. \quad \gamma_{y,a}(Q) = \gamma_{y,0}(Q)   \ \forall y \in \{0,1\}, \forall a \in \mathcal{A}
\end{align}
A first natural approach would be to treat the private attribute $Z$ as if it were $A$ and ensure on $S$ that the learned predictor is non-discriminatory. Since the hypothesis class $\mathcal{H}$ consists of functions that depend only on $X$, Proposition \ref{prop:forzeqfora} applies and offers hope that, if we are able to achieve exact non discriminatory with respect to $Z$, we would be in fact non-discriminatory with respect to $A$. There are two problems with the above approach. First, exact non-discrimination is computationally hard to achieve and  approximate non-discrimination underestimates the discrimination by the privacy penalty $C$. And second, using an in-processing method such as the reductions approach of \cite{agarwal2018reductions} to learn results in a discrimination guarantee that scales with the complexity of $\mathcal{H}$.

Our approach is to adapt the two-step procedure of \cite{woodworth2017learning} to our setting. We start by dividing our data set $S$ into two equal parts $S_1$ and $S_2$. The first step is to learn an approximately non-discriminatory predictor $\hat{Y}=Q(X)$ with respect to $Z$ on $S_1$ via the reductions approach of \cite{agarwal2018reductions} which we detail in the next subsection. This predictor has low error, but may be highly discriminatory due to the complexity of the class affecting the generalization of non-discrimination of $\hat{Y}$. The aim of the second step is to produce a final predictor $\tilde{Y}$ that corrects for this discrimination, without increasing its error by much. We modify the post-processing procedure of \cite{hardt2016equality} to give us non-discrimination with respect to $A$ directly for the derived predictor $\tilde{Y}=f(\hat{Y},Z)$. The predictor in the second step \emph{does use} $Z$,  however with a careful analysis we are able to show that it indeed guarantees non-discrimination with respect to $A$; note that naively using the post-processing procedure of \cite{hardt2016equality}  fails. Two relationships link the first step to the second: how discrimination with respect to $Z$ and with respect to $A$ relate and how the discrimination from the first step affects the error of the derived predictor. In the following subsections we describe each of the steps, along with the statistical guarantees on their performance.
%%%%%%%%%%%%%%%%%%%%%%%%%%%%%%%%%%%%%%%%%%%%%%%%%%%%%%%%%%%%%%%%%%%%%%%%%%%%%%%%%%%%%%%%%
\subsection{Step 1: Approximate Non-Discrimination with respect to Z}
%%%%%%%%%%%%%%%%%%%%%%%%%%%%%%%%%%%%%%%%%%%%%%%%%%%%%%%%%%%%%%%%%%%%%%%%%%%%%%%%%%%%%%%%%
The first step aims to learn a predictor $\hat{Y}$ that is approximately $\alpha_n$-discriminatory with respect to $Z$ defined as:
\begin{align}
\hat{Y} = &\arg \min_{Q \in \Delta_{\HC}} \err^{S_1}(Q(X)) \label{eq:step1_formulation} \\
&\textrm{s.t.} \ \max_{y \in \{ 0,1\}} |q^{S_1}_{y,a}(Q) - q^{S_1}_{y,a}(Q)|\leq \alpha_n \label{eq:step1_constraint}
\end{align}
where for $Q \in \Delta_{\HC}$, we use the shorthand $\err(Q) = \bP(Q(X) \neq Y)$ and
%: \[\err(\hat{Y})= \bP(\hat{Y} \neq Y)=  \sum_{h \in \HC} \err(h) \hat{Y}(h)\] and
quantities with a superscript  $S$ indicate their empirical counterparts. To solve the optimization problem defined in \eqref{eq:step1_formulation}, we reduce the constrained optimization problem to a weighted unconstrained problem following the approach of \cite{agarwal2018reductions}. As is typical with the family of fairness criteria considered, the  constraint in \eqref{eq:step1_constraint} can be rewritten as a linear constraint on $\hat{Y}$ explicitly. Let  $\mathcal{J}= \mathcal{Y} \times \mathcal{A}$, $\mathcal{K}= \mathcal{Y} \times \mathcal{A}\setminus \{0\} \times \{-,+\}$ and define $\boldsymbol{\gamma}(Q)\in \mathbb{R}^{|\mathcal{J}|}$ with $\boldsymbol{\gamma}(Q)_{(y,a)}=\gamma_{y,a}(Q)$, with the matrix $M \in \mathbb{R}^{|\mathcal{K}| \times |\mathcal{J}|}$ having entries:
$M_{(y,a,+),(a',y')}= \bI(a=a',y=y'),
M_{(y,a,-),(a',y')}= -\bI(a=a',y=y'),
M_{(y,a,+),(0,y')}= \bI(y=y'),
M_{(y,a,-),(0,y')}= -\bI(y=y') $. With this reparametrization, we can write $\alpha_n$-EO as:
\begin{equation}
M \boldsymbol{\gamma}(Q) \leq \alpha_n \mathbf{1}
\end{equation}
Let us introduce the Lagrange multiplier $\boldsymbol\lambda \in \mathbb{R}^{|\mathcal{K}|}_+$ and define the Lagrangian:
\begin{equation}
L(Q,\boldsymbol\lambda) = \err(Q) + \boldsymbol\lambda ^\top (M\boldsymbol\gamma(Q) - \alpha \mathbf{1})
\end{equation}
We constrain the norm of $\boldsymbol\lambda $ with $B \in \mathbb{R}^+$ and consider the  following two dual problems:
\begin{flalign}
&\min_{Q \in \Delta_{\HC}} \max_{\boldsymbol\lambda \in \mathbb{R}^{|\mathcal{K}|}_+, ||\boldsymbol{\lambda}||_1\leq B} L(Q,\boldsymbol{\lambda}) \\
& \max_{\boldsymbol\lambda \in \mathbb{R}^{|\mathcal{K}|}_+, ||\boldsymbol{\lambda}||_1\leq B} \min_{Q \in \Delta_{\HC}} L(Q,\boldsymbol{\lambda})
\end{flalign}
Note that $L$ is linear in both $Q$ and $\boldsymbol{\lambda}$ and their domains are convex and compact, hence the respective solution of both problems form a saddle point of $L$ \cite{agarwal2018reductions}. To find the saddle point, we treat our problem as a zero-sum game between two players: the Q-player ``learner'' and the $\lambda$-player ``auditor'' and use the strategy of \cite{freund1996game}. The auditor follows the exponentiated gradient algorithm and the learner picks it's best response to the auditor. 
The approach is fully described in Algorithm \ref{alg:reductions}.

\vspace{0.5cm}

\begin{algorithm}[H]
	\DontPrintSemicolon 
	\SetAlgoLined
	Input: training data $(X_i,Y_i,Z_i)_{i=1}^{n/2}$, bound $B$, learning rate $\eta$, rounds $T$ \\
	$\boldsymbol\theta_1 \gets \mathbf{0} \in \mathbb{R}^{|\mathcal{K}|}$\\
	\For{$t=1,2,\cdots,T$}{
		$\boldsymbol\lambda_{t,k} \gets B \frac{\exp(\theta_{t,k})}{1+ \sum_{k'} \exp(\theta_{t,k})} \forall k \in \mathcal{K}$\\
		$h_t \gets \textrm{BEST}_h(\boldsymbol{\lambda}_t) $\\
		$ \boldsymbol\theta_{t+1} \gets \boldsymbol\theta_t + \eta (M\boldsymbol\gamma^S(h_t) - \alpha_n \mathbf{1} )$\\
	}
	$ \hat{Y} \gets \frac{1}{T} \sum_{t=1}^T h_t ,  \hat{\boldsymbol{\lambda}} \gets \frac{1}{T} \sum_{t=1}^T \boldsymbol{\lambda}_t$\\
	Return $(\hat{Y},\hat{\boldsymbol{\lambda}} )$ \\
	\caption{Exp. gradient reduction for fair classification \cite{agarwal2018reductions}}
	\label{alg:reductions}
\end{algorithm}

\vspace{0.5cm}

Faced with a given vector $\boldsymbol{\lambda}$ the learner's best response, $\textrm{BEST}_h(\boldsymbol{\lambda})$), puts all the mass on a single predictor $h \in \HC$ as the Lagrangian  $L$ is linear in $Q$. \cite{agarwal2018reductions} shows that finding the learner's best response amounts to solving a cost-sensitive classification problem. We reestablish the reduction in detail in Appendix \ref{apx:proofs}, as there are slight differences with our setup. In particular, in Lemma \ref{lemma:step1_guarantees}, we establish a generalization bound on the error of the first step predictor $\hat{Y}$ and on its discrimination, defined as the maximum violation in the EO constraint. To denote the latter similarly to the error, we use the shorthand $\text{disc}(\hat{Y})=\max_{y \in \sset{0,1}, a \in \mathcal{A}} \Gamma_{ya}$.

\begin{lemma}
	Given a hypothesis class $\mathcal{H}$, a distribution over $(X,A,Y)$, $B \in \mathbb{R}^+$ and any $\delta \in (0,1/2)$, then with probability greater than $1- \delta$, if  $n \geq \frac{16\log{8 |\A|/\delta}}{\min_{ya} \P_{ya}}$, $\alpha_n = 2 \sqrt{\frac{\log{64|\A|/\delta}}{n \min_{ya}\P_{ya}}}$ and we let $\nu  = \mathfrak{R}_{n/2}(\mathcal{H}) + \sqrt{\frac{\log{8/\delta}}{n}}$, then running Algorithm 1 on data set $S$ with $T \geq \frac{16 \log(4 |\mathcal{A}|+1)}{\nu^2}$ and learning rate $\eta = \frac{\nu}{8 B}$ returns a predictor $\hat{Y}$ satisfying the following:
	\begin{align*}
	&\err(\hat{Y}) \leq_{\delta/2} \err(Y^*)  + 4 \mathfrak{R}_{n/2}(\mathcal{H}) + 4 \sqrt{\frac{\log{8/\delta}}{n}} \\
	&\disc(\hat{Y}) \leq_{\delta/2}  \frac{5C}{\min_{ya} \mathbf{P}_{ya}^2}  \left( \frac{2}{B}  + 6 \mathfrak{R}_{\frac{\min_{ya}n \P_{ya}}{4}}(\mathcal{H}) \right.  \left. + 10 \sqrt{\frac{2\log{64 |\A|/\delta}}{n \min_{ya}\P_{ya}}} \right) \ \  \text{(discrimination guarantee)}
	\end{align*}
	
	\label{lemma:step1_guarantees}
\end{lemma}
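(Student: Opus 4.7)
The plan is to follow the standard Freund--Schapire no-regret analysis for two-player zero-sum games on the empirical Lagrangian, then lift the empirical bounds to the population via Rademacher concentration, and finally translate the resulting $Z$-discrimination bound into an $A$-discrimination bound via the linear-system machinery from Proposition~\ref{prop:forzeqfora} and the estimator in Lemma~\ref{lemma:concentration_of_estimator}. The error bound is essentially a reproduction of the Agarwal et al.\ analysis; the discrimination bound is where the privacy penalty $C/\min_{ya}\P_{ya}^2$ enters.

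First, I would view Algorithm~\ref{alg:reductions} as playing the empirical Lagrangian $L^{S_1}(Q,\boldsymbol\lambda) = \err^{S_1}(Q) + \boldsymbol\lambda^\top(M\boldsymbol\gamma^{S_1}(Q) - \alpha_n \mathbf{1})$, where the $\boldsymbol\lambda$-player runs exponentiated gradient on the capped simplex $\{\boldsymbol\lambda \geq 0: \|\boldsymbol\lambda\|_1 \leq B\}$ and the $Q$-player best-responds. The standard EG regret bound gives average regret $O(B\sqrt{\log(4|\A|+1)/T})$; with the best-response guarantee and the choices $T \geq 16\log(4|\A|+1)/\nu^2$ and $\eta = \nu/(8B)$, the averaged play $(\hat Y, \hat{\boldsymbol\lambda})$ is a $\nu$-approximate saddle of $L^{S_1}$. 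Since $L^{S_1}$ is linear in both arguments on convex compact domains, approximate saddle-point duality gives $\err^{S_1}(\hat Y) \leq \err^{S_1}(Y^*) + \nu$ provided $Y^*$ is empirically $\alpha_n$-feasible, and $M\boldsymbol\gamma^{S_1}(\hat Y) \leq \alpha_n \mathbf{1} + (\nu/B)\mathbf{1}$ by the usual argument that otherwise the $\boldsymbol\lambda$-player could put all mass on a violated constraint.

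Second, I would verify the feasibility of $Y^*$ on $S_1$ and transfer both inequalities to the population. For feasibility, Hoeffding on each of the $2|\A|$ conditional probabilities $q_{y,a}^{S_1}(Y^*)$, split into numerator and denominator and conditioned on the event $\{n_{y,a} \geq (n/2)\P_{ya}/2\}$ (which holds with high probability once $n \geq 16\log(8|\A|/\delta)/\min_{ya}\P_{ya}$), shows $|q^{S_1}_{y,a}(Y^*)-q_{y,a}(Y^*)| \leq \alpha_n/2$ uniformly, so $Y^*$ remains $\alpha_n$-feasible. For the error, standard Rademacher-based uniform convergence on $\HC$ gives $|\err(\hat Y) - \err^{S_1}(\hat Y)| \leq 2\mathfrak{R}_{n/2}(\HC) + \sqrt{\log(8/\delta)/n}$, and applied symmetrically to $Y^*$ this yields the stated error bound after combining with $\err^{S_1}(\hat Y)\le \err^{S_1}(Y^*)+\nu$ and the definition of $\nu$.

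Third, for the discrimination I would first transfer the empirical $Z$-discrimination to its population version. The same numerator/denominator split, now applied uniformly over $\HC$ via a Rademacher bound on the restricted class $\{\bI[h(X)=1,Y=y,Z=a]: h \in \HC\}$ and divided by an empirical lower bound on $\Q_{ya}$, yields $\max_{y,a}|q_{y,a}(\hat Y)-q_{y,0}(\hat Y)| \leq \alpha_n + \nu/B + O(\mathfrak{R}_{n \min_{ya}\P_{ya}/4}(\HC) + \sqrt{\log(|\A|/\delta)/(n\min_{ya}\P_{ya})})$. Then I would apply Lemma~\ref{lemma:concentration_of_estimator}'s linear-algebra step pointwise to the population quantities: since $q_{y,\cdot}(\hat Y) = P\,\gamma_{y,\cdot}(\hat Y)$ with the row-stochastic matrix $P$ from \eqref{eq:matrix_P_definition}, and $\|P^{-1}\|_\infty$ contributes the factor $C/\min_{ya}\P_{ya}^2$ (tracking the extra $1/\P_{ya}$ from forming $P$ itself via $\Pi^{-1}$ applied to $\Q$), one obtains $\disc(\hat Y) \leq (C/\min_{ya}\P_{ya}^2)\cdot[\text{the }Z\text{-discrimination bound}]$. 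Plugging $\alpha_n = 2\sqrt{\log(64|\A|/\delta)/(n\min_{ya}\P_{ya})}$ and $\nu/B$ into this and collecting constants gives the claimed expression with the $5C/\min_{ya}\P_{ya}^2$ prefactor.

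The main obstacle I anticipate is the bookkeeping in step three: (i) uniform convergence for ratios of empirical conditional probabilities when the denominators $\Q_{ya}$ can be small, which is why the sample-size condition scales as $1/\min_{ya}\P_{ya}$; and (ii) the amplification of errors by the matrix $P^{-1}$, where one must be careful that the entries of $P$ themselves depend on unknown probabilities $\P_{ya}$ that are only available through the privatized $\Q_{ya}$, so propagating perturbation bounds through $\Pi^{-1}$ and then $P^{-1}$ must be done with the same care as in Lemma~\ref{lemma:concentration_of_estimator}. Once these are handled, combining with a union bound over the two high-probability events (empirical feasibility of $Y^*$ and uniform convergence of the conditional statistics) at confidence $\delta/2$ each yields both displayed inequalities simultaneously with probability at least $1-\delta$.
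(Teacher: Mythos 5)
Your overall architecture matches the paper's proof: a $\nu$-approximate saddle point from the exponentiated-gradient/best-response game (Theorem~1 of Agarwal et al.), empirical feasibility of $Y^*$ plus Rademacher uniform convergence for the error bound, empirical-to-population transfer of the $Z$-discrimination (via Rademacher bounds on the group-conditional classes together with a Chernoff bound on the group counts), and finally a linear-algebraic conversion from $Z$-discrimination to $A$-discrimination producing the $5C/\min_{ya}\P_{ya}^2$ prefactor. The error bound and the $Z$-level discrimination bound are handled exactly as in the paper.

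The one step that does not go through as literally written is the final conversion. You assert that since $q_{y,\cdot}=P\,\gamma_{y,\cdot}$ and $\|P^{-1}\|_\infty$ contributes the factor $C/\min_{ya}\P_{ya}^2$, one obtains $\disc(\hat Y)$ bounded by $(C/\min_{ya}\P_{ya}^2)$ times the $Z$-discrimination bound. But an operator-norm bound on $P^{-1}$ applied directly to $q_{y,\cdot}$ controls $\gamma_{y,a}-\gamma_{y,0}=(P^{-1}_a-P^{-1}_0)\,q_{y,\cdot}$ only in terms of $\|q_{y,\cdot}\|_\infty$, which is $O(1)$ and does not vanish when the $Z$-discrimination does; you would lose the multiplicative dependence on $\max_z q_{y,z}-\min_{z'} q_{y,z'}$ entirely. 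The paper's Lemma~\ref{lemma:relating_disc_a_z} fixes this by first subtracting the constant vector $q'=(\min_z q_{y,z})\mathbf{1}$ --- legitimate because $P$ is row-stochastic and invertible, so $P^{-1}\mathbf{1}=\mathbf{1}$ and hence $(P^{-1}_a-P^{-1}_0)\,q'=0$ --- and only then applying H\"older, giving $|\gamma_{y,a}-\gamma_{y,0}|\le \|P^{-1}_a-P^{-1}_0\|_1\cdot(\max_z q_{y,z}-\min_{z'} q_{y,z'})$, followed by an explicit computation showing $\|P^{-1}_a-P^{-1}_0\|_1\le 5C/\min_{ya}\P_{ya}^2$. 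Note also that for this lemma the conversion is carried out entirely at the population level, where the true matrix $P$ is available; the perturbation issue you raise about recovering $\P_{ya}$ through $\Pi^{-1}\Q_{y,\cdot}$ belongs to the estimator of Lemma~\ref{lemma:concentration_of_estimator}, not here. With the centering step inserted, your argument coincides with the paper's.
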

Proof of Lemma \ref{lemma:step1_guarantees} can be found in Appendix \ref{apx:proofs}. Note that the error bound in Lemma \ref{lemma:step1_guarantees} does not scale with the privacy level, however the discrimination bound is not only hit by the privacy, through $C$, but is further multiplied by the Rademacher complexity $\mathfrak{R}_{n}(\mathcal{H})$ of $\HC$. Our goal in the next step is to reduce the sample complexity required to achieve low discrimination by removing the dependence on the complexity of the model class in the discrimination bound. 

\textbf{Comparison with Differentially Private predictor.} \cite{jagielski2018differentially} modifies Algorithm \ref{alg:reductions} to ensure that the model is differentially private with respect to $A$ assuming access to data with the non-private attribute $A$. The error and discrimination generalization bounds obtained (Theorem 4.4 \cite{jagielski2018differentially}) both scale with the privacy level $\epsilon$ and the complexity of $\HC$, meaning the excess terms in the bounds of Lemma \ref{lemma:step1_guarantees} are both  in the order of $O(\mathfrak{R}_{n}(\mathcal{H})/\epsilon)$ in their work. Contrast this with our error bound that is independent of $\epsilon$, the catch is that discrimination obtained with LDP is significantly more impacted by the privacy level $\epsilon$. Thus, central differential privacy and local differential privacy in this context give rise to a very different set of trade-offs.

%%%%%%%%%%%%%%%%%%%%%%%%%%%%%%%%%%%%%%%%%%%%%%%%%%%%%%%%%%%%%%%%%%%%%%%%%%%%%%%%%%%%%%%%%
\subsection{Step 2: Post-hoc Correction to Achieve Non-Discrimination for $A$}
%%%%%%%%%%%%%%%%%%%%%%%%%%%%%%%%%%%%%%%%%%%%%%%%%%%%%%%%%%%%%%%%%%%%%%%%%%%%%%%%%%%%%%%%%

We correct the predictor we learned in step $1$ using a modified version of the post-processing procedure of \cite{hardt2016equality} on the data set $S_2$.
The derived  second step predictor $\tilde{Y}$ is fully characterized by $2 |\mathcal{A}|$ probabilities $\bP(\tilde{Y}=1|\hat{Y}=\hat{y},Z=a):=p_{\hat{y},z}$. If we na\"ively derive the predictor applying the post-processing procedure of \cite{hardt2016equality} on $S_2$ then this \emph{does not} imply that the predictor satisfies EO as the derived predictor is an explicit function of $Z$, cf. the discussion in Section \ref{sec:auditing}. Our approach is to directly ensure non-discrimination with respect to $A$ to achieve our goal. Two facts make this possible. First, the base predictor of step $1$ is not a function of $Z$ and hence we can measure its false negative and positive rates using the estimator from Lemma \ref{lemma:concentration_of_estimator}. And second, to compute these rates for $\tilde{Y}$, we can exploit its special structure. In particular, note the following decomposition: 
\begin{flalign}
\bP(\tilde{Y}=1|Y=y,A=a) &= \label{eq:posthoc_decomposition}
\bP(\tilde{Y}=1|\hat{Y}=0,A=a) \bP(\hat{Y} =0 | Y=y, A =a)  \\&+ \bP(\tilde{Y}=1|\hat{Y}=1,A=a) \bP(\hat{Y} =1 | Y=y, A =a) \nonumber
\end{flalign}
and we have that: \[\bP(\tilde{Y}=1|\hat{Y}=\hat{y},A=a) = \pi p_{\hat{y},a} +  \bar{\pi} \sum_{a' \in \mathcal{A}\setminus a}  p_{\hat{y},a'}:=\tilde{p}_{\hat{y},a}\]
 and  $\bP(\hat{Y} | Y=y, A =a)$ can be recovered by Lemma \ref{lemma:concentration_of_estimator}, denote
 $\tilde{\bP}^{S_2}(\hat{Y} = \hat{y} | Y=y, A =a)$ our estimator based on the empirical $\bP^{S_2}(\hat{Y} | Y, Z)$. Therefore we can compute sample versions of the conditional probabilities \eqref{eq:posthoc_decomposition}.

Our modified post-hoc correction reduces to solving the following constrained linear program for $\tilde{Y}$:
\begin{flalign}
\tilde{Y} = \arg\min_{p_{.,.}} \quad   \nonumber &\sum_{\hat{y},a} \left(\tilde{\bP}^{S_2}(\hat{Y}=\hat{y}, Z=a, Y =0) \right. \left. - \tilde{\bP}^{S_2}(\hat{Y}=\hat{y}, Z=a, Y =1) \right) \cdot \tilde{p}_{\hat{y},a} 
\end{flalign}
\begin{flalign}
s.t.& \quad  \left| \tilde{p}_{0,a} \nonumber \tilde{\bP}^{S_2}(\hat{Y} = 0 | Y=y, A =a)\right. + \tilde{p}_{1,a}\tilde{\bP}^{S_2}(\hat{Y} = 1 | Y=y, A =a) \nonumber \\ & - \tilde{p}_{0,0}  \tilde{\bP}^{S_2}(\hat{Y} = 0 | Y=y, A =0)\left. - \tilde{p}_{1,0}  \tilde{\bP}^{S_2}(\hat{Y} = 1 | Y=y, A =0) \right| \le \tilde{\alpha}_n , \ \forall y,a \nonumber   \\
& 0 \leq p_{\hat{y},a} \leq 1 \quad \forall \hat{y} \in \{ 0,1\}, \forall a \in \mathcal{A}  
\end{flalign}

The following Theorem illustrates the performance of our proposed estimator $\tilde{Y}$.
\begin{theorem}
For any hypothesis class $\mathcal{H}$, any distribution over $(X,A,Y)$ and any $\delta \in (0,1/2)$, then with probability $1- \delta$,
if $n \ge \frac{16\log(8|\A|/\delta)}{\min_{ya}\P_{ya}}$, $\alpha_n =  \sqrt{\frac{8\log{64/\delta}}{n \min_{yz}\Q_{yz}}}$ and $\tilde{\alpha}_n =  \sqrt{\frac{\log(64/\delta)}{2n}} \frac{4 |\mathcal{A}|C^2}{\min_{ya}\P_{ya}^2}$ , the predictor resulting from the two-step procedure satisfies:
\begin{align*}
 &\err(\tilde{Y}) \nonumber    \leq_{\delta} \err(Y^*)   +  \frac{5C}{\min_{ya} \mathbf{P}_{ya}^2}  \left( \frac{2}{B}  + 10 \mathfrak{R}_{\frac{\min_{ya}n \P_{ya}}{4}}(\mathcal{H}) \right. \left.+ 18 |\mathcal{A}| \sqrt{\frac{2\log{64 |\A|/\delta}}{n \min_{ya}\P_{ya}}} ~\right)  \nonumber \\
   &\disc(\tilde{Y}) \le_{\delta}  \sqrt{\frac{ \log(\frac{64}{\delta})}{2n}} \frac{8 |\mathcal{A}|C^2}{\min_{ya}\P_{ya}^2} &
\end{align*}

\end{theorem}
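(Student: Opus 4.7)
The plan is to prove the two bounds separately, leveraging the Step~1 guarantees from Lemma~\ref{lemma:step1_guarantees} on $S_1$ together with the concentration result of Lemma~\ref{lemma:concentration_of_estimator} applied on $S_2$ to every $A$-conditional that appears inside the post-hoc LP. I would begin by conditioning on the joint high-probability event (holding with probability $1-\delta$ after a union bound) on which (i) $\hat Y$ from Step~1 satisfies the error and discrimination bounds of Lemma~\ref{lemma:step1_guarantees} with confidence $1-\delta/2$, and (ii) the estimators $\tilde{\bP}^{S_2}(\hat Y=\hat y\mid Y=y,A=a)$ and $\tilde{\bP}^{S_2}(\hat Y=\hat y,Z=a,Y=y)$ appearing in the LP are within the Lemma~\ref{lemma:concentration_of_estimator} deviation of their population counterparts with confidence $1-\delta/2$.

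For the discrimination bound the argument is essentially by construction. The LP explicitly forces the empirical EO gap of $\tilde Y$ with respect to $A$, measured through the linear decomposition \eqref{eq:posthoc_decomposition}, to be at most $\tilde\alpha_n$. Both ingredients of that decomposition, namely $\tilde p_{\hat y,a}=\pi p_{\hat y,a}+\bar\pi\sum_{a'\ne a}p_{\hat y,a'}$ and $\tilde{\bP}^{S_2}(\hat Y=\hat y\mid Y=y,A=a)$, are linear in at most $|\mathcal{A}|$ quantities whose $Z$-to-$A$ inversion is controlled by Lemma~\ref{lemma:concentration_of_estimator}; summing the resulting deviations over $|\mathcal{A}|$ cells yields a concentration error of at most $\sqrt{\log(64/\delta)/(2n)}\cdot 4|\mathcal{A}|C^2/\min_{ya}\P_{ya}^2=\tilde\alpha_n$. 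The true discrimination is therefore at most $2\tilde\alpha_n$, which is exactly the stated bound.

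For the error bound the key observation is that the LP objective is an unbiased surrogate, up to a $p$-independent additive constant, for $\err(\tilde Y)$ when the $\tilde{\bP}$'s are replaced by their population values. I would first establish uniform concentration of this objective over the feasible polytope using linearity in $p$ together with Lemma~\ref{lemma:concentration_of_estimator} (this contributes the $O(|\mathcal{A}|\sqrt{\log(1/\delta)/(n\min_{ya}\P_{ya})})$ term). Next comes the feasibility half: I exhibit a $p$ that is LP-feasible with small true error. Choosing $p_{0,a}=0,\,p_{1,a}=1$ yields $\tilde Y=\hat Y$ with error $\err(\hat Y)\le \err(Y^*)+4\mathfrak{R}_{n/2}(\mathcal{H})+4\sqrt{\log(8/\delta)/n}$ by Lemma~\ref{lemma:step1_guarantees}, but only satisfies EO up to $\disc(\hat Y)$. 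A Hardt--Price--Srebro style interpolation between this identity solution and an EO-optimal mixture then gives a feasible $p$ whose error exceeds $\err(\hat Y)$ by at most a constant multiple of $\disc(\hat Y)$. Plugging in the Step~1 discrimination bound from Lemma~\ref{lemma:step1_guarantees}, already of the form $\frac{5C}{\min_{ya}\P_{ya}^2}(2/B+O(\mathfrak{R}+\sqrt{\log/n}))$, and adding the LP concentration slack reproduces the excess error with the stated constants.

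The main obstacle is the last step: quantifying exactly how much error inflation is required to close a given EO gap when the post-hoc parameters are $p_{\hat y,a}$ (indexed by $Z$) but the constraint is formulated on $A$-conditionals recovered by inverting the randomized-response channel. This is where the $5C/\min_{ya}\P_{ya}^2$ multiplier is unavoidable: each unit of $Z$-discrimination in $\hat Y$ corresponds to at most $\|P^{-1}\|_\infty$ units of $A$-discrimination, and $\|P^{-1}\|_\infty$ is precisely the quantity that produced $C/\min_{ya}\P_{ya}^2$ in the proof of Lemma~\ref{lemma:concentration_of_estimator}. Tracking this factor through both the feasibility direction (how far we must move $p$) and the objective direction (the cost of that move), and verifying that the amplified Rademacher term $10\mathfrak{R}_{\min_{ya}n\P_{ya}/4}(\mathcal{H})$ and $18|\mathcal{A}|$ constant in the theorem emerge from the two union bounds, is the only piece that requires genuine care; once it is in place, combining it with the Step~1 error bound, the LP concentration, and a final union bound over $S_1$ and $S_2$ yields the claim with probability $1-\delta$.
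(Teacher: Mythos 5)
Your proposal follows the paper's proof essentially step for step: the discrimination bound comes from the LP constraint plus the Lemma~\ref{lemma:concentration_of_estimator}-style estimator concentration (giving $2\tilde\alpha_n$), and the error bound comes from uniform concentration of the LP objective over the feasible polytope, feasibility of the optimal zero-discriminatory derived predictor, the relation $\err(\tilde Y^*)\le \err(\hat Y)+\disc(\hat Y)$ (which the paper imports as Lemma~4 of Woodworth et al.\ rather than re-deriving the interpolation you sketch), and finally the Step~1 guarantees of Lemma~\ref{lemma:step1_guarantees}. The approach is correct and matches the paper's.
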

\begin{sproof}
 Since the predictor obtained in step 1 is only a function of $X$, we can prove the following guarantees on its performance with $\tilde{Y}^*$ being an optimal non-discriminatory derived predictor from $\hat{Y}$:
\begin{align*}
& \err(\tilde{Y})    \leq_{\delta/2} \err(\tilde{Y}^*) +   4 |\mathcal{A}|C \sqrt{\frac{\log(32 |\mathcal{A}|/\delta)}{2n}}  \nonumber \\
   &\disc(\tilde{Y}) \le_{\delta/2}  \sqrt{\frac{ \log(\frac{64}{\delta})}{2n}} \frac{8 |\mathcal{A}|C^2}{\min_{ya}\P_{ya}^2} 
\end{align*}
We next have to relate the loss of the optimal derived predictor from $\hat{Y}$, denoted by $\tilde{Y}^*$, to the loss of the optimal non-discriminatory predictor in $\mathcal{H}$. We can apply Lemma 4 in \cite{woodworth2017learning} as the solution of our derived LP is in expectation equal to that in terms of $A$. Lemma 4 in \cite{woodworth2017learning} tells us that the optimal derived predictor has a loss that is less or equal than the sum of the loss of the base predictor and it s discrimination:
\begin{equation}
\err(\tilde{Y}^*) \leq \err(\hat{Y}) + \disc(\hat{Y}) \nonumber 
\end{equation}
Plugging in the error and discriminating proved in Lemma \ref{lemma:step1_guarantees} we obtain the theorem statement. A detailed proof is given in Appendix \ref{app:step-2}.
\end{sproof}

Our final predictor $\tilde{Y}$ has a discrimination guarantee that is independent of the model complexity, however this comes at a cost of a privacy penalty entering the error bound. This creates a new set of trade-offs that do not appear in the absence of the privacy constraint, fairness and error start to trade-off more severely with increasing levels of privacy.

\subsection{Experimental Illustration}
\begin{figure}
    \centering
    \includegraphics[scale=0.8,trim={0cm 0cm 2cm 2cm}]{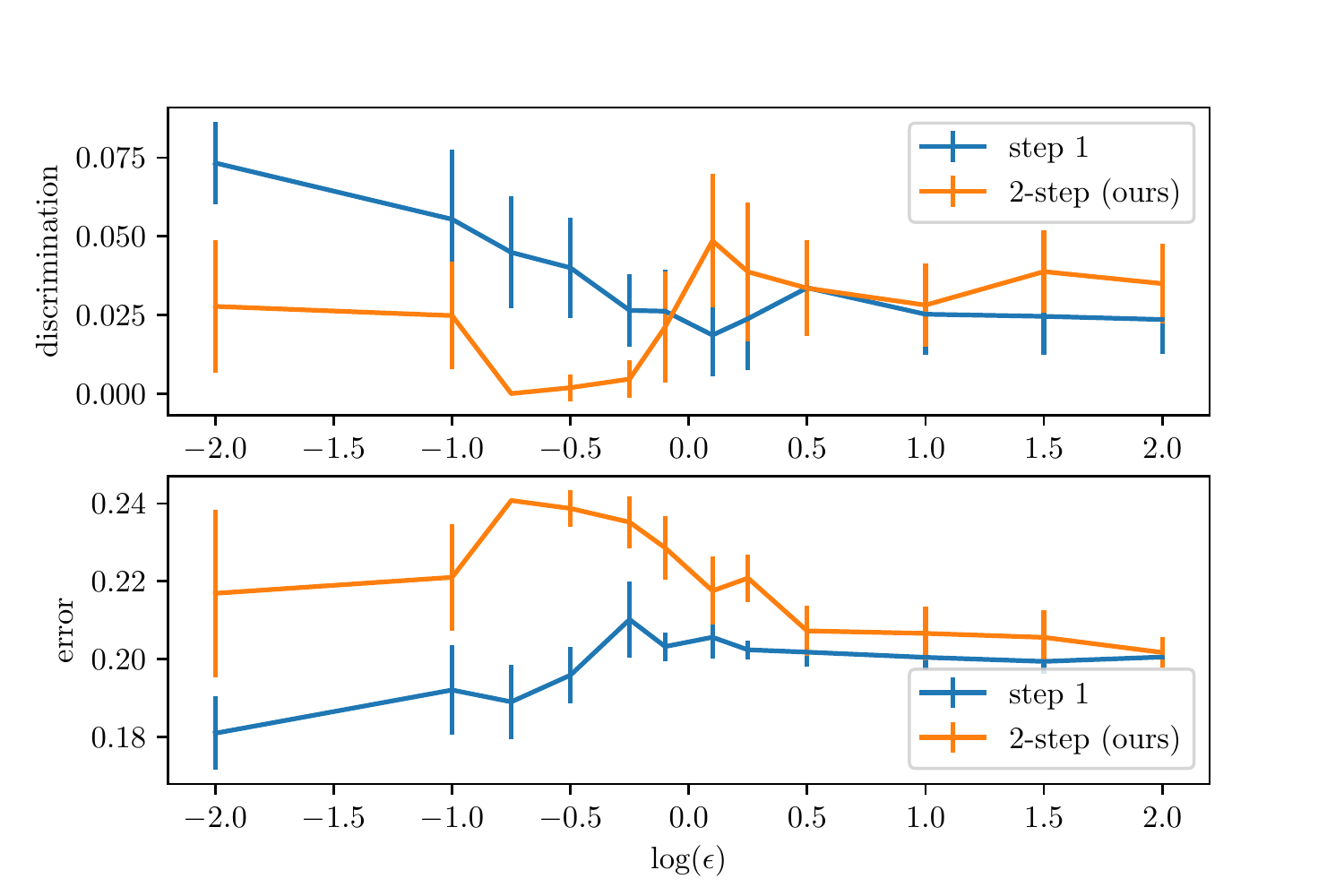}
    \caption{Plots of discrimination violation and accuracy of the step 1 predictor $\hat{Y}$ and the two-step predictor $\tilde{Y}$ versus the privacy level $\epsilon$ on the Adult Income dataset \cite{kohavi1996scaling}. Error bars show 95\% confidence interval for the average.}
    \label{fig:exp_adult}
\end{figure}
\textbf{Data.} We use the adult income data set \cite{kohavi1996scaling} containing 48,842
examples. The task is to predict whether a person's income is higher than $\$50k$. Each data point has 14 features including education and occupation, the protected attribute $A$ we use is gender: male or female.

\textbf{Approach.} We use a logistic regression model for classification. For the reductions approach, we use the implementation in the \textsf{fairlearn} package \footnote{\url{https://github.com/fairlearn/fairlearn}}. We set $T=50$, $\eta=2.0$ and $B=100$ for all experiments. We split the data into 75\% for training and 25\% for testing. We repeat the splitting over 10 trials.

\textbf{Effect of privacy.} We plot in Figure \ref{fig:exp_adult} the resulting discrimination violation and model error against increasing privacy levels $\epsilon$ for the predictor $\hat{Y}$ resulting from step 1 , trained on all the training data, and the two-step predictor $\tilde{Y}$ trained on $S_1$ and $S_2$ (split half and half). We observe that $\tilde{Y}$ achieves lower discrimination than $\hat{Y}$ across the different privacy levels. This comes at a cost of lower accuracy, which improves at lower privacy regimes (large epsilon). The predictor of step $1$ only begins to suffer on error when the privacy level is low enough as the fairness constraint is void at high levels of privacy (small epsilon).

Code to reproduce Figure \ref{fig:exp_adult} is made publicly available \footnote{\url{https://github.com/husseinmozannar/fairlearn_private_data}}.

%%%%%%%%%%%%%%%%%%%%%%%%%%%%%%%%%%%%%%%%%%%%%%%%%%%%%%%%%%%%%%%%%%%%%%%%%%%%%%%%%%%%%%%%%%%%%%%%%
\section{Discussion and Extensions} \label{sec:missing}
%%%%%%%%%%%%%%%%%%%%%%%%%%%%%%%%%%%%%%%%%%%%%%%%%%%%%%%%%%%%%%%%%%%%%%%%%%%%%%%%%%%%%%%%%%%%%%%%%

Could this approach for private demographic data be used to learn non-discriminatory predictors under other forms of deficiency in demographic information? In this section, we consider another case of interest: when individuals retain the choice of whether to release their sensitive information or not, as in the example of credit card companies. Practically, this means that the learner's data contains one part that has protected attribute labels and another that doesn't.

\textbf{Privacy as effective number of labeled samples.} As a first step towards understanding this setting,
suppose we are given $n_\ell$ fully labeled samples:
$S_\ell = \{(x_1,a_1,y_1),\cdots,(x_{n_\ell},a_{n_\ell},y_{n_\ell}) \}$ drawn  $i.i.d$ from an unknown distribution $\mathbb{P}$ over $\mathcal{X} \times \mathcal{A} \times \mathcal{Y}$ where $\mathcal{A}=\{0,1,\cdots,|\mathcal{A}|-1\}$ and $\mathcal{Y}=\{0,1,\cdots,|\mathcal{Y}|-1\}$, and $n_u$ samples that are missing the protected attribute:
$S_u = \{(x_1,y_1),\cdots,(x_{n_u},y_{n_u}) \}$ drawn $i.i.d$ from the marginal of $\mathcal{P}$ over $\mathcal{X}\times \mathcal{Y}$.
Define $n:=n_\ell + n_u$, $S= S_\ell \cup S_u$ and let $\beta >0$  be such that $n_\ell := \beta n$  and $n_u= (1-\beta) n$. 
 This data assumption is equivalent to having individuals not reporting their attributes uniformly at random with probability $1-\beta$.
The objective is to learn a non-discriminatory predictor $\hat{Y}$ from the data $S$.

To mimic step 1 of our methodology, we propose to modify the reductions approach, so as to allow the learner, Q-player, to learn on the entirety of $S$ while the auditor, $\boldsymbol{\lambda}$-player, uses only $S_\ell$. We do this by first defining a two data set version of the Lagrangian, as such:
\begin{equation}
L^{S,S_\ell}(Q,\boldsymbol\lambda) = \err^S(Q) + \boldsymbol\lambda ^\top (M\boldsymbol\gamma^{S_\ell}(Q) - \alpha \mathbf{1}). \label{eq:Lagrangian_missing_step1}
\end{equation}
This changes Algorithm \ref{alg:reductions} in two key ways: first, the update of $\boldsymbol{\theta}$ now only relies on $S_\ell$ and, second, the best response of the learner is still a cost-sensitive learning problem, however now the cost depends on whether sample $i$ is in $S_\ell$ or $S_u$. If it is in $S_u$, i.e. it does not have a group label, then the instance loss is the misclassification loss, while if it is in $S_\ell$ its loss is defined as before. Lemma \ref{lemma:step1_guarantees_missing} characterizes the performance of the learned predictor $\hat{Y}$ using the approach just described.

\begin{lemma}
    Given a hypothesis class $\mathcal{H}$, a distribution over $(X,A,Y)$, $B \in \mathbb{R}^+$ and any $\delta \in (0,1/2)$, then with probability greater than $1- \delta$, if  $n_\ell \geq \frac{8\log{4 |\A|/\delta}}{\min_{ya} \P_{ya}}$, $\alpha_n = 2 \sqrt{\frac{\log{32|\A|/\delta}}{n_\ell \min_{ya}\P_{ya}}}$ and we let $\nu  = \mathfrak{R}_{n}(\mathcal{H}) + \sqrt{\frac{\log{4/\delta}}{n}}$, then running the modified Algorithm 1 on data set $S$ and $S_l$    with $T \geq \frac{16 \log(4 |\mathcal{A}|+1)}{\nu^2}$ and learning rate $\eta = \frac{\nu}{8 B}$ returns a predictor $\hat{Y}$ satisfying the following:
	\begin{flalign*}
	&\err(\hat{Y}) \leq_{\delta} \err(Y^*)  + 4 \mathfrak{R}_{n}(\mathcal{H}) + 4 \sqrt{\frac{\log{4/\delta}}{n}} \\
	&\disc(\hat{Y}) \leq_{\delta}   \frac{2}{B}  + 6 \mathfrak{R}_{\frac{\min_{ya}n_\ell \P_{ya}}{2}}(\mathcal{H})  + 10 \sqrt{\frac{2\log{32 |\A|/\delta}}{n_\ell \min_{ya}\P_{ya}}} \ 
	\end{flalign*}
	
	\label{lemma:step1_guarantees_missing}
\end{lemma}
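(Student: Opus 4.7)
The proof parallels Lemma \ref{lemma:step1_guarantees}, but with two simplifications (no privatization matrix to invert and no $C$ penalty) and one new twist (the asymmetric use of $S$ and $S_\ell$). I would first verify Freund--Schapire convergence for the modified Lagrangian $L^{S,S_\ell}$ in \eqref{eq:Lagrangian_missing_step1}. Since $L^{S,S_\ell}$ remains bilinear in $Q$ and $\boldsymbol\lambda$ (only the underlying empirical measures differ between the two terms), the game-theoretic analysis of \cite{agarwal2018reductions} carries over verbatim: with the specified $T$ and $\eta$, exponentiated gradient on the $\lambda$-player combined with best-response on the $Q$-player produces a $\nu$-approximate saddle point $(\hat{Y},\hat{\boldsymbol\lambda})$ of the empirical game, where $\nu = \mathfrak{R}_n(\mathcal{H}) + \sqrt{\log(4/\delta)/n}$. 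The $Q$-player's best response remains a cost-sensitive classification problem on the full $S$, where samples in $S_u$ contribute only misclassification cost (since the fairness terms in $L^{S,S_\ell}$ are empirical averages over $S_\ell$ only, and differentiate to zero on $S_u$ examples).

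For the error bound, I would plug $\boldsymbol\lambda = \mathbf 0$ into the saddle-point inequality to obtain $\err^S(\hat{Y}) \leq L^{S,S_\ell}(\hat{Y},\hat{\boldsymbol\lambda}) \leq L^{S,S_\ell}(Y^*,\hat{\boldsymbol\lambda}) + \nu$. Since $Y^*$ is exactly non-discriminatory with respect to $A$, population constraints give $M\boldsymbol\gamma(Y^*) \leq \mathbf 0$; combined with the concentration of $\boldsymbol\gamma^{S_\ell}(Y^*)$ to $\boldsymbol\gamma(Y^*)$ (developed below), the choice of $\alpha_n$ makes $M\boldsymbol\gamma^{S_\ell}(Y^*) \leq \alpha_n \mathbf 1$ hold with high probability, so $L^{S,S_\ell}(Y^*,\hat{\boldsymbol\lambda}) \leq \err^S(Y^*)$. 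Standard Rademacher-based uniform convergence on the full $n$ samples then gives $\sup_{Q\in\Delta_\mathcal{H}}|\err^S(Q)-\err(Q)| \leq 2\mathfrak{R}_n(\mathcal{H}) + \sqrt{\log(4/\delta)/n}$, yielding the claimed error bound. Crucially, the sample size here is $n$, not $n_\ell$, because the error term in $L^{S,S_\ell}$ uses all of $S$.

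For the discrimination bound, the standard saddle-point calculation (Theorem 2 of \cite{agarwal2018reductions}) gives empirical constraint violation $\max_k [M\boldsymbol\gamma^{S_\ell}(\hat{Y})-\alpha_n\mathbf 1]_k \leq (1+2\nu)/B$. Lifting this to the population requires bounding $\sup_{Q\in\Delta_\mathcal{H}}|\gamma^{S_\ell}_{ya}(Q)-\gamma_{ya}(Q)|$ for each $(y,a)$. I would condition on $(Y=y,A=a)$: the number of such samples in $S_\ell$ concentrates around $n_\ell\P_{ya}$, and a Chernoff bound combined with the hypothesis $n_\ell \geq 8\log(4|\A|/\delta)/\min_{ya}\P_{ya}$ guarantees at least $n_\ell\P_{ya}/2$ such samples with probability $1-\delta/(4|\A|)$. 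Conditional Rademacher complexity of $\mathcal{H}$ on these subsamples then controls the gap uniformly by $2\mathfrak{R}_{n_\ell\P_{ya}/2}(\mathcal{H}) + \sqrt{2\log(32|\A|/\delta)/(n_\ell\P_{ya})}$; union bounding over $(y,a)$ and combining with the empirical saddle-point bound (using the chosen $\alpha_n$) yields the stated discrimination bound, notably free of both the privacy factor $C$ and the $1/\min_{ya}\P_{ya}^2$ factor that arose in Lemma \ref{lemma:step1_guarantees}, since $A$ is directly observed on $S_\ell$.

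The main obstacle is the careful bookkeeping of which concentration holds on which dataset and the verification that the $Q$-player's cost-sensitive best response is still well-defined and computable under the asymmetric weighting (examples in $S_u$ enter with a different effective cost than those in $S_\ell$). Since $L^{S,S_\ell}$ is linear in $Q$, the best response still puts all mass on a single deterministic hypothesis, and the BEST oracle reduces to weighted ERM with weights that depend on whether the sample lies in $S_\ell$ or $S_u$. The rest is a routine adaptation of the Lemma \ref{lemma:step1_guarantees} bookkeeping, with the privatization-related factors excised throughout.
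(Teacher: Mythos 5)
Your proposal is correct and follows essentially the same route as the paper: the paper's own proof is a one-line reduction to Lemma \ref{lemma:step1_guarantees}, noting that the two-dataset Lagrangian does not affect the saddle-point guarantee of Theorem 1 in \cite{agarwal2018reductions} (both players still optimize the same bilinear objective), that the error bound is unchanged since the loss term uses all of $S$, and that the discrimination bound is the Lemma \ref{lemma:step1_guarantees} argument with $n$ replaced by $n_\ell$ and the privatization factors ($C$ and $1/\min_{ya}\P_{ya}^2$) excised because $A$ is observed directly on $S_\ell$. Your expanded bookkeeping — Chernoff bound to secure $n_\ell\P_{ya}/2$ samples per group, conditional Rademacher complexity on $S_\ell$, and full-sample uniform convergence for the error — is exactly the content of the Lemma \ref{lemma:step1_guarantees} proof that the paper implicitly invokes.
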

A short proof of Lemma \ref{lemma:step1_guarantees_missing} can be found in Appendix \ref{apx:proofs}. Notice the similarities between Lemma \ref{lemma:step1_guarantees} and \ref{lemma:step1_guarantees_missing}. The error bound we obtain depends on the entire number of samples $n$ as in the privacy case and the discrimination guarantee is forcibly controlled by the number of labeled group samples $n_\ell$. We can thus interpret the discrimination bound in Lemma \ref{lemma:step1_guarantees} as having an effective number of samples controlled by the privacy level $\epsilon$.

\paragraph{Individual choice of reporting} It is more reasonable to assume that individuals choice to report their protected attributes may depend on their own characteristics, let $t(x,y,a) \in (0,1]$ (reporting probability function) be the probability that an individual $(x,y,a)$ chooses to report their protected attributes; the setting of Lemma \ref{lemma:step1_guarantees_missing} is equivalent to a choice of  $t(x,y,a)=c$ for some $c \in (0,1]$. Starting from a dataset of individuals $S$ of size $n$ sampled  $i.i.d$ from  $\mathbb{P}^n$, each individual $i$ flips a coin with bias $t(x_i,y_i,a_i)$ and accordingly chooses to include their attribute $a_i$ in $S$. The result of this process is a splitting of $S$ into 
$S_\ell = \{(x_1,a_1,y_1),\cdots,(x_{n_\ell},a_{n_\ell},y_{n_\ell}) \}$ (individuals who report their attributes) and
$S_u = \{(x_1,y_1),\cdots,(x_{n_u},y_{n_u}) \}$ (individuals who do not report). The goal again is to learn a non discriminatory predictor $\hat{Y}$.

The immediate question is if we can use our modified algorithm with the two-dataset Langragian \eqref{eq:Lagrangian_missing_step1} and obtain similar guarantees to those in Lemma \ref{lemma:step1_guarantees_missing} in this more general setting. This question boils down to asking if the naive empirical estimate of discrimination is consistent and the answer depends both on the reporting probability function $t$ and the notion of discrimination considered as illustrated in the following proposition.

\begin{proposition}
Consider $( \mathcal{E}_1, \mathcal{E}_2)$-non-discrimination with respect to $A$ and fix a reporting probability function $t: \mathcal{X} \times \mathcal{Y} \times \mathcal{A} \to (0,1]$. Define the random variable $T: \mathcal{X} \times \mathcal{Y} \times \mathcal{A} \to \{0,1\} $ where $\mathbb{P}(T(x,y,a)=1)=t(x,y,a)$, then if $T$ and $\mathcal{E}_1$ are  conditionally independent given $\{A,\mathcal{E}_2\}$, we have as  $n\to \infty$ for all $a\in\mathcal{A}$
	\begin{equation*}
	\bP^{S_\ell}\left( \mathcal{E}_1| \mathcal{E}_2,A=a\right) \to_p  \bP\left( \mathcal{E}_1| \mathcal{E}_2,A=a\right)
	\end{equation*}
where $S_\ell$ is generated for each $n$ via the process described previously.
\label{prop:missing_at_choice}
\end{proposition}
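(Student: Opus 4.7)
The plan is to rewrite the empirical conditional probability as a ratio of two sample averages weighted by the reporting indicator $T_i$, apply the strong law of large numbers to each, and then identify the limit using the conditional independence hypothesis. Working with a ratio indexed by the full (deterministic) sample size $n$ is convenient because it sidesteps the random nature of $n_\ell$.

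The labeled set $S_\ell$ consists of precisely those indices $i$ with $T_i = 1$, so
\begin{equation*}
\bP^{S_\ell}(\mathcal{E}_1 \mid \mathcal{E}_2, A = a) = \frac{\frac{1}{n}\sum_{i=1}^{n} T_i \, \bI\{\mathcal{E}_1^i,\, \mathcal{E}_2^i,\, A_i = a\}}{\frac{1}{n}\sum_{i=1}^{n} T_i \, \bI\{\mathcal{E}_2^i,\, A_i = a\}},
\end{equation*}
where $\mathcal{E}_k^i$ denotes $\mathcal{E}_k$ evaluated on the $i$-th sample. Because $T_i \mid (X_i, Y_i, A_i) \sim \mathrm{Bernoulli}(t(X_i, Y_i, A_i))$ and the tuples $(X_i, Y_i, A_i, T_i)$ are i.i.d.\ with $\{0,1\}$-valued summands, the strong law of large numbers (equivalently, Hoeffding's inequality for a quantitative rate) shows that numerator and denominator converge in probability to $\bP(T=1, \mathcal{E}_1, \mathcal{E}_2, A=a)$ and $\bP(T=1, \mathcal{E}_2, A=a)$, respectively.

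The conditional probability in the statement is well-defined only when $\bP(\mathcal{E}_2, A=a) > 0$, in which case the strict positivity of $t$ on its support guarantees $\bP(T=1, \mathcal{E}_2, A=a) > 0$. Slutsky's theorem (or the continuous mapping theorem applied to division) then gives
\begin{equation*}
\bP^{S_\ell}(\mathcal{E}_1 \mid \mathcal{E}_2, A = a) \to_p \frac{\bP(T=1, \mathcal{E}_1, \mathcal{E}_2, A=a)}{\bP(T=1, \mathcal{E}_2, A=a)} = \bP(\mathcal{E}_1 \mid T = 1, \mathcal{E}_2, A=a).
\end{equation*}

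The final step invokes the hypothesis $T \indep \mathcal{E}_1 \mid \{\mathcal{E}_2, A\}$: under this assumption, conditioning further on $T=1$ is redundant, so $\bP(\mathcal{E}_1 \mid T=1, \mathcal{E}_2, A=a) = \bP(\mathcal{E}_1 \mid \mathcal{E}_2, A=a)$, which closes the proof. The conceptual heart of the argument, rather than any technical obstacle, is that the conditional independence hypothesis is exactly what neutralizes the selection bias induced by non-uniform reporting; were $t$ allowed to depend on variables that enter $\mathcal{E}_1$ beyond those already captured by $\mathcal{E}_2$ and $A$, the limit $\bP(\mathcal{E}_1 \mid T=1, \mathcal{E}_2, A=a)$ would generically differ from $\bP(\mathcal{E}_1 \mid \mathcal{E}_2, A=a)$ and the naive empirical estimator would be inconsistent.
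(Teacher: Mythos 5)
Your proof is correct and follows essentially the same route as the paper's: identify the empirical estimator as an estimator of $\bP(\mathcal{E}_1\mid \mathcal{E}_2, A=a, T=1)$, establish its consistency, and then use the conditional independence hypothesis to drop the conditioning on $\{T=1\}$. You simply spell out the convergence step more explicitly (ratio of i.i.d.\ sample averages, law of large numbers, Slutsky) where the paper treats it as immediate, and both arguments rely on the strict positivity of $t$ to ensure the limiting denominator is nonzero.
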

\begin{proof}
Given $a\in\mathcal{A}$, our estimate 	$\bP^{S_\ell}\left( \mathcal{E}_1| \mathcal{E}_2,A=a\right)$ is nothing but the  empirical estimator of $\bP\left( \mathcal{E}_1| \mathcal{E}_2,A=a,T=1\right)$ where $\{T=1\}$ denotes the event that an individual does report their attributes and are thus included in $S_\ell$. As an immediate consequence we have:
	\begin{equation*}
	\bP^{S_\ell}\left( \mathcal{E}_1| \mathcal{E}_2,A=a\right) \to_p  \bP\left( \mathcal{E}_1| \mathcal{E}_2,A=a,T=1\right)
	\end{equation*}
Now as required by the statement of the proposition,  $T$ and $\mathcal{E}_1$ are  conditionally independent given $\{A,\mathcal{E}_2\}$ :
\begin{equation*}
\bP\left( \mathcal{E}_1| \mathcal{E}_2,A=a,T=1\right) = \bP\left( \mathcal{E}_1| \mathcal{E}_2,A=a\right)
\end{equation*}
which completes the proof of the proposition. Note that the event $\{\mathcal{E}_1| \mathcal{E}_2,A=a,T=1\}$ has strictly positive probability as the reporting probability function is strictly positive.
\end{proof}

If the independence condition in  Proposition \ref{prop:missing_at_choice} is satisfied, then the immediate consequence is that we can run Algorithm \eqref{alg:reductions} and obtain learning guarantees.\\
To make things concrete, suppose our notion of non-discrimination is EO, consider any reporting probability function  of the form $t_1: \mathcal{Y} \times \mathcal{A} \to (0,1]$ (does not depend on the non-sensitive attributes) and suppose our hypothesis class consists of functions that depend only on $X$. The conditional independence condition in Proposition \ref{prop:missing_at_choice} thus holds and  we can estimate the discrimination of any predictor in our class using $S_\ell$. The only change to Lemma \ref{lemma:step1_guarantees_missing} in this setup is that the effective number of samples in the discrimination bound is now: $n \min_{ya} \mathbf{P}_{ya} \cdot \mathbf{T}_{ya}$ where $\mathbf{T}_{ya}= \bP(T=1|Y=y,A=a)$ ($T$ is the r.v. that denotes reporting); the proof of this observation is immediate.

\paragraph{Trade-offs and proxies} 
To complete the parallel with the proposed methodology, what remains is to mimic step 2, to devise ways to have lower sample complexities to achieve non-discrimination. Clearly the dependence on $n_\ell$ in Lemma \ref{lemma:step1_guarantees_missing} is statistically necessary without any assumptions on the data generating process and the only area of improvement is to remove the dependence on the complexity of the model class. If the sensitive attribute is never available at test time, we cannot apply the post-processing procedure of \cite{hardt2016equality} in a two-stage fashion \cite{woodworth2017learning}.

In practice, to compensate for the missing direct information, if legally permitted, the learner may leverage multiple sources of data and combine them to obtain indirect access to the sensitive information \cite{kallus2019assessing} of individuals. The way this is modeled mathematically is by having recourse to proxies. One of the most widely used proxies is the 
Bayesian Improved Surname Geocoding (BISG) method, BISG is used to estimate race membership given the last name and geolocation of an individual \cite{adjaye2014using,fiscella2006use}. Using this proxy, one can impute the missing membership labels and then proceed to audit or learn a predictor. But a big issue with proxies is that they may lead to biased estimators for discrimination \cite{kallus2019assessing}. In order to avoid these pitfalls, one promising line of investigation is to learn it simultaneously with the predictor. 

What form of proxies can help us measure the discrimination of a certain predictor $\hat{Y}: \mathcal{X} \to \mathcal{Y}$? Some of the aforementioned issues are due to the fact that features $X$ are in general insufficient to estimate group membership, even through the complete probabilistic proxy $\bP(A|X)$. In particular for EO, if $A$ is not completely identifiable from $X$ then using this proxy leads to inconsistent estimates. In contrast, if we have access to the probabilistic proxy $\bP(A|X,Y)$, we then propose the following estimator (see also \cite{chen2019fairness})
\begin{equation}
\tilde{\gamma}^{S}_{ya}(\hat{Y})=\frac{\sum_{i=1}^n\hat{Y}(x_i)\mathbf{1}(y_i = y) \bP(A = a |x_i,y_i)}{ \sum_{i=1}^n \mathbf{1}(y_i = y)\bP(A = a |x_i,y_i)},
\end{equation}
which enjoys consistency, via a relatively straightforward proof found in Appendix \ref{apx:proofs}.
\begin{lemma}\label{lemma:unbiasedestimator}
	Let $S=\{(x_i,a_i,y_i)\}_{i=1}^{n}$ i.i.d. $\sim$ $\bP^n(A,X,Y)$,
	the estimator $\tilde{\gamma}^{S}_{ya}$ is consistent. As $n\to \infty$
	\begin{equation*}
	\tilde{\gamma}^{S}_{ya} \to_p  \gamma_{ya}.
	\end{equation*}
\end{lemma}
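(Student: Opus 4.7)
The plan is to recognize both the numerator and denominator of $\tilde{\gamma}^{S}_{ya}$ as sample averages of i.i.d.\ bounded random variables, apply the weak law of large numbers to each, and then conclude by Slutsky's theorem (continuous mapping under the ratio). The only nontrivial ingredient is identifying the population limits of the two averages, which reduces to a single application of the tower property.

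First I would rewrite the estimator after dividing numerator and denominator by $n$:
\[
\tilde{\gamma}^{S}_{ya}(\hat{Y})=\frac{\frac{1}{n}\sum_{i=1}^n\hat{Y}(x_i)\mathbf{1}(y_i = y)\,\bP(A = a \mid x_i,y_i)}{\frac{1}{n}\sum_{i=1}^n \mathbf{1}(y_i = y)\,\bP(A = a \mid x_i,y_i)}.
\]
The summands in both numerator and denominator are bounded in $[0,1]$, so the weak law of large numbers gives convergence in probability to the corresponding expectations. For the numerator, conditioning on $(X,Y)$ and using $\bP(A=a\mid X,Y)=\bE[\mathbf{1}(A=a)\mid X,Y]$ yields
\[
\bE\!\left[\hat{Y}(X)\mathbf{1}(Y=y)\bP(A=a\mid X,Y)\right]
=\bE\!\left[\hat{Y}(X)\mathbf{1}(Y=y)\mathbf{1}(A=a)\right]
=\bP(\hat{Y}=1,Y=y,A=a),
\]
where I used that $\hat{Y}(X)\in\{0,1\}$. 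The identical computation without the $\hat{Y}(X)$ factor shows that the denominator's mean is $\bP(Y=y,A=a)$.

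Second, under the standing assumption that $\bP(Y=y,A=a)>0$ (which also appears throughout the paper via the $\min_{ya}\P_{ya}$ quantity), the denominator's limit is strictly positive, so the map $(u,v)\mapsto u/v$ is continuous at the limit point. Slutsky's theorem then gives
\[
\tilde{\gamma}^{S}_{ya}(\hat{Y}) \;\to_p\; \frac{\bP(\hat{Y}=1,Y=y,A=a)}{\bP(Y=y,A=a)} \;=\; \bP(\hat{Y}=1\mid Y=y,A=a) \;=\; \gamma_{ya},
\]
which is the desired conclusion.

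The main (and essentially only) obstacle is the tower-property identification of the numerator limit: one must be careful that $\bP(A=a\mid X,Y)$ is the true posterior proxy, so that conditioning on $(X,Y)$ absorbs the proxy into $\mathbf{1}(A=a)$; if instead only $\bP(A\mid X)$ were available, the same calculation would fail to reproduce $\gamma_{ya}$ (this is the inconsistency alluded to in the discussion preceding the lemma). Everything else is routine: bounded summands, LLN, and Slutsky.
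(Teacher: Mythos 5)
Your proof is correct and follows essentially the same route as the paper's: apply the law of large numbers to the numerator and denominator separately and pass to the ratio. The only (cosmetic) difference is that you identify the limiting expectations via the tower property $\bP(A=a\mid X,Y)=\bE[\mathbf{1}(A=a)\mid X,Y]$, whereas the paper carries out the equivalent computation by writing out the integrals and applying Bayes' rule; your version is, if anything, slightly cleaner and more careful about the regularity conditions (bounded summands, $\bP(Y=y,A=a)>0$).
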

We end our discussion here by pointing out that if such a proxy can be efficiently learned from samples, then it can reduce a missing attribute problem effectively to a private attribute problem, allowing us to use much of the same machinery presented in this paper.

\section{Conclusion} \label{sec:conclusion}
We studied learning non-discriminatory predictors when the protected attributes are
privatized or noisy. We observed that, in the population limit, non-discrimination
against noisy attributes is equivalent to that against original attributes. We showed this
to hold for various fairness criteria. We then characterized the amount of difficulty,
in sample complexity, that privacy adds to testing non-discrimination. Using
this relationship, we proposed how to carefully adapt existing non-discriminatory
learners to work with privatized protected attributes. Care is crucial, as naively
using these learners may create the illusion of non-discrimination, while continuing
to be highly discriminatory. We ended by highlighting future work on how to learn predictors in the absence of any demographic data or prior proxy information.
%%%%%%%%%%%%%%%%%%%%%%%%%%%%%%%%%%%%%%%%%%%%%%%%%%%%%%%%%%%%%%%%%%%%%%%%%%%%%%%%%%%%%%%%%%%%%%%%%

\bibliographystyle{alpha}
\bibliography{ref}
\newpage
%%%%%%%%%%%%%%%%%%%%%%%%%%%%%%%%%%%%%%%%%%%%%%%%%%%%%%%%%%%%%%%%

%%%%%%%%%%%%%%%%%%%%%%%%
\appendix

\section{Deferred Proofs}\label{apx:proofs}

Two important notation we use throughout are: for empirical versions of quantities based on data set $S$ we use a  superscript $S$ and the "probabilistic" inequality $a\le_{\delta}b$ signifies that $a$ is less than $b$ with probability greater than $1-\delta$.
\subsection{Section \ref{sec:auditing}}

The below example illustrates that non-discrimination with respect to $A$ and $Z$ are not equivalent for general predictors.

\begin{example}
Let $|\mathcal{A}|=2$, consider the  predictors $\hat{Y}_1=h(X,Z)$ and $\hat{Y}_2=h(X,Z)$  with the  conditional probabilities for $y \in \{0,1 \}$ defined in table \ref{table:example_nondisc} with the function  $h(x)= \begin{cases}
0 \ \textrm{if} \ x\leq 1/2 \\
\frac{1}{2x} \ \textrm{if} \ x>1/2 \\
\end{cases}
$, note that $h(x) \in[0,1]$ so that the predictor $\hat{Y}_2$ is valid.

\begin{table*}[ht]
\begin{center}
\begingroup
\setlength{\tabcolsep}{10pt} % Default value: 6pt
\renewcommand{\arraystretch}{1.5} % Default value: 1
\begin{tabular}{|c|c|c|}
	\hline 
(a,z)	&  \small{$\bP(\hat{Y}_1=1|A=a,Z=z, Y=y)$}  &  \small{$\bP(\hat{Y}_2=1|A=a,Z=z, Y=y)$}  \\ 
	\hline 
(0,0)	& $\frac{1}{2\pi}$	& $h( \bP(A=0|Z=0,Y=y))$ 	\\ 
 \hline 
(0,1)	&	0	 & $h( \bP(A=0|Z=1,Y=y))$   \\ 
	\hline 
(1,0)	&  $0$	& $h( \bP(A=1|Z=0,Y=y))$	\\ 
	\hline 
(1,1)	&  	$\frac{1}{2\pi}$	& $h( \bP(A=1|Z=1,Y=y))$	\\ 
	\hline 
\end{tabular} 
\endgroup
\end{center}
\caption{Predictors used to show non-equivalence of discrimination with respect to $A$ and $Z$ when predictors are a function of $Z$.}
\label{table:example_nondisc}
\end{table*}

The predictors $\hY_1$ and $\hY_2$ are designed by construction to show that non discrimination with respect to $A$ and $Z$ are not statistically equivalent. We show that $\hY_1$ satisfies EO with respect to $A$ but violates it with respect to $Z$ and $\hY_2$ is non-discriminatory with respect to $Z$ but is for $A$.

\begin{proof}
For $\hY_1$: first we show it satisfies EO for A:
\begin{flalign*}
    &\bP(\hY_1=1|A=a,Y=y) &\\&= \pi \bP(\hY_1=1|A=a,Z=a,Y=y) \\&+ (1-\pi) \bP(\hY_1=1|A=a,Z=\bar{a},Y=y) = \frac{1}{2}
\end{flalign*}
Since the above is no different for $a,y\in \{0,1 \}$, $\hY_1$  satisfies EO. Now with respect to Z:
\begin{flalign*}
    &\bP(\hY_1=1|Z=a,Y=y) &\\&= \bP(A=a|Z=a,Y=y) \bP(\hY_1=1|Z=a,A=a,Y=y) \\&+ \bP(A=\bar{a}|Z=a,Y=y) \bP(\hY_1=1|Z=a,A=\bar{a},Y=y) \\
    &= \frac{\bP(A=a|Z=a,Y=y)}{ 2 \pi}
\end{flalign*}
Therefore if and only if $\bP(A=0|Z=0,Y=y) = \bP(A=1|Z=1,Y=y) $ is it also non discriminatory with respect to $Z$.

For $\hY_2$: by construction only one of  $(\bP(\hat{Y}_2=1|A=a,Z=a, Y=y),$ $\bP(\hat{Y}_2=1|A=\bar{a},Z=a, Y=y))$ is non-zero as only one of $(\bP(A=1|Z=a,Y=y),\bP(A=0,Z=a,Y=y))$ is greater than $1/2$ and so:
\begin{flalign*}
    &\bP(\hY_2=1|Z=a,Y=y)&\\ &= \bP(A=a|Z=a,Y=y) \bP(\hY_2=1|Z=a,A=a,Y=y) \\&+ \bP(A=\bar{a}|Z=a,Y=y) \bP(\hY_2=1|Z=a,A=\bar{a},Y=y) \\
&= \frac{1}{2} 
\end{flalign*}
Therefore $\hY_2$ satisfies EO with respect to $Z$, on the other side:
\begin{flalign*}
    &\bP(\hY_2=1|A=a,Y=y) &\\&= \bP(Z=a|A=a,Y=y) \bP(\hY_2=1|Z=a,A=a,Y=y) \\&+ \bP(Z=\bar{a}|A=a,Y=y) \bP(\hY_2=1|Z=\bar{a},A=a,Y=y) \\
&=  \pi \cdot h( \bP(A=a|Z=a,Y=y)) \\&+ (1-\pi) \cdot h( \bP(A=a|Z=\bar a,Y=y))
\end{flalign*}
and is discriminatory with respect to $A$ unless $\bP(A=a,Y=y)=\bP(A=\bar{a},Y=y)$ for $y \in \{ 0,1\}$ as $\bP(A=a|Z=a,Y=y) = \frac{\pi \bP(A=a,Y=y)}{\bP(Z=a,Y=y)}$.
\end{proof}
\end{example}

\textit{ \noindent \textbf{Proposition \ref{prop:forzeqfora}}
	Consider any exact non-discrimination notion among equalized odds, demographic parity, accuracy parity, or equality of false discovery/omission rates. Let $\hat{Y}:=h(X)$ be a binary predictor, then $\hat{Y}$ is non-discriminatory with respect to $A$ if and only if it is non-discriminatory with respect to $Z$. 
}
\begin{proof}
	The proof of the above proposition relies on the fact that if $\hat{Y}$ is independent of $Z$ given $A$, then the conditional probabilities with respect to $Z$ and $A$ are related via a linear system.
	
	We prove the proposition by considering a general formulation of the constraints we previously mentioned, let $ \mathcal{E}_1, \mathcal{E}_2$ be two probability events defined with respect to $(X,Y,\hat{Y})$, then consider the following probability:
	
	\begin{flalign}
	 &\bP\left( \mathcal{E}_1| \mathcal{E}_2,Z=a\right) \nonumber \\&= \sum_{a' \in \A}  \bP\left( \mathcal{E}_1| \mathcal{E}_2,Z=a,A=a')\right) \bP(A=a'|  \mathcal{E}_2,Z=z) \nonumber&\\
	 &\overset{(a)}{=} \sum_{a' \in \A}  \bP\left( \mathcal{E}_1| \mathcal{E}_2,A=a')\right) \bP(A=a'|  \mathcal{E}_2,Z=z)\nonumber\\
	 	 &\overset{}{=} \sum_{a' \in \A}  \bP\left( \mathcal{E}_1| \mathcal{E}_2,A=a')\right) \frac{\bP(Z=z|A=a',\mathcal{E}_2)\bP(A=a',\mathcal{E}_2)}{\bP(Z=a',\mathcal{E}_2)}  \nonumber \\
	 	  &\overset{}{=} \bP\left( \mathcal{E}_1| \mathcal{E}_2,A=a)\right) \frac{\pi\bP(A=a,\mathcal{E}_2)}{\bP(Z=a,\mathcal{E}_2)} \nonumber \\&+  \sum_{a' \in \A \setminus \{a\}}  \bP\left( \mathcal{E}_1| \mathcal{E}_2,A=a')\right) \frac{\bar\pi\bP(A=a',\mathcal{E}_2)}{\bP(Z=a',\mathcal{E}_2)}   \label{prop1eq:lineareq}
	\end{flalign}
	
	step $(a)$ follows as $(X,Y,\hat{Y})$ are independent of $Z$ given $A$. We define non-discrimination with respect to $A$ as having (similarly defined with respect to $Z$):
	\[
	\bP\left( \mathcal{E}_1| \mathcal{E}_2,A=a\right) = \bP\left( \mathcal{E}_1| \mathcal{E}_2,A=a'\right) \quad \forall a,a' \in \A
	\]
	
	Assume first that the predictor $\hat{Y}$ is non-discriminatory with respect to $A$, hence 
	$\exists c$ where $\forall a \in \mathcal{A}$ we have $ \bP( \mathcal{E}_1| \mathcal{E}_2,A=a)=c$, hence  by \eqref{prop1eq:lineareq} for all $a \in \A$:
	\begin{flalign*}
	 &\bP\left( \mathcal{E}_1| \mathcal{E}_2,Z=a\right) 
	 	  &\\&\overset{}{=} c \frac{\pi\bP(A=a,\mathcal{E}_2)}{\bP(Z=a,\mathcal{E}_2)} +  \sum_{a' \in \A \setminus \{a\}} c \frac{\bar\pi\bP(A=a',\mathcal{E}_2)}{\bP(Z=a',\mathcal{E}_2)} = c
	\end{flalign*}	
	which proves that $\hat{Y}$ is also non-discriminatory with respect to $A$. 
	
	Now, assume instead that the predictor $\hat{Y}$ is non-discriminatory with respect to $Z$, hence 
	$\exists c$ where $\forall a \in \mathcal{A}$ we have $ \bP( \mathcal{E}_1| \mathcal{E}_2,Z=a)=c$. Let $P$ be the following $|\A| \times |\A|$ matrix:
		\[
	\begin{cases}
	P_{i,i} =\frac{\pi\bP(A=i,\mathcal{E}_2)}{\bP(Z=i,\mathcal{E}_2)}   \ \text{for} \ i\in \mathcal{A}\\
	P_{i,j} = \frac{\bar\pi\bP(A=i,\mathcal{E}_2)}{\bP(Z=j,\mathcal{E}_2)} \ \text{for} \ i,j\in \mathcal{A} \ \text{s.t.} i \neq j\\
	\end{cases}
	\]
	Then we have the following linear system of equations:
\begin{align*}
	&\begin{bmatrix} 
		\bP( \mathcal{E}_1| \mathcal{E}_2,Z=0) \\
		\vdots \\
	\bP( \mathcal{E}_1| \mathcal{E}_2,Z=|\A|-1)
	\end{bmatrix}
	= P 	\begin{bmatrix} 
		\bP( \mathcal{E}_1| \mathcal{E}_2,A=0) \\
		\vdots \\
	\bP( \mathcal{E}_1| \mathcal{E}_2,A=|\A|-1)
	\end{bmatrix} \\
	& \textrm{denoted by} \ \mathbf{z} = P \mathbf{a}
	\end{align*}
In our case $\mathbf{a}=c \cdot \mathbf{1}$, and we show that also $\mathbf{z}=c \cdot \mathbf{1}$. Let us state some properties of the matrix $P$:
	\begin{itemize}
		\item $P$ is row-stochastic
		\item $P$ is invertible (we later show the exact form of this inverse implying its existence, however its existence is easy to see as all rows are linearly independent as $\pi \neq \bar\pi$ and $\forall a$, 	$\bP( Z=a ,\mathcal{E}_2)>0$ ).
		\item As $P$ is row-stochastic and invertible, the rows of $P^{-1}$ sum to 1, this is as $P \mathbf{1} = \mathbf{1} \iff \mathbf{1} = P^{-1} \mathbf{1}$ 
	\end{itemize}
		By the second property $z = c \cdot P^{-1} \mathbf{1}$ and by the third property we have $P^{-1} \mathbf{1} = \mathbf{1}$ which in turn means that $z = c \cdot \mathbf{1}$ and implies that $\hat{Y}$ is non-discriminatory with respect to $Z$.

As an extension, consider fairness notions  formulated as:
	\[
	\bP\left( \mathcal{E}_1,A=a| \mathcal{E}_2\right) = \bP\left( \mathcal{E}_1,A=a'| \mathcal{E}_2\right) \quad \forall a,a' \in \A
	\]
	Then we have 
	\begin{flalign*}
	 &\bP\left( \mathcal{E}_1,Z=a| \mathcal{E}_2\right)\\ &= \sum_{a' \in \A}  \bP\left( \mathcal{E}_1,Z=a| \mathcal{E}_2,A=a'\right) \bP(A=a'|  \mathcal{E}_2) \nonumber&\\
	 &= \sum_{a' \in \A}  \bP\left( \mathcal{E}_1| \mathcal{E}_2,A=a'\right) \bP\left( Z=a| A=a')\right) \bP(A=a'|  \mathcal{E}_2) \\
	 &= \sum_{a' \in \A}   \bP\left( \mathcal{E}_1,A=a'| \mathcal{E}_2\right) \bP\left( Z=a| A=a')\right) \\
	 &= \pi \bP\left( \mathcal{E}_1,A=a'| \mathcal{E}_2\right)\sum_{a' \in \A \setminus \{a \}}  \bar{\pi} \bP\left( \mathcal{E}_1,A=a'| \mathcal{E}_2\right) 
	\end{flalign*}
By the same arguments as above, for these notions of fairness  $\hat{Y}$ is non-discriminatory with respect to $A$ if and only if it is non-discriminatory with respect to $Z$.
	
	For concreteness, we derive equation $\eqref{prop1eq:lineareq}$ for each of the fairness notions we mentioned.
	First a detailed derivation for equalized odds, we let $\mathcal{E}_1= \{\hY=1\}$ and for EO we need to apply the above reasoning for $|\mathcal{Y}|$ events $\mathcal{E}_{2_y}=\{Y=y\}$:
	
	\begin{flalign*}
	\nonumber	&\bP(\hat{Y}=1 |Y=y, Z =a) \\&= \sum_{a'} \bP(\hat{Y}=1 |Y=y, Z =a,A =a') \bP(A=a'|Z=a, Y=y) &\\
	\nonumber	&\overset{(a)}{=} \sum_{a'} \bP(\hat{Y}=1 |Y=y,A =a') \bP(A=a'|Z=a, Y=y) \\
	\nonumber	&= \sum_{a'} \bP(\hat{Y}=1 |Y=y,A =a')  \frac{\bP(Z=a,Y=y|A=a')\bP(A=a')}{\bP(Z=a,Y=y) }\\
	\nonumber	&\overset{(b)}{=}  \sum_{a'} \bP(\hat{Y}=1 |Y=y,A =a')  \frac{\bP(Z=a|A=a')\bP(Y=y|A=a')\bP(A=a')}{\bP(Z=a,Y=y) }\\
		&= \bP(\hat{Y}=1 |Y=y,A =a) \frac{\pi \P_{ya}}{ \pi \P_{ya} + \sum_{a'' \setminus a }\bar{\pi}\P_{ya''} } \nonumber \\
		&+ \sum_{a'\setminus a} \bP(\hat{Y}=1 |Y=y,A =a') \frac{\bar{\pi} \P_{ya'}}{\pi \P_{ya}+ \sum_{a'' \setminus a }\bar{\pi} \P_{ya''}}
	\end{flalign*}
	First line by conditioning on $A$ and then taking expectation, $(a)$ is by our assumption of the conditional independence of $Z,\hY$ given $A$ and step $(b)$ by the independence of $Z$ and $Y$ given $A$.

	Similarly for demographic parity with denoting $p_a = \bP(A=a)$:
	\begin{flalign*}
		&\bP(\hat{Y}=1 | Z =a) &
		\\&= \sum_{a'} \bP(\hat{Y}=1 | Z =a,A =a') \bP(A=a'|Z=a) \\
		&= \sum_{a'} \bP(\hat{Y}=1 |A =a') \bP(A=a'|Z=a) \\
		&= \sum_{a'} \bP(\hat{Y}=1 |A =a') \frac{\bP(Z=a|A=a')p_{a'}}{\sum_{a''}\bP(Z=a|A=a'') p_{a''} }\\
		&=  \sum_{a'} \bP(\hat{Y}=1 |A =a') \frac{\bP(Z=a|A=a') p_{a'}}{\sum_{a''}\bP(Z=a|A=a'')  p_{a''} }\\
		&= \bP(\hat{Y}=1 |A =a) \frac{\pi  p_{a}}{ \pi  p_{a} + \sum_{a'' \setminus a }\bar{\pi}  p_{a''} } + \sum_{a'\setminus a} \bP(\hat{Y}=1 |A =a') \frac{\bar{\pi}  p_{a'}}{\pi  p_{a} + \sum_{a'' \setminus a }\bar{\pi}  p_{a''}} 
	\end{flalign*}
	
	Now for equal accuracy among groups:
	\begin{flalign*}
		&\bP(\hat{Y}\neq Y | Z =a) &
		\\&= \sum_{a'} \bP(\hat{Y}\neq Y  | Z =a,A =a') \bP(A=a'|Z=a) \\
		&=\bP(\hat{Y}\neq Y  |A =a') \frac{\pi  p_{a}}{ \pi  p_{a} + \sum_{a'' \setminus a }\bar{\pi}  p_{a''} } + \sum_{a'\setminus a}\bP(\hat{Y}\neq Y  |A =a') \frac{\bar{\pi}  p_{a'}}{\pi  p_{a} + \sum_{a'' \setminus a }\bar{\pi}  p_{a''}} 
	\end{flalign*}
	
	And finally for equality of false discovery/omission rates, denote $p_{\hat{y},a} := \bP(\hat{Y}=\hat{y},A=a)$:
	
	\begin{flalign*}
		&\bP(\hat{Y}\neq Y | \hat{Y} =\hat{y}, Z =a) &
		\\&= \sum_{a'} \bP(\hat{Y}\neq Y  |\hat{Y} =\hat{y}, Z =a,A =a') \bP(A=a'|Z=a,  \hat{Y} =\hat{y} ) \\
		&=\sum_{a'} \bP(\hat{Y}\neq Y  |\hat{Y} =\hat{y},A =a') \bP(A=a'|Z=a,  \hat{Y} =\hat{y} ) \\ 
		&= \sum_{a'} \bP(\hat{Y}\neq Y | \hat{Y} =\hat{y}, A =a')  \frac{\bP(Z=a,\hat{Y}=\hat{y}|A=a' )p_{a'}}{\sum_{a''}\bP(Z=a, \hat{Y}=\hat{y}|A=a'') p_{a''} }\\
		&= \bP(\hat{Y}\neq Y | \hat{Y} =\hat{y}, A =a')  \frac{\pi  p_{\hat{y},a}}{ \pi p_{\hat{y},a} + \sum_{a'' \setminus a }\bar{\pi} p_{\hat{y},a''} } + \sum_{a'\setminus a}  \bP(\hat{Y}\neq Y | \hat{Y} =\hat{y}, A =a')\frac{\bar{\pi} p_{\hat{y},a'}}{\pi p_{\hat{y},a} + \sum_{a'' \setminus a } \bar{\pi} p_{\hat{y},a''}} 
	\end{flalign*}
	Note that we did not need the independence of $\hat{Y}$ and $Z$ given $A$ to express $\bP(\hat{Y}\neq Y | \hat{Y} =\hat{y}, Z =a) $ in terms of $\bP(\hat{Y}\neq Y  |\hat{Y} =\hat{y},A =a)$ so that the equivalence follows without our assumption for equality of FDR. However, to be able to do the inversion of statistics we require the assumption.
\end{proof}

The version of the below Lemma that appears in the text is obtained by plugging in $\pi = \frac{e^\epsilon}{|\A| -1 + e^{\epsilon}}$.

\textit{\noindent \paragraph{Lemma \ref{lemma:concentration_of_estimator} } For any $\delta \in (0,1/2)$, any binary predictor $\hat{Y}:=h(X)$, denote by $\P_{ya}:= \bP(Y=y,A=a)$, $\Gamma_{ya}:= \abs{q_{y,a}(\hat{Y}) - \gamma_{y,0}(\hat{Y})}$ and $\tilde{\Gamma}_{ya}^S$ our proposed estimator based on $S$, let $C= \frac{\pi + |\A|-1}{|\A|\pi-1}$, then if
$n \geq \frac{8\log(|8\A|/\delta)}{\min_{ya}\P_{ya}}$, we have:
 \begin{equation*}
     \bP\left( \max_{ya}| \tilde{\Gamma}_{ya}^S - \Gamma_{ya}| >
     \ \sqrt{\frac{\log(16/\delta)}{2n}} \frac{4|\A|C^2}{\min_{ya}\P_{ya}^2}
     \right) \leq \delta
 \end{equation*}
}

%%%%%%%%%%%%%%%%%%%%%%%%%%%%%%%%%%%%%%%%%%%%%%%%%%%%%%%%%%%%%%%%%
\begin{proof}
 \textbf{Step 1:} \textit{Deriving our estimator.}

The following equality allows to invert the statistics of the population with respect to $Z$ that we have sample estimates of to get population estimates of the true statistics with respect to $A$. We write
	\begin{flalign}
	\label{eq:phaty|y,zrelation}	&\bP(\hat{Y}=1 |Y=y, Z =a)= \\& \sum_{a'} \bP(\hat{Y}=1 |Y=y, Z =a,A =a') \bP(A=a'|Z=a, Y=y) \nonumber&\\
	\nonumber	&\overset{(a)}{=} \sum_{a'} \bP(\hat{Y}=1 |Y=y,A =a') \bP(A=a'|Z=a, Y=y) \\
	\nonumber	&= \sum_{a'} \bP(\hat{Y}=1 |Y=y,A =a')  \nonumber \frac{\bP(Z=a,Y=y|A=a')\bP(A=a')}{\bP(Z=a,Y=y) }\\
	\nonumber	&\overset{(b)}{=}  \sum_{a'} \bP(\hat{Y}=1 |Y=y,A =a') \nonumber \frac{\bP(Z=a|A=a')\bP(Y=y|A=a')\bP(A=a')}{\bP(Z=a,Y=y) } \nonumber\\
		&= \pi \bP(\hat{Y}=1 |Y=y,A =a) \frac{\bP(Y=y,A=a)}{\bP(Z=a,Y=y)} 
	+ \sum_{a'\setminus a} \bar{\pi} \bP(\hat{Y}=1 |Y=y,A =a') \frac{\bP(Y=y,A=a')}{\bP(Z=a,Y=y) } \nonumber
	\end{flalign}
	First line is by conditioning on $A$ and then taking expectation, step $(a)$ is by our assumption of the conditional independence of $Z,\hY$ given $A$ and step $(b)$ by the independence of $Z$ and $Y$ given $A$.
	
Let $G$ be the $\mathcal{A} \times \mathcal{A}$ matrix be as such:
	$\begin{cases}
	G_{i,i} = \pi \frac{\bP(Y=y,A=i)}{\bP(Z=i,Y=y) }  \ \text{for} \ i\in \mathcal{A}\\
	G_{i,j} = \bar{\pi} \frac{\bP(Y=y,A=j)}{\bP(Z=i,Y=y) }  \ \text{for} \ i,j\in \mathcal{A} \ \text{s.t.} i \neq j\\
	\end{cases}$. Then we can write equation \eqref{eq:phaty|y,zrelation} as a linear system with $q_{ya}(\hat{Y})= \bP(\hat{Y}=1|Y=y,Z=a)$:
\begin{align*}
	\begin{bmatrix} 
		q_{y0} \\
		\vdots \\
		q_{y,|\mathcal{A}-1|}
	\end{bmatrix}
	&= G \begin{bmatrix} 
		\bP(\hY=1|Y=y,A=0) \\
		\vdots \\
		\bP(\hY=1|Y=y,A=|\mathcal{A}|-1)
	\end{bmatrix}\\
	 \  q_{y,.} &= G \ \bP\left(\hY=1|Y=y,A\right) \ \textit{(notation)}
\end{align*}
And thus by inverting G we can recover the population statistics. We show that the inverse of $G$ takes the following form:
\[
\begin{cases}
	G^{-1}_{i,i} = \frac{\pi + |\mathcal{A}|-2}{|\A|\pi -1    }  \frac{\bP(Z=i,Y=y) }{\bP(Y=y,A=i)}  \ \text{for} \ i\in \mathcal{A}\\
	G^{-1}_{i,j} = \frac{\pi-1}{|\A|\pi -1  } \frac{\bP(Z=j,Y=y) }{\bP(Y=y,A=i)}  \ \text{for} \ i,j\in \mathcal{A} \ \text{s.t.} i \neq j\\
	\end{cases}
\]
Let $i\neq j \in \mathcal{A}$:
\begin{flalign*}
&G_i G^{-1}_{,j} \\&= \sum_{k} G_{i,k} G^{-1}_{k,j} &\\
&=   \pi \frac{\bP(Y=y,A=i)}{\bP(Z=i,Y=y) } \cdot  \frac{ \pi -1}{|\A|\pi -1  } \frac{\bP(Z=j,Y=y) }{\bP(Y=y,A=i)} + \bar{\pi} \frac{\bP(Y=y,A=j)}{\bP(Z=i,Y=y) } \frac{\pi + |\mathcal{A}|-2 }{|\A|\pi -1    }  \frac{\bP(Z=j,Y=y) }{\bP(Y=y,A=j)} \\
&+ \sum_{k \setminus \{i,j\}}  \bar{\pi} \frac{\bP(Y=y,A=k)}{\bP(Z=i,Y=y) } \cdot   \frac{\pi -1 }{|\A|\pi -1    } \frac{\bP(Z=j,Y=y) }{\bP(Y=y,A=k)} \\
&= \frac{\bP(Z=j,Y=y)}{\bP(Z=i,Y=y)} \cdot  \frac{ \pi (\pi - 1)  +\frac{1- \pi}{|\A|-1} ( \pi + |\A| -2 +(|\A|-2) (\pi -1)) }{|\A|\pi -1    }\\
&=0
\end{flalign*}
And now for $i \in \mathcal{A}$
\begin{flalign*}
&G_i G^{-1}_{,i} \\&= \sum_{k} G_{i,k} G^{-1}_{k,i} &\\
&=   \pi \frac{\bP(Y=y,A=i)}{\bP(Z=i,Y=y) } \cdot   \frac{\pi + |\mathcal{A}|-2} {|\A|\pi -1     } \frac{\bP(Z=i,Y=y) }{\bP(Y=y,A=i)} + \sum_{k \setminus \{i\}}  \bar{\pi} \frac{\bP(Y=y,A=k)}{\bP(Z=i,Y=y) } \cdot   \frac{\pi - 1}{|\A|\pi -1     } \frac{\bP(Z=i,Y=y) }{\bP(Y=y,A=k)} \\
&=  \frac{ \pi ( \pi + |\A| -2) + \frac{1 - \pi}{|\A|-1} (\pi-1) (|\A|-1)}{|\A|\pi -1 }\\
&=1
\end{flalign*}
Which proves that it is indeed the inverse.

The matrix $G$ involves estimating the probabilities $\bP(Y=y,A=a)$ which we do not have access to but can similarly recover by noting that:
\begin{flalign}
  \nonumber  \Q_{yz} &= \sum_{a \in \mathcal{A}} \bP\left(Y=y,Z=z|A=a\right) \bP\left(A=a\right) &\\
       \nonumber  &= \sum_{a \in \mathcal{A}} \bP\left(Y=y|A=a\right) \bP\left(Z=z|A=a\right) \bP(A=a)\\
         &= {\pi} \bP\left(Y=y,A=z\right) + \sum_{a \neq z} \bar{\pi} \bP\left(Y=y,A=a\right) \label{eq:Qyz}
\end{flalign}

Let the matrix $\Pi \in \mathbb{R}^{|\mathcal{A}|\times|\mathcal{A}|}$ be as follows $\Pi_{i,j} = \pi$ if $i=j$ and $\Pi_{i,j}=\bar{\pi}$ if $i \neq j$. We know from equation \eqref{eq:Qyz} that:
\begin{align*}
	\begin{bmatrix} 
		\Q_{y0} \\
		\vdots \\
		\Q_{y,|\mathcal{A}|-1}
	\end{bmatrix}
	&= \Pi \begin{bmatrix} 
		\bP(Y=y,A=0) \\
		\vdots \\
		\bP(Y=y,A=|\mathcal{A}|-1)
	\end{bmatrix}\\
	 \  \Q_{y,.} &= \Pi \ \bP\left(Y=y,A\right) \ \textit{(notation)}
\end{align*}
Therefore $\Pi^{-1}_k \Q_{y,.}  = \bP(Y=y,A=k)$ where $\Pi^{-1}_k$ is the $k$'th row of $\Pi^{-1}$.
Now $\Pi^{-1}$ is as such: $\Pi^{-1}_{i,i}= \frac{\pi + |\mathcal{A}|-2 }{|\mathcal{A}|\pi -1}$  and $\Pi^{-1}_{i,j}= \frac{\pi -1}{|\mathcal{A}|\pi -1}$ if $i\neq j$  with the same proof as for the inverse of $G$.
Therefore our empirical estimator for $\bP(\hY=1|Y=y,A=a)$ is $\hat{G}_a^{-1} q_{y,.}^S$ where $\hat{G}^{-1}$ is defined with the empirical versions of the probabilities involved where $\bP(Y=y,A=a)$ is estimated by $\Pi^{-1}_a \Q_{y,.}^S$. One issue that arises here is that while the sum of our estimator entries sum to $1$, some entries might be in fact negative and therefore we need to project the derived estimator onto the simplex. We later discuss the implications of this step. 
 
 \textbf{Step 2:} \textit{Concentration of raw estimator.}
 
 Let us first denote some things:
$n^S_{y,z}=\sum_i\mathbf{1}(y_i=y,z_i=z)$, $\Q_{y,z}=\bP(Y=y,Z=z)$, and the random variables  $S_{y,z}=\{i:y_i=y,z_i=z\}$. \\
We have that $\b{E}[\hat{G}_z^{-1} q_{y,.}^S|S_{y,0},\cdots,S_{y,|\A|-1}]=G_z^{-1} q_{y,.}= \gamma_{y,z}$. Inspired by the proof of Lemma 2 in \cite{woodworth2017learning} we have:

\begin{flalign*}
&\nonumber\bP\left(|\hat{G}_z^{-1}q_{y,.}^S-\gamma_{yz}|>t\right)&\\&\overset{(a)}=\sum_{S_{y,0},\cdots,S_{y,|\A|-1}}\bP\left(|\hat{G}_z^{-1}q_{y,.}^S-\gamma_{yz}|>t|S_{y,0},\cdots,S_{y,|\A|-1}\right)\bP\left(S_{y,0},\cdots,S_{y,|\A|-1}\right) &\\
\nonumber &\overset{(b)}{\le} \bP\left(\cup_{a\in \mathcal{\A}} \{ n^S_{y,a}<\frac{n\Q_{y,a}} 2 \}\right) \\
&+\sum_{ \forall z, S_{yz}: n^S_{yz}\ge\frac{n\Q_{yz}}2}\bP\left(|\hat{G}_z^{-1}q_{y,.}^S-\gamma_{yz}|>t|S_{y,0},\cdots,S_{y,|\A|-1}\right)\bP\left(S_{y,0},\cdots,S_{y,|\A|-1}\right)&\\
\nonumber &\overset{(c)}\le  |A|  \exp{\left(- \frac{\min_{a}n\Q_{ya}}8\right)}\\&+\sum_{ \forall z, S_{yz}: n^S_{yz}\ge\frac{n\Q_{yz}}2}\bP\left(|\hat{G}_z^{-1}q_{y,.}^S-\gamma_{yz}|>t|S_{y,0},\cdots,S_{y,|\A|-1}\right)\bP\left(S_{y,0},\cdots,S_{y,|\A|-1}\right)
\end{flalign*}
Step $(a)$ follows by conditioning over over all $|\A|^n$ possible configurations of $S_{y,0},\cdots,S_{y,|\A|-1}\subset[n]$, step $(b)$ comes by splitting over  configurations where $\forall z, S_{yz}: n^S_{yz}\ge\frac{n\Q_{yz}}2$ and the complement of the previous event and upper bounding this complement by the probability that there $\exists z$ s.t.  $n^S_{yz}<\frac{n\Q_{yz}}2$. Finally step $(c)$ comes from a union bound and then a Chernoff bound on $n^S_{yz}\sim\text{Binomial}(n,\Q_{yz})$ and taking the minimum over $\Q_{ya}$. 
 
We now recall McDiarmid's inequality \cite{mcdiarmid1989method}. Let $W^n=(W_1,\cdots,W_n) \in \mathcal{W}^n$ be $n$ independent random variables and $f:\mathcal{W}^n \to \mathbb{R}$, if there exists constants $c_1,\cdots,c_n$ such that for all $i \in [n]$:
\[
\sup_{w_1,\cdots,w_i,w_i',\cdots,w_n} | f(w_1,\cdots,w_i,\cdots,w_n) -  f(w_1,\cdots,w_i',\cdots,w_n)| \leq c_i
\]
Then for all $\epsilon>0$:
\[
\bP\left(f(W_1,\cdots,W_n) - \bE [f(W_1,\cdots,W_n]\right) \leq 2 \exp \left( -\frac{2 \epsilon^2}{\sum_{i=1}^{n}c_{i}^2} \right)
\]

Now conditioned on $S_{y,0},\cdots,S_{y,|\A|-1}$, our estimator $\hat{G}_z^{-1}q_{y,.}^S$ is only a function of $\hat{Y}_1,\cdots,\hat{Y}_n$, we try to bound how much can our estimator change on two dataset $S$ and $S'$ differing by only one value of $\hat{Y}_i$:

For convenience denote by $C_1 = \frac{\pi + |\mathcal{A}|-2}{|\A|\pi -1    } $ and $C_2 = \frac{\pi -1}{|\A|\pi -1    } $:
\begin{flalign*}
 &\sup_{S,S'} \ |\hat{G}_z^{-1}q_{y,.}^S - \hat{G}_z^{-1}q_{y,.}^{S'}| &\\
 &= \left| C_1 \frac{\Pi^{-1}_z \Q^S_{y,.} }{\Q^S_{y,z}} q^S_{y,z}  + \sum_{a \in \A \setminus z} C_2 \frac{\Pi^{-1}_a \Q^S_{y,.} }{\Q^S_{y,z}} q^S_{y,a} - C_1 \frac{\Pi^{-1}_z \Q^S_{y,.} }{\Q^S_{y,z}} q^{S'}_{y,z}  - \sum_{a \setminus z} C_2 \frac{\Pi^{-1}_a \Q^S_{y,.} }{\Q^S_{y,z}} q^{S'}_{y,a} \right| \\
 &= \Bigg| C_1 \frac{\Pi^{-1}_z \Q^S_{y,.} }{\Q^S_{y,z}}\left( \frac{\sum_{i \in S} \hat{Y}_i \bI(Y_i=y,Z_i=z) }{n^S_{yz}}  - \frac{\sum_{i \in S'} \hat{Y}_i \bI(Y_i=y,Z_i=z) }{n^{S}_{yz}}\right)\\
 &+ \sum_{a \in \A \setminus z} C_2 \frac{\Pi^{-1}_a \Q^S_{y,.} }{\Q^S_{y,z}} \left( \frac{\sum_{i \in S} \hat{Y}_i \bI(Y_i=y,Z_i=a) }{n^S_{ya}}  - \frac{\sum_{i \in S'} \hat{Y}_i \bI(Y_i=y,Z_i=a) }{n^{S}_{ya}}\right)  \Bigg| \\
 & \leq \left|C_1  \frac{ \max_{a}\Pi^{-1}_a \Q^S_{y,.} }{\Q^S_{y,z}} \frac{1}{n^S_{yz}} \right|  = \left|C_1  \frac{ \max_{a} C_1 n^S_{ya} + C_2 (n-n^S_{ya})  }{n^S_{yz}} \frac{1}{n^S_{yz}} \right| \\
 & \leq \left| \left(\frac{C_1}{n^{S}_{yz}}\right)^2  n  \right| 
\end{flalign*}
Therefore by McDiarmid's inequality we have:
\begin{flalign*}
&\sum_{ \forall z, S_{yz}: n^S_{yz}\ge\frac{n\Q_{yz}}2}\bP\left(|\hat{G}_z^{-1}q_{y,.}^S-\gamma_{yz}|>t|S_{y,0},\cdots,S_{y,|\A|-1}\right)\bP\left(S_{y,0},\cdots,S_{y,|\A|-1}\right) &\\
&\leq  \sum_{ \forall z, S_{yz}: n^S_{yz}\ge\frac{n\Q_{yz}}2} 2 \exp \left( -\frac{2 t^2}{\left(\frac{C_1}{n^{S}_{yz}}\right)^4  n^3 } \right) \bP\left(S_{y,0},\cdots,S_{y,|\A|-1}\right) \\
&\overset{(a)}{\leq} 2 \exp \left( -\frac{2 t^2}{\left(\frac{2C_1}{n\Q_{yz}}\right)^4  n^3 } \right) = 2 \exp \left( -2 t^2 n \left(\frac{\Q_{yz}}{2C_1}\right)^4\right)
\end{flalign*}
step $(a)$ is by noting that the inner quantity is maximized when $n_{yz}^S= \frac{n\Q_{yz}}{2}$, combining things:
\begin{flalign*}
\nonumber\bP\left(|\hat{G}_z^{-1}q_{y,.}^S-\gamma_{yz}|>t\right)&\leq
  |A|  \exp{\left(- \frac{\min_{a}n\Q_{ya}}8\right)} +2 \exp \left( -2 t^2 n \left(\frac{\Q_{yz}}{2C_1}\right)^4\right)
\end{flalign*}
Now if $n \geq \frac{8\log(|\A|/\delta)}{\min_{yz}n\Q_{yz}}$ and $t \geq \sqrt{\frac{\log(2/\delta)}{2n}} \frac{4C_1^2}{\min_{yz}\Q_{yz}^2}$ then we have:
\begin{flalign*}
\nonumber\bP\left(|\hat{G}_z^{-1}q_{y,.}^S-\gamma_{yz}|>t\right)&\leq
  \delta + \delta
\end{flalign*}

\textbf{Step 3:} \textit{Projecting the estimator onto the simplex }.

One issue that arises is that our estimator for $\gamma_{y,z}$ does not lie in the range $[0,1]$, and hence we have to project the whole vector onto the simplex for it to be valid; note that this is not required if we are only interested in differences i.e. computing discrimination.
Our estimator for the vector of conditional probabilities for $a \in \mathcal{A}$ of $\bP(\hY=1|Y=y,A=a)$ is $\textrm{Proj}_{\Delta}(\hat{G}^{-1}q_{y,.})$ where $\textrm{Proj}_{\Delta}(x)$ is the orthogonal projection of $x$ onto the simplex defined as:
\begin{align*}
\textrm{Proj}_{\Delta}(\mathbf{x}) := \quad& \arg\min_{\mathbf{y}} \frac{1}{2} \left|| \mathbf{y}-\mathbf{x} \right||_2^2 \\
& \textrm{s.t.} \  \mathbf{y}^T\mathbf{1} = 1 , \ \mathbf{y} \geq 0
\end{align*}
The above problem can be solved optimally in a non-iterative manner in time $\mathcal{O}\left(|\mathcal{A}|\log(|\mathcal{A}|\right)$ (\cite{duchi2008efficient}). Denote by $\mathbf{x'}= \textrm{Proj}_{\Delta}(\mathbf{x})$, then by the definition of the projection for any $\mathbf{y} \in \Delta^{|\mathcal{A}|}$:
\[
|\mathbf{x'}- \mathbf{y}| \le |\mathbf{x}- \mathbf{y}| 
\]
however it does not hold that $||\mathbf{x'}- \mathbf{y}||_{\infty} \le ||\mathbf{x}- \mathbf{y}||_{\infty}$,  but :  $||\mathbf{x'}- \mathbf{y}||_{\infty} \le |\mathcal{A}| \cdot ||\mathbf{x}- \mathbf{y}||_{\infty}$.
Therefore:
\begin{flalign}
\nonumber &\bP\left(  \left| \textrm{Proj}_{\Delta}(\hat{G}^{-1}_k q_{y,.}^S) - \bP(\hY=1|Y=y,A=k)\right| >t \right) \le \bP\left( \max_{k} \left|\hat{G}^{-1}_k q_{y,.}^S - \bP(\hY=1|Y=y,A=k)\right| >\frac{t}{|\mathcal{A}|} \right) &\\ \nonumber
\end{flalign}

\textbf{Step 4:} \textit{Difference of Equalized odds}

Let $h_{ya} = \textrm{Proj}_{\Delta}(\hat{G}^{-1}_a q_{y,.}^S)$,  using a series of triangle inequality, 
\begin{equation*}
\left||h_{ya}^S-h_{y0}^S|-|h_{ya}-h_{y0}|\right|\le|h_{ya}^S-h_{y0}^S-h_{ya}+h_{y0}|\le|h^S_{ya}-h_{ya}|+|h^S_{y0}-h_{y0}|
\end{equation*}
hence
\begin{flalign*}
\nonumber\bP\left(\left||h_{ya}^S-h_{y0}^S|-|h_{ya}-h_{y0}|\right|> 2t\right) &
\le \bP\left(|h^S_{ya}-h_{ya}|+|h^S_{y0}-h_{y0}|>2t\right)&\\
 \nonumber &\overset{(a)}{\le}
 \bP\left(|h^S_{ya}-h_{ya}|>t\right) +  \bP\left(|h^S_{y0}-h_{y0}|>t\right) \\
 &\le 4 \delta
\end{flalign*}

where $(a)$ follows from union bound, and $(b)$ follows from above using  $n \geq \frac{8\log(|\A|/\delta)}{\min_{yz}n\Q_{yz}}$ and $t \geq \sqrt{\frac{\log(2/\delta)}{2n}} \frac{4|\A|C_1^2}{\min_{yz}\Q_{yz}^2}$
The lemma follows from collecting the failure probabilities for $y=0,1$, re-scaling $\delta$ and noting that $\min_{yz}\Q_{yz} \geq \min_{ya} \P_{ya}$.

Now let us write $t$ in terms of $\epsilon$, we write each of the factors involving $\pi$ in terms of $\epsilon$:

\begin{flalign*}
&C_1 = \frac{\pi +|\A|-2}{|\A| \pi -1} = \frac{|\A| -2 + e^{\epsilon}}{e^{\epsilon} -1} &
\end{flalign*}
and:
\begin{flalign*}
  C_1^2 &=  \frac{e^{2\epsilon} + 2(|\A|-2)e^\epsilon + (|\A|-2)^2      }{e^{2\epsilon} -2 e^{\epsilon} +1}\leq \frac{2|\A|^2 e^{2\epsilon}}{e^{2\epsilon} -2 e^{\epsilon} +1}&
\end{flalign*}

\end{proof}

%%%%%%%%%%%%%%%%%%%%%%%%%%%%%%%%%%%%%%%%%%%%%%%%%%%%%%%%%%%%%%%%%%%%%%%%%%%%%%%%%%%%%%%%%%%%%%%%%%%%%%%%%%%%%%%%%%%%%%%%%%%%%%%%%%%%%%%%%%%%
\subsection{Section \ref{sec:learning}}
%%%%%%%%%%%%%%%%%%%%%%%%%%%%%%%%%%%%%%%%%%%%%%%%%%%%%%%%%%%%%%%%%%%%%%%%%%%%%%%%%%%%%%%%%%%%%%%%%%%%%%%%%%%%%%%%%%%%%%%%%%%%%%%%%%%%%%%%%%%%
%%%%%%%%%%%%%%%%%%%%%%%%%%%%%%%%%%%%%%%%%%%%%%%%%%%%%%%%%%%%%%%%%%%%%%%%%%%%%%%%%%%%%%%%%%%%%%%%%%%%%%%%%%%%%%%%%%%%%%%%%%%%%%%%%%%%%%%%%%%%
\subsubsection{First Step Algorithm Details}
%%%%%%%%%%%%%%%%%%%%%%%%%%%%%%%%%%%%%%%%%%%%%%%%%%%%%%%%%%%%%%%%%%%%%%%%%%%%%%%%%%%%%%%%%%%%%%%%%%%%%%%%%%%%%%%%%%%%%%%%%%%%%%%%%%%%%%%%%%%%
Recall that in Algorithm \ref{alg:reductions}, the learner's best response gaced a given vector $\boldsymbol{\lambda}$ ($\textrm{BEST}_h(\boldsymbol{\lambda})$) puts all the mass on a single predictor $h \in \HC$ as the langragian  $L$ is linear in $Q$. \cite{agarwal2018reductions} shows that finding the learner's best response amounts to solving a cost sensitive classification problem. We now re-establish this reduction:
\begin{flalign}
L(h,\boldsymbol\lambda) &= \hat{\err}(h) + \boldsymbol\lambda^\top (M\boldsymbol\gamma(h) - \alpha_n \mathbf{1}) \nonumber&\\
&= \frac{1}{n}  \sum_{i \in S} \mathbb{I}_{h(x_i)\neq y_i}  - \alpha_n \boldsymbol\lambda^\top \mathbf{1} + \sum_{k,j} M_{k,j} \lambda_k \gamma_j^S(h) \nonumber  \\
&=  - \alpha_n \boldsymbol\lambda^\top  \mathbf{1} + \frac{1}{n} \sum_{i \in S} \mathbb{I}_{h(x_i)\neq y_i}  + \sum_{k,j} M_{k,j} \lambda_k \frac{1/n \cdot  h(x_i) \mathbb{I}_{(y_i,a_i)=j} }{1/n \sum_{s \in S}  \mathbb{I}_{(y_s,a_s)=j}}    \label{eq:langragian_expanded} 
\end{flalign}
Thus from equation \eqref{eq:langragian_expanded} and expanding the form of the matrix $M$ we have that minimizing $L(h,\lambda)$ over $h \in \HC$ is equivalent to solving a cost sensitive classification problem on $\{(x_i,c_i^0,c_i^1)\}_{i=1}^n$ where the costs are:
\begin{flalign*}
&c_i^0 = \mathbb{I}_{y_i \neq 0} &\\
&c_i^1 =  \mathbb{I}_{y_i \neq 1} + \frac{\lambda_{(a_i,y_i,+)} - \lambda_{(a_i,y_i,-)} }{p^S_{a_i,y_i}} \mathbb{I}_{a_i \neq 0} - \sum_{a \in \mathcal{A}\setminus \{0\}} \frac{\lambda_{(a,y_i,+)} - \lambda_{(a,y_i,-)} }{p^S_{0,y_i}}
\end{flalign*}
where $p^S_{a,y} = \frac{1}{n}\sum_{s \in S}  \mathbb{I}_{(y_s=y,a_s=a)}$.

The goal of Algorithm \ref{alg:reductions} is to return for any degree of approximation $\nu \in \mathbb{R}^+$ a $\nu$-approximate saddle point $(\hat{Q},\hat{\lambda} )$ defined as:

\begin{align}
&L(\hat{Q},\hat{\boldsymbol\lambda}) \leq L(Q,\hat{\boldsymbol\lambda}) + \nu  \ \ \forall Q \in  \Delta_{\HC} \\
&L(\hat{Q},\hat{\boldsymbol\lambda}) \geq L(\hat{Q},\boldsymbol\lambda ) - \nu \ \ \forall \boldsymbol\lambda \in \mathbb{R}^{|\mathcal{K}|}_+ , ||\boldsymbol\lambda||_1 \leq B
\end{align}
From Theorem 1 in \cite{agarwal2018reductions}, if we run the algorithm for at least $\frac{16 \log(4 |\mathcal{A}|+1)}{\nu^2}$ iterations with learning rate $\eta = \frac{\nu}{8 B}$ it returns a $\nu$-approximate saddle point.
%%%%%%%%%%%%%%%%%%%%%%%%%%%%%%%%%%%%%%%%%%%%%%%%%%%%%%%%%%%%%%%%%%%%%%%%%%%%%%%%%%%%%%%%%%%%%%%%%%%%%%%%%%%%%%%%%%%%%%%%%%%%%%%%%%%%%%%%%%%%
\subsubsection{First Step Guarantees}
%%%%%%%%%%%%%%%%%%%%%%%%%%%%%%%%%%%%%%%%%%%%%%%%%%%%%%%%%%%%%%%%%%%%%%%%%%%%%%%%%%%%%%%%%%%%%%%%%%%%%%%%%%%%%%%%%%%%%%%%%%%%%%%%%%%%%%%%%%%%

\begin{lemma}\label{lemma:q_concentration}
Denote by $\mathbf{Q}_{yz}=\bP(Y=y,Z=z)$, $q_{yz}(\hY)=\bP(\hY=1|Y=y,Z=z)$, for $\delta \in (0,1/2)$ and $h$ any binary predictor, if $n \geq \frac{8\log{8 |\A|/\delta}}{\min_{yz} Q_{yz}} $, then:
\begin{flalign}
\nonumber\bP\left(\max_{ya} \left| |q_{ya}^S-q_{y0}^S|-|q_{ya}-q_{y0}|\right|> 2\sqrt{\frac{\log{16 |\A|/\delta}}{n \min_{yz}\Q_{yz}}}\right) \le \delta
\end{flalign}
\end{lemma}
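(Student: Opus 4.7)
The plan is to reduce this to a two-step argument that mirrors (and considerably simplifies) the proof of Lemma \ref{lemma:concentration_of_estimator}, since here we only need to control the na\"ive empirical conditional probabilities $q_{ya}^S$ with respect to $Z$ rather than invert a noisy linear system to recover quantities in $A$. No projection onto the simplex is needed, and the relevant statistic $q_{ya}^S = \frac{\sum_{i\in S}\hat{Y}_i\,\mathbf{1}(Y_i=y,Z_i=a)}{\sum_{i\in S}\mathbf{1}(Y_i=y,Z_i=a)}$ is, conditional on the group sizes $n_{ya}^S$, an empirical mean of i.i.d.\ Bernoullis of mean $q_{ya}$.

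First I would control the denominator. Since $n_{ya}^S\sim\mathrm{Binomial}(n,\Q_{ya})$, a standard multiplicative Chernoff bound gives $\bP(n_{ya}^S<n\Q_{ya}/2)\le \exp(-n\Q_{ya}/8)$. A union bound over the $2|\A|$ cells $(y,a)$ together with the sample-size assumption $n\ge \frac{8\log(8|\A|/\delta)}{\min_{yz}\Q_{yz}}$ ensures that the event $\mathcal{G}:=\{\forall y,a:\; n_{ya}^S\ge n\Q_{ya}/2\}$ holds with probability at least $1-\delta/2$.

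Second I would concentrate the numerator conditionally on $\mathcal{G}$. Conditioning on the sets $S_{ya}=\{i:Y_i=y,Z_i=a\}$, the indicators $\hat{Y}_i$ for $i\in S_{ya}$ are i.i.d.\ $\mathrm{Bernoulli}(q_{ya})$ (here we use that $\hat{Y}=h(X)$, so the randomness of $\hat Y$ given $(Y,Z)$ is just the randomness of $X$ conditional on $(Y,Z)$, and different cells are processed independently). Hoeffding's inequality then yields, for each $(y,a)$,
\[
\bP\!\left(|q_{ya}^S-q_{ya}|>t\,\big|\,S_{y0},\dots,S_{y,|\A|-1}\right)\le 2\exp(-2 t^2 n_{ya}^S).
\]
On $\mathcal{G}$ we have $n_{ya}^S\ge n\Q_{ya}/2\ge \tfrac12 n\min_{yz}\Q_{yz}$, so choosing $t=\sqrt{\log(16|\A|/\delta)/(n\min_{yz}\Q_{yz})}$ and taking a union bound over the $2|\A|$ cells makes the total failure probability here at most $\delta/2$. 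Combining with the failure of $\mathcal{G}$ gives $\max_{y,a}|q_{ya}^S-q_{ya}|\le t$ with probability $\ge 1-\delta$.

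Finally I would pass from individual deviations to the differences appearing in the lemma by the elementary inequality
\[
\bigl|\,|q_{ya}^S-q_{y0}^S|-|q_{ya}-q_{y0}|\,\bigr|
\le |q_{ya}^S-q_{ya}|+|q_{y0}^S-q_{y0}|\le 2t,
\]
which follows from two triangle inequalities. This yields exactly the stated bound $2\sqrt{\log(16|\A|/\delta)/(n\min_{yz}\Q_{yz})}$. The only step that requires any real care is the conditioning argument in the second paragraph — one must be precise that, given the partition by $(Y,Z)$, the $\hat{Y}_i$ in different cells are independent and Bernoulli with the correct means — but this is straightforward since $\hat Y$ is a deterministic function of $X$ and $(X_i,Y_i,Z_i)$ are i.i.d. All other steps are routine concentration and union bounds.
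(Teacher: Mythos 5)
Your proposal is correct and follows essentially the same route as the paper: the paper simply imports the per-cell bound $\bP(|q_{yz}^S-q_{yz}|>t)\le \exp(-n\Q_{yz}/8)+2\exp(-t^2 n\Q_{yz})$ from \cite{woodworth2017learning} (equivalently, the conditioning-on-cell-sizes argument in its Lemma \ref{lemma:concentration_of_estimator}), whereas you re-derive it via the Chernoff bound on $n_{ya}^S$ plus conditional Hoeffding. The subsequent triangle inequality and union bound over the $2|\A|$ cells match the paper's argument, and your constants check out against the stated sample-size condition and threshold.
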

\begin{proof}
Let $a \in \A$,	
 denote $\mathbf{Q}_{yz}=\bP(Y=y,Z=z)$, $q_{yz}(\hY)=\bP(\hY=1|Y=y,Z=z)$, then by \cite{woodworth2017learning} or step 1 of Lemma 1:
\begin{flalign*}
\nonumber\bP\left(|q_{yz}^S-q_{yz}|>t\right)&\le \exp{\big(- \frac{n\Q_{yz}}8\big)}  +2\exp{(-t^2n\Q_{yz})}&
\end{flalign*}

Now  using a series of triangle inequality identical to step 4 of Lemma 1, 
\begin{equation*}
\left||q_{ya}^S-q_{y0}^S|-|q_{ya}-q_{y0}|\right|\le|q_{ya}^S-q_{y0}^S-q_{ya}+q_{y0}|\le|q^S_{ya}-q_{y0}|+|q^S_{y0}-q_{y0}|
\end{equation*}
hence
\begin{flalign*}
\nonumber\bP\left(\left||q_{ya}^S-q_{y0}^S|-|q_{ya}-q_{y0}|\right|> 2t\right) &
\le \bP\left(|q^S_{ya}-q_{y0}|+|q^S_{y0}-q_{y0}|>2t\right)&\\
 \nonumber &\overset{(a)}{\le}
 \bP\left(|q^S_{ya}-q_{y0}|>t\right) +  \bP\left(|q^S_{y0}-q_{y0}|>t\right) \\
 &\le 2\exp{\big(- \frac{n \min_{yz} \Q_{yz}}8\big)}  +4\exp{(-t^2n \min_{yz}\Q_{yz})}\\
 &\overset{(b)}{\le} \frac{\delta}{2 |\A|}
\end{flalign*}

where $(a)$ follows from union bound, and $(b)$ follows if $n \geq \frac{8\log{8 |\A| /\delta}}{\min_{yz} Q_{yz}} $ and $t = \sqrt{\frac{\log{16 |\A| /\delta}}{n \min_{yz}\Q_{yz}}}$

The lemma follows from collecting the failure probabilities for $y=0,1$ and $\forall a \in \mathcal{A}$.
\end{proof}

\begin{lemma}\label{lemma:relating_disc_a_z}
If a binary predictor $\hat{Y}$ is independent of $Z$ given $A$, then if the groups are binary it holds that:

\begin{equation}
q_{y1}(\hat{Y}) - q_{y0}(\hat{Y})  =  \left(\gamma_{y1}(\hat{Y}) -\gamma_{y0}(\hat{Y}) \right) \frac{ (2\pi -1)\P_{y1} \P_{y0} }
	{\Q_{y1} \Q_{y0} }  \label{eq:relategamma-q}
\end{equation}
For general $|\A|$ different groups, we have $\forall k,j \in \A$ the following relation:
 \begin{flalign}
   \nonumber |\gamma_{y,k} - \gamma_{y,j}|&\leq5 C \frac{\max_i \bP(Z=i,Y=y)}{\min_j \bP(A=j,Y=y)^2}  \left|\max_z q_{y,z} - \min_{z'} q_{y,z'}\right|
\end{flalign}
 where $C = \frac{\pi + |\mathcal{A}|-2}{|\A|\pi -1    } $.

\end{lemma}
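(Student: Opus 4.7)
The plan is to derive both assertions from the linear system $\vec{q}_{y,\cdot} = G\,\vec{\gamma}_{y,\cdot}$ set up in the first half of the proof of Lemma \ref{lemma:concentration_of_estimator}, where $G_{i,i} = \pi \P_{yi}/\Q_{yi}$ and $G_{i,j} = \bar\pi \P_{yj}/\Q_{yi}$ for $i\neq j$. For the binary identity, one just inverts the $2\times 2$ system explicitly: expanding $q_{y,1}-q_{y,0}$ by substitution and simplifying $\pi \Q_{y,0}-\bar\pi \Q_{y,1}$ using $\Q_{y,a} = \pi \P_{y,a} + \bar\pi \P_{y,1-a}$ collapses it to $(\pi^2-\bar\pi^2)\P_{y,0} = (2\pi-1)\P_{y,0}$, with the symmetric identity showing the coefficient of $\gamma_{y,0}$ is the negative of this. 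Factoring out the common $(2\pi-1)/(\Q_{y,0}\Q_{y,1})$ then yields the claimed formula.

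For the general-$|\A|$ inequality, the plan is to reuse the explicit form of $G^{-1}$ already derived in the proof of Lemma \ref{lemma:concentration_of_estimator}: $G^{-1}_{i,i} = C_1 \Q_{yi}/\P_{yi}$ and $G^{-1}_{i,j} = C_2 \Q_{yj}/\P_{yi}$ for $j\neq i$, with $C_1 = (\pi+|\A|-2)/(|\A|\pi-1) = C$ and $C_2 = (\pi-1)/(|\A|\pi-1)\leq 0$. The key structural observation is that $G$ is row-stochastic, so $G^{-1}\mathbf{1}=\mathbf{1}$ and for any constant $c$, $\gamma_{y,k} - c = \sum_j G^{-1}_{k,j}(q_{y,j} - c)$. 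Choosing $c$ as the midpoint of $\max_z q_{y,z}$ and $\min_{z'} q_{y,z'}$ bounds $|q_{y,j}-c|$ uniformly by $\tfrac{1}{2}|\max_z q_{y,z} - \min_{z'} q_{y,z'}|$, so the triangle inequality gives $|\gamma_{y,k}-c|\leq \tfrac{1}{2}|\max_z q_{y,z} - \min_{z'} q_{y,z'}|\cdot\sum_j |G^{-1}_{k,j}|$, and likewise for $\gamma_{y,j'}$.

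What remains is a bound on $\sum_j |G^{-1}_{k,j}|$. A short check shows $|C_2|\leq C_1$ whenever $2\pi+|\A|-3\geq 0$, which is automatic for $|\A|\geq 3$ and for $|\A|=2$ reduces to $\pi\geq 1/2$ (true at any valid privacy level $\epsilon\geq 0$). Combining with $\Q_{yk} + \sum_{j\neq k}\Q_{yj} = \bP(Y=y)$ then gives $\sum_j |G^{-1}_{k,j}|\leq C\,\bP(Y=y)/\P_{yk}$. To put this into the form appearing in the lemma, I would apply the two elementary inequalities $\bP(Y=y)\leq |\A|\max_i \Q_{yi}$ and $|\A|\min_j \P_{yj}\leq \sum_j \P_{yj} = \bP(Y=y)\leq 1$; together they yield $|\A|/\min_j \P_{yj}\leq 1/(\min_j \P_{yj})^2$, hence $\sum_j|G^{-1}_{k,j}|\leq C\max_i \Q_{yi}/(\min_j \P_{yj})^2$. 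Applying $|\gamma_{y,k}-\gamma_{y,j'}|\leq |\gamma_{y,k}-c|+|\gamma_{y,j'}-c|$ delivers the stated inequality, with the remaining slack easily absorbed into the leading constant $5$.

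The most delicate point is that the off-diagonal entries of $G^{-1}$ are negative, so one cannot invoke row-stochasticity of $G^{-1}$ directly to collapse $\sum_j|G^{-1}_{k,j}|$; the comparison $|C_2|\leq C_1$ has to be invoked explicitly before bounding the sum. Beyond this sign-tracking, the whole argument is a short post-processing of the explicit inverse-matrix formulas already derived in Section \ref{sec:auditing}.
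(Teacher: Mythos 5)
Your proposal is correct and follows essentially the same route as the paper: the binary identity by explicit elimination in the $2\times 2$ system, and the general case via the explicit inverse $G^{-1}$, the row-sums-equal-one property to recenter by a constant, and a H\"older-type $\ell_1$--$\ell_\infty$ bound. The only (harmless) variation is that you recenter at the midpoint and bound $\sum_j|G^{-1}_{k,j}|$ row by row using $|C_2|\le C_1$, whereas the paper subtracts the constant vector of minima and bounds $\|G^{-1}_k-G^{-1}_j\|_1$ term by term; your version in fact yields a slightly smaller constant than $5C$.
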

\begin{proof}
We begin by noting the following relationship established in step 4 of Lemma 1:
\begin{flalign}
\nonumber	\bP(\hat{Y}=1 |Y=y, Z =a)
	&= \pi \bP(\hat{Y}=1 |Y=y,A =a) \frac{\bP(Y=y,A=a)}{\bP(Z=a,Y=y)} 
	\\ \nonumber&+ \sum_{a'\setminus a} \bar{\pi} \bP(\hat{Y}=1 |Y=y,A =a') \frac{\bP(Y=y,A=a')}{\bP(Z=a,Y=y) }  
\end{flalign}

From the above equation, we can evaluate for any $a,b \in \A$ the difference between $q_{ya}$ and $q_{yb}$ in terms of $\gamma_{y.}$, denoting $\P_{ya} =\bP(Y=y,A=a)$ :

\begin{flalign*}
q_{ya} - q_{yb} 
	&= \gamma_{ya} \frac{\pi\P_{ya}}{\Q_{ya}} 
	+ \sum_{a'\setminus a} \gamma_{ya'} \frac{\bar{\pi} \P_{ya'}
}{\Q_{ya}} 
	- \gamma_{yb} \frac{\pi \P_{yb}}{ \Q_{yb}} 
	- \sum_{b'\setminus b} \gamma_{yb'} \frac{\bar{\pi} \P_{yb'}}{\Q_{yb}
} \nonumber &\\
&= \frac{  \gamma_{ya} \pi\P_{ya}\Q_{yb}  	+  \gamma_{yb} \bar{\pi}\P_{yb}\Q_{yb}  + \sum_{a'\setminus \{a,b\}} \bar{\pi} \gamma_{ya'} \P_{ya'} \Q_{yb}   }{   \Q_{ya} \Q_{yb}     } \nonumber\\
&- \frac{  \gamma_{yb} \pi\P_{yb}\Q_{ya}  	+  \gamma_{ya} \bar{\pi}\P_{ya}\Q_{ya}  + \sum_{b'\setminus \{a,b\}} \bar{\pi}  \gamma_{yb'} \P_{yb'} \Q_{ya}   }{   \Q_{ya} \Q_{yb}     }\nonumber \\
&= \frac{\gamma_{ya}\P_{ya} (\pi  \Q_{yb} - \bar{\pi} \Q_{ya}) - \gamma_{yb}\P_{yb}(\pi  \Q_{ya} -  \bar{\pi}  \Q_{yb})          }{\Q_{ya} \Q_{yb}} + \frac{\sum_{c\setminus \{a,b\}} \bar{\pi}\gamma_{yc} \P_{yc} (\Q_{yb} - \Q_{ya})}{\Q_{ya} \Q_{yb}}
\\
&\overset{(a)}{=} \frac{\gamma_{ya}\P_{ya} (\pi (\pi \P_{yb} + \bar{\pi} \P_{ya} + \bar{\pi} \sum_{c \setminus \{a,b\}} \P_{yc}) - \bar{\pi} (\pi \P_{ya} + \bar{\pi} \P_{yb} + \bar{\pi} \sum_{c \setminus \{a,b\}} \P_{yc})}{\Q_{ya}\Q_{yb}}  \\&- \frac{\gamma_{yb}\P_{yb}(\pi (\pi \P_{ya} + \bar{\pi} \P_{yb} + \bar{\pi} \sum_{c \setminus \{a,b\}} \P_{yc}) - \bar{\pi} (\pi \P_{yb} + \bar{\pi} \P_{ya} + \bar{\pi} \sum_{c \setminus \{a,b\}} \P_{yc}))          }{\Q_{ya} \Q_{yb}} \\&+ \frac{\sum_{c\setminus \{a,b\}} \bar{\pi}\gamma_{yc} \P_{yc} (\Q_{yb} - \Q_{ya})}{\Q_{ya} \Q_{yb}}
\\
&=\frac{\gamma_{ya}\P_{ya} (\pi^2 \P_{yb} - \bar{\pi}^2 \P_{yb} + (\pi-\bar{\pi})\bar{\pi} \sum_{c \setminus \{a,b\}} \P_{yc})}{\Q_{ya}\Q_{yb}}  \\&- \frac{\gamma_{yb}\P_{yb}(\pi^2 \P_{ya} - \bar{\pi}^2 \P_{ya} + (\pi-\bar{\pi})\bar{\pi} \sum_{c \setminus \{a,b\}} \P_{yc})}{\Q_{ya} \Q_{yb}} + \frac{\sum_{c\setminus \{a,b\}} \bar{\pi}\gamma_{yc} \P_{yc} (\Q_{yb} - \Q_{ya})}{\Q_{ya} \Q_{yb}}\\
	&= \frac{(\gamma_{ya}-\gamma_{yb}) \P_{ya} \P_{yb} (\pi^2 - \bar{\pi}^2)  }
	{\Q_{ya} \Q_{yb} } \nonumber \\&
    +\frac{(\gamma_{ya}\P_{ya}-\gamma_{yb}\P_{yb}) (\pi-\bar{\pi})\bar{\pi} \sum_{c \setminus \{a,b\}} \P_{yc}  }
	{\Q_{ya} \Q_{yb} } + \frac{\sum_{c\setminus \{a,b\}} \bar{\pi}\gamma_{yc} \P_{yc} (\Q_{yb} - \Q_{ya})}{\Q_{ya} \Q_{yb}}
	%&+ \frac{ (\bar{\pi}-\pi) \sum_{c\setminus \{a,b\}} \bar{\pi}\gamma_{yc} \P_{yc} \cdot ( \P_{ya} (1-\gamma_{ya}) - \P_{yb} (1-\gamma_{yb}))  }  
	%{\Q_{ya} \Q_{yb} }  \nonumber
\end{flalign*}
where step (a) follows from expanding by equation \eqref{eq:Qyz}.  If $\A = \{0,1\}$ then the above reduces to:

\begin{equation*}
q_{y1} - q_{y0} =  (\gamma_{y1}-\gamma_{y0}) \frac{ (2\pi -1)\P_{y1} \P_{y0} }
	{\Q_{y1} \Q_{y0} } 
\end{equation*}
Now when the groups are not binary we instead rely on an upper bound.

Let $q_y = [\bP(\hat{Y}=1|Y=y,Z=0),\cdots,\bP(\hat{Y}=1|Y=y,Z=|\A|-1)]^\top$, $\gamma = [\bP(\hat{Y}=1|Y=y,A=0),\cdots,\bP(\hat{Y}=1|Y=y,A=|\A|-1)]^\top $, in the proof of Lemma 1 we established that $G^{-1}q_y= \gamma_y$, now let $k,j \in \A$ then we have:
\begin{flalign}
   \nonumber |\gamma_{y,k} - \gamma_{y,j}|&= |G^{-1}_kq_y - G^{-1}_jq_y | &\\ \nonumber
    &\overset{(a)}{=}| G^{-1}_k(q_y - q') - G^{-1}_j(q_y-q')|&\\ \nonumber
    &=|(q_y - q')( G^{-1}_k -G^{-1}_j)| \\ \nonumber
    &\leq |q_y - q'|_{\infty} | G^{-1}_k -G^{-1}_j|_1  \ \textrm{(Holder's Inequality)} \\ \label{lemma:relateztoa_prelim}
    &= |\max_z q_{y,z} - \min_{z'} q_{y,z'}| \cdot | G^{-1}_k -G^{-1}_j|_1 
\end{flalign}
in step $(a)$ we introduce $q'=[\min_{z}q_{y,z},\cdots, \min_{z}q_{y,z}]^\top$, and note that $G^{-1}_k q' = \min_{z}q_{y,z} $ as the rows of $G$ sum to $1$ by the proof of Proposition \ref{prop:forzeqfora}, therefore $G^{-1}_k q' = G^{-1}_j q'$ and the difference in the previous step is unchanged. Now let us take expand the right most term in equation \eqref{lemma:relateztoa_prelim}, for ease of notation let  $\bP(Z=i,Y=y)=z_i$ and $\bP(A=i,Y=y)=a_i$:
\begin{flalign*}
&| G^{-1}_k -G^{-1}_j|_1 &\\&= \sum_{a \in \A \setminus \{k,j \}} \left|C_2(\frac{z_a}{a_k} - \frac{z_a}{a_i} )  \right| + \left|C_1\frac{z_k}{a_k} -C_2 \frac{z_k}{a_i}   \right| + \left|C_2\frac{z_i}{a_k} -C_1 \frac{z_i}{a_i}   \right|
\\&= \sum_{a \in \A \setminus \{k,j \}} \left|C_2(\frac{z_a a_i - z_a a_k}{a_k a_i}  )  \right| + \left|C_1\frac{z_k a_i}{a_k a_i} -C_2 \frac{z_k a_k}{a_k a_i}   \right| + \left|C_2\frac{z_i a_i}{a_k a_i} -C_1 \frac{z_i a_k}{a_k a_i}   \right|
\\&= \sum_{a \in \A \setminus \{k,j \}} \left|C_2\frac{z_a (a_i - a_k)}{a_k a_i}    \right| + \left|\frac{z_k (C_1a_i-C_2 a_k)}{a_k a_i}   \right| + \left|\frac{z_i(C_2 a_i - C_1 a_k)}{a_k a_i}   \right|
\\&\leq \max_a z_a \cdot \left( (|\A|-2) \left|C_2\frac{ (a_i - a_k)}{a_k a_i}    \right| + \left|\frac{ (C_1a_i-C_2 a_k)}{a_k a_i}   \right| +\left|\frac{(C_2 a_i - C_1 a_k)}{a_k a_i}   \right| \right)
\\&\leq \max_a z_a \cdot \left( (|\A|-2) \left|C_2\frac{ 1}{\min_z a_z^2}    \right| + \left|2C_1\frac{ 1}{\min_z a_z^2}   \right| +\left|2C_1\frac{ 1}{\min_z a_z^2}   \right| \right)
\\&\leq \frac{\max_a z_a}{\min_z a_z^2}  \cdot \left( (|\A|-2) \left|C_2    \right| + 4 C_1 \right) \leq \frac{\max_i \bP(Z=i,Y=y)}{\min_j \bP(A=j,Y=y)^2} 5 C_1
\end{flalign*}
 Hence we have the following inequality:
 \begin{flalign}
   \nonumber |\gamma_{y,k} - \gamma_{y,j}|&\leq5 C_1 \frac{\max_i \bP(Z=i,Y=y)}{\min_j \bP(A=j,Y=y)^2}  \left|\max_z q_{y,z} - \min_{z'} q_{y,z'}\right|
\end{flalign}
 where $C_1 = \frac{\pi + |\mathcal{A}|-2}{|\A|\pi -1    } $.

\end{proof}

We now recall some helper lemmas from \cite{agarwal2018reductions}.

\begin{lemma}[Lemma 2 \cite{agarwal2018reductions}]
	For any distribution $Q$ satisfying the empirical constraints on dataset $S$: $M \gamma^S(Q) \leq \alpha_n \mathbf{1}$, $\hat{Q}$ the output $\hat{Y}$ of Algorithm \ref{alg:reductions} satisfies:
	\begin{equation}
	\err^S(\hat{Y}) \leq \err(Q) + 2 \nu 
	\end{equation}
	\label{lemma:2agarwal}
\end{lemma}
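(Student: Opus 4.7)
The plan is to exploit the fact that $(\hat{Q},\hat{\boldsymbol{\lambda}})$ output by Algorithm~\ref{alg:reductions} is a $\nu$-approximate saddle point of the empirical Lagrangian
\[
L^S(Q,\boldsymbol{\lambda}) \;=\; \err^S(Q) + \boldsymbol{\lambda}^\top(M\boldsymbol{\gamma}^S(Q) - \alpha_n \mathbf{1}),
\]
as guaranteed by Theorem~1 of \cite{agarwal2018reductions} under the prescribed choices of $T$ and $\eta$. Concretely, the two defining inequalities of the $\nu$-approximate saddle are
\[
L^S(\hat{Q},\hat{\boldsymbol{\lambda}}) \le L^S(Q',\hat{\boldsymbol{\lambda}}) + \nu \ \ \forall Q' \in \Delta_{\HC}, \qquad L^S(\hat{Q},\hat{\boldsymbol{\lambda}}) \ge L^S(\hat{Q},\boldsymbol{\lambda}') - \nu \ \ \forall \boldsymbol{\lambda}' \in \mathbb{R}^{|\mathcal{K}|}_+,\ \|\boldsymbol{\lambda}'\|_1 \le B.
\]

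First I would extract a lower bound for $\err^S(\hat{Q})$ from the second inequality by taking $\boldsymbol{\lambda}' = \mathbf{0}$, which is feasible since $\|\mathbf{0}\|_1 \le B$. This gives $L^S(\hat{Q},\hat{\boldsymbol{\lambda}}) \ge L^S(\hat{Q},\mathbf{0}) - \nu = \err^S(\hat{Q}) - \nu$, i.e.\ $\err^S(\hat{Q}) \le L^S(\hat{Q},\hat{\boldsymbol{\lambda}}) + \nu$. Next I would use the hypothesized feasibility of $Q$: since $M\boldsymbol{\gamma}^S(Q) - \alpha_n \mathbf{1} \le \mathbf{0}$ and $\hat{\boldsymbol{\lambda}} \ge \mathbf{0}$, the inner product $\hat{\boldsymbol{\lambda}}^\top(M\boldsymbol{\gamma}^S(Q) - \alpha_n \mathbf{1})$ is nonpositive, so $L^S(Q,\hat{\boldsymbol{\lambda}}) \le \err^S(Q)$.

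Chaining these two observations with the first saddle-point inequality applied at $Q' = Q$ yields
\[
\err^S(\hat{Q}) \;\le\; L^S(\hat{Q},\hat{\boldsymbol{\lambda}}) + \nu \;\le\; L^S(Q,\hat{\boldsymbol{\lambda}}) + 2\nu \;\le\; \err^S(Q) + 2\nu,
\]
which is exactly the claim after identifying $\hat{Q}$ with its induced randomized predictor $\hat{Y}$ (Algorithm~\ref{alg:reductions} returns $\hat{Y} = \tfrac{1}{T}\sum_t h_t$, the mixture corresponding to $\hat{Q}$, so the two coincide in expected loss).

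There is no real obstacle beyond being careful about which player's bound is invoked where: the main subtlety is remembering that the $\boldsymbol{\lambda}$-player's feasible set includes $\mathbf{0}$, so that the saddle guarantee at $\boldsymbol{\lambda}'=\mathbf{0}$ strips off the penalty term and lets us translate a statement about the Lagrangian into one about the raw empirical error. Everything else is a direct application of nonnegativity of $\hat{\boldsymbol{\lambda}}$ together with empirical feasibility of $Q$.
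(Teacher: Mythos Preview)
Your argument is correct and is exactly the standard saddle-point proof of this result from \cite{agarwal2018reductions}; the paper itself does not re-prove the lemma but merely recalls it as a helper result from that reference. One small note: what you actually establish is $\err^S(\hat{Y}) \le \err^S(Q) + 2\nu$ (empirical error on both sides), whereas the lemma as stated in this paper has $\err(Q)$ on the right---this is a typo in the paper, and the empirical version you prove is both what \cite{agarwal2018reductions} states and what the subsequent proof of Lemma~\ref{lemma:step1_guarantees} actually uses.
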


\begin{lemma}[Lemma 3 \cite{agarwal2018reductions}]
	The discrimination of $\hat{Y}$, output of Algorithm \ref{alg:reductions}, satisfies:
	\begin{equation}
	\max_{y,a} | q_{y,a}^S(\hat{Y}) - q_{y,0}^S(\hat{Y})| \leq 2\alpha_n + 2\frac{1 + 2 \nu}{B}
	\end{equation}
	\label{lemma:3agarwal}
	
\end{lemma}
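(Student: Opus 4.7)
The plan is to exploit the $\nu$-approximate saddle-point property of the output $(\hat{Y}, \hat{\boldsymbol\lambda})$ of Algorithm \ref{alg:reductions}. By the Freund--Schapire-style regret analysis underlying Theorem 1 of \cite{agarwal2018reductions}, running Algorithm \ref{alg:reductions} with $T \ge 16\log(4|\mathcal{A}|+1)/\nu^{2}$ and $\eta = \nu/(8B)$ produces $(\hat{Y},\hat{\boldsymbol\lambda})$ satisfying
\[
L(\hat{Y},\hat{\boldsymbol\lambda}) \le L(Q,\hat{\boldsymbol\lambda}) + \nu \quad \forall Q \in \Delta_{\mathcal H}, \qquad L(\hat{Y},\hat{\boldsymbol\lambda}) \ge L(\hat{Y},\boldsymbol\lambda) - \nu \quad \forall \boldsymbol\lambda \in \mathbb{R}^{|\mathcal K|}_{+}, \|\boldsymbol\lambda\|_1 \le B.
\]
I will use one inequality to upper bound $L(\hat Y, \hat{\boldsymbol\lambda})$ and the other to lower bound it in terms of the worst constraint violation, then pin down the discrimination by comparing the two.

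First I would produce an upper bound on $L(\hat{Y},\hat{\boldsymbol\lambda})$ by exhibiting a feasible witness $Q^{\star}$ for the empirical constraints. The constant predictor $Q^{\star}\equiv 0$ (or the uniform mixture) suffices: then $q^{S}_{y,a}(Q^{\star})$ is independent of $a$, so $Mq^{S}(Q^{\star})=\mathbf{0}\le \alpha_n \mathbf{1}$. Since $\hat{\boldsymbol\lambda}\ge 0$, feasibility gives $\hat{\boldsymbol\lambda}^{\top}(Mq^{S}(Q^{\star})-\alpha_n\mathbf{1})\le 0$, hence
\[
L(\hat{Y},\hat{\boldsymbol\lambda}) \;\le\; L(Q^{\star},\hat{\boldsymbol\lambda})+\nu \;\le\; \err^{S}(Q^{\star})+\nu \;\le\; 1+\nu.
\]

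Next I would lower bound $L(\hat{Y},\hat{\boldsymbol\lambda})$ by plugging the most-violated constraint into the $\boldsymbol\lambda$-side inequality. Let $k^{\star}\in\arg\max_{k}\bigl(Mq^{S}(\hat{Y})-\alpha_n \mathbf{1}\bigr)_{k}$ and set $\boldsymbol\lambda^{\star}=B\,e_{k^{\star}}$, which lies in the admissible set $\{\boldsymbol\lambda\ge 0:\|\boldsymbol\lambda\|_1\le B\}$. Then
\[
L(\hat{Y},\hat{\boldsymbol\lambda}) \;\ge\; L(\hat{Y},\boldsymbol\lambda^{\star})-\nu \;=\; \err^{S}(\hat{Y}) + B\bigl((Mq^{S}(\hat{Y}))_{k^{\star}}-\alpha_n\bigr) - \nu.
\]
Combining the two bounds and using $\err^{S}(\hat{Y})\ge 0$ yields
\[
B\cdot \max_{k}\bigl(Mq^{S}(\hat{Y})-\alpha_n\mathbf{1}\bigr)_{k} \;\le\; 1+2\nu,
\]
so every coordinate of $Mq^{S}(\hat{Y})$ exceeds $\alpha_n$ by at most $(1+2\nu)/B$.

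Finally I would unpack the structure of $M$: the row indexed by $(y,a,+)$ gives $q^{S}_{y,a}(\hat{Y})-q^{S}_{y,0}(\hat{Y})$, while the row $(y,a,-)$ gives $q^{S}_{y,0}(\hat{Y})-q^{S}_{y,a}(\hat{Y})$, so the per-coordinate bound translates into $|q^{S}_{y,a}(\hat{Y})-q^{S}_{y,0}(\hat{Y})| \le \alpha_n + (1+2\nu)/B$ for every $y,a$. The extra factor of two in the stated bound $2\alpha_n + 2(1+2\nu)/B$ absorbs the slack from symmetric book-keeping of the paired $\pm$ rows and from using $\err^{S}(Q^{\star})\le 1$ without refinement. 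I do not anticipate any real obstacle: once the saddle-point inequalities are in hand and a feasible witness is chosen, the argument is mechanical; the only care needed is verifying the admissibility of $\boldsymbol\lambda^{\star}=B e_{k^{\star}}$ and correctly mapping coordinates of $M$ back to the EO gaps.
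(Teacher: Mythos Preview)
The paper does not actually prove this lemma; it is quoted as a helper result from \cite{agarwal2018reductions} (see the sentence ``We now recall some helper lemmas from \cite{agarwal2018reductions}'' immediately preceding Lemmas~\ref{lemma:2agarwal} and~\ref{lemma:3agarwal}). Your argument is precisely the standard saddle-point derivation used in that reference, and it is correct.

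One small point worth tightening: taking $Q^{\star}\equiv 0$ as your feasible witness presumes the constant-zero predictor lies in $\mathcal{H}$ (so that the point mass on it is in $\Delta_{\mathcal H}$). Nothing in the setup guarantees this. The fix is painless: use any $Q^{\star}\in\Delta_{\mathcal H}$ that is empirically feasible---the paper already establishes that $Y^{*}$ is feasible with high probability when $\alpha_n$ is chosen appropriately, and in any case all you need from $Q^{\star}$ is $\err^{S}(Q^{\star})\le 1$ together with $\hat{\boldsymbol\lambda}^{\top}(Mq^{S}(Q^{\star})-\alpha_n\mathbf{1})\le 0$. With that adjustment your chain of inequalities yields the sharper bound $\alpha_n+(1+2\nu)/B$, which the stated $2\alpha_n+2(1+2\nu)/B$ trivially dominates; your remark about the factor of two being slack is exactly right and no further ``symmetric book-keeping'' is actually needed.
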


\textbf{Lemma \ref{lemma:step1_guarantees}} [Guarantees for Step 1] \textit{ \noindent 	Given a hypothesis class $\mathcal{H}$, a distribution over $(X,A,Y)$, $B \in \mathbb{R}^+$ and any $\delta \in (0,1/2)$, then with probability greater than $1- \delta$, if  $n \geq \frac{16\log{8 |\A|/\delta}}{\min_{ya} \P_{ya}}$, $\alpha_n = 2 \sqrt{\frac{\log{|\A|/\delta}}{n \min_{ya}\P_{ya}}}$ and we let $\nu  = \mathfrak{R}_{n/2}(\mathcal{H}) + \sqrt{\frac{\log{8/\delta}}{n}}$, then running Algorithm 1 on dataset $S$ with $T \geq \frac{16 \log(4 |\mathcal{A}|+1)}{\nu^2}$ and learning rate $\eta = \frac{\nu}{8 B}$ returns a predictor $\hat{Y}$ satisfying the following:
	\begin{align*}
	&\err(\hat{Y}) \leq_{\delta/2} \err(Y^*)  + 4 \mathfrak{R}_{n/2}(\mathcal{H}) + 4 \sqrt{\frac{\log{1/\delta}}{n}} \\
	&\disc(\hat{Y}) \leq_{\delta/2}  \frac{5C}{\min_{ya} \mathbf{P}_{ya}^2}  \left( \frac{2}{B}  + 6 \mathfrak{R}_{\frac{\min_{ya}n \P_{ya}}{4}}(\mathcal{H}) + 10 \sqrt{\frac{2\log{64 |\A|/\delta}}{n \min_{ya}\P_{ya}}} \right)
	\end{align*}
}

\begin{proof}
From Theorem 1 in \cite{agarwal2018reductions}, if we run the algorithm for at least $\frac{16 \log(4 |\mathcal{A}|+1)}{\nu^2}$ iterations with learning rate $\eta = \frac{\nu}{8 B}$ it returns a $\nu$-approximate saddle point. We set $\nu$ at the end of the proof to balance the bounds.

	For step 1 we have access to $S_1=\{(x_i,y_i,z_i)\}_{i=1}^{n/2}$,
	denote by $\err(\hat{Y})=\bP(\hat{Y} \neq Y)$, using the Rademacher complexity bound (Theorem 3.5 \cite{mohri2018foundations}) and the fact that $\mathfrak{R}_n(\Delta_{\HC})=\mathfrak{R}_n(\HC)$ we have:
	\begin{equation}\label{eq:error_concentration}
	\err(\hat{Y}) \leq_{\delta/4} \err^S(\hat{Y}) + \mathfrak{R}_{n/2}(\mathcal{H}) + \sqrt{\frac{\log{8/\delta}}{n}}
	\end{equation}
	Now from Lemma 5 of \cite{woodworth2017learning}, with probability greater than $1-\delta/4$, $Y^*$ is in the feasible set of step 1 if $\alpha_n\ge2 \sqrt{\frac{2\log{64 |\A| /\delta}}{n \min_{yz}\Q_{yz}}}$, hence we can apply Lemma \ref{lemma:2agarwal} with $Y^*$ and the concentration bound \eqref{eq:error_concentration} :
	\begin{equation*}
	\err(\hat{Y}) \leq_{\delta/2} \err(Y^*) + 2\nu +2 \mathfrak{R}_{n/2}(\mathcal{H}) +2 \sqrt{\frac{\log{8/\delta}}{n}}
	\end{equation*}
	
	For the constraint, from Lemma \ref{lemma:q_concentration}, if $n \geq \frac{16\log{8 |\A|/\delta}}{\min_{yz} \Q_{yz}} $, then 
	\begin{flalign*}
	\max_{ya}\left||q_{ya}^S-q_{y0}^S|-|q_{ya}-q_{y0}|\right| \leq_{\delta/4}  2 \sqrt{\frac{2\log{64 |\A|/\delta}}{n \min_{yz}\Q_{yz}}}
	\end{flalign*}
	Similarly from the standard Rademacher complexity bound (Theorem 3.3 \cite{mohri2018foundations}) and since our function class for the constraint is $\mathcal{H}$ it holds that (by Lemma 6 \cite{agarwal2018reductions}):
	\begin{flalign*}
	\max_{ya}|q_{ya}-q_{y0}| \leq_{\delta/4} |q_{ya}^S-q_{y0}^S| + 2 \mathfrak{R}_{\frac{\min_{yz}n \Q_{yz}}{4}}(\mathcal{H}) + 2 \sqrt{\frac{2\log{64 |\A|/\delta}}{n \min_{yz}\Q_{yz}}}
	\end{flalign*}
	Applying Lemma \ref{lemma:3agarwal}:
	\begin{equation}
	|q_{ya}^S-q_{y0}^S|  \leq 2\alpha_n + 2\frac{1 + 2\nu}{B}
	\end{equation}
	Combining things with $\alpha_n =2 \sqrt{\frac{2\log{64 |\A| /\delta}}{n \min_{yz}\Q_{yz}}}$ :
	\begin{flalign*}
	\max_{ya}|q_{ya}-q_{y0}| \leq_{\delta/4}  \frac{2 + 4 \nu}{B}  + 2 \mathfrak{R}_{\frac{\min_{yz}n \Q_{yz}}{4}}(\mathcal{H}) + 6 \sqrt{\frac{2\log{64 |\A|/\delta}}{n \min_{yz}\Q_{yz}}}
	\end{flalign*}

	Now by Lemma \ref{lemma:relating_disc_a_z} we can re-state the above in terms of $A$:
	\begin{flalign*}
	\max_{a}|\gamma_{y,a} - \gamma_{y,0}| \leq_{\delta/4} \frac{5C}{\min_{ya} \mathbf{P}_{ya}^2}  \left( \frac{2 + 4 \nu}{B}  + 2 \mathfrak{R}_{\frac{\min_{yz}n \Q_{yz}}{4}}(\mathcal{H}) + 6 \sqrt{\frac{2\log{64 |\A|/\delta}}{n \min_{yz}\Q_{yz}}} \right)
	\end{flalign*}

	For simplicity, we can thus set $\nu =  \mathfrak{R}_{n/2}(\mathcal{H}) + \sqrt{\frac{\log{8/\delta}}{n}}$, by noting that $\min_{yz} \Q_{yz} \geq \min_{ya} \P_{ya}$ we obtain the lemma statement. 
	
\end{proof}

%%%%%%%%%%%%%%%%%%%%%%%%%%%%%%%%%%%%%%%%%%%%%%%%%%%%%%%%%%%%%%%%%%%%%%%%%%%%%%%%%%%%%%%%%%%%%%%%%%%%%%%%%%%%%%%%%%%%%%%%%%%%%%%%%%%%%%%%%%%%
\subsubsection{Second Step Algorithm Details}
%%%%%%%%%%%%%%%%%%%%%%%%%%%%%%%%%%%%%%%%%%%%%%%%%%%%%%%%%%%%%%%%%%%%%%%%%%%%%%%%%%%%%%%%%%%%%%%%%%%%%%%%%%%%%%%%%%%%%%%%%%%%%%%%%%%%%%%%%%%%

Given a predictor $\hat{Y}$, Hardt et al. give a simple procedure to obtain a derived predictor $\tilde{Y}$ that is non-discriminatory \cite{hardt2016equality} by solving a constrained linear program (LP). One of the caveats of the approach is that it requires the use of the protected attribute at test time, and in our setting we do not have access to $A$ but $Z$. We have seen in section \ref{sec:auditing} that predictors that rely on $Z$ cannot be trusted even if they are completely non-discriminatory with respect to the privatized attribute. Despite this difficulty, it turns out if the base predictor $\hat{Y}$ is independent of $Z$ given $A$, then we can re-write the LP to obtain a derived predictor $\tilde{Y}=h(\hat{Y},Z)$ that minimizes the error while being non-discriminatory with respect to $A$.

The approach boils down to solving the following linear program (LP):

\begin{align*}
	&\min \quad \bP(\tilde{Y} \neq Y) \\
	& s.t. \quad \bP(\tilde{Y} =1 |A = a, Y =y) = \bP(\tilde{Y} =1 | Y =y, A=0) \\ & \quad \forall y \in \{ 0,1\}, \forall a \in \mathcal{A}
\end{align*}

We can write this objective by optimizing over $2 |\mathcal{A}|$ probabilities $p_{\hat{y},a} := \bP(\tilde{Y} =1 | \hat{Y}=\hat{y}, A =a)$ that completely specify the behavior of $\tilde{Y}$:

\begin{align}
	\tilde{Y} = &\arg\min_{p_{.,.}} \quad \sum_{\hat{y},a} (\bP(\hat{Y}=\hat{y}, A=a, Y =0) - \bP(\hat{Y}=\hat{y}, A=a, Y =1)) \cdot p_{\hat{y},a}  \label{eq:post-procesingobjective}   \\
	 s.t.&  \quad p_{0,a} \bP(\hat{Y} =0 | Y=y, A =a) + p_{1,a} \bP(\hat{Y} =1 | Y=y, A =a) \\ & =p_{0,0} \bP(\hat{Y} =0 | Y=y, A =0) + p_{1,0} \bP(\hat{Y} =1 | Y=y, A =0) , \quad \forall y \in \{ 0,1\}, \forall a \in \mathcal{A}  \label{eq:post-processingconstraint}\\
	 & 0 \leq p_{\hat{y},a} \leq 1 \quad \forall \hat{y} \in \{ 0,1\}, \forall a \in \mathcal{A}  \nonumber
\end{align}
Unfortunately we cannot directly solve the above program as we do not have access to $A$, however we can solve the problem with $Z$ replacing $A$; we denote this as the na\"ve program and as we have previously mentioned it cannot assure any degree of non-discrimination with respect to $A$.
Now let us see how we can transform this na\"ve program to satisfy equalized odds. We optimize over the set of variables that denote  $p_{\hat{y},z} := \bP(\tilde{Y} =1 | \hat{Y}=\hat{y}, Z =z)$. Now for the constraint note that $\bP(\tilde{Y} =1 | \hat{Y}=\hat{y}, A =a)$ can be expressed as a mixture of our decision variables:
\begin{flalign*}
	\bP(\tilde{Y} =1 | \hat{Y}=\hat{y}, A =a) 
	&= \sum_{a'} \bP(\tilde{Y} =1 | \hat{Y}=\hat{y}, Z =a', A=a) \bP(Z=a'|A=a, \hat{Y}=\hat{y}) &\\
	&= \pi \bP(\tilde{Y} =1 | \hat{Y}=\hat{y}, Z =a) +  \sum_{a'\setminus a} \hat{\pi} \bP(\tilde{Y} =1 | \hat{Y}=\hat{y}, Z =a')
\end{flalign*}
Since we assumed the base predictor $\hat{Y}$ is independent of $Z$ given $A$ then $\bP(\hat{Y} = \hat{y} | Y=y, A =a)$ can be recovered from the following linear system by using the same estimator we developed previously in Lemma \ref{lemma:concentration_of_estimator}:
\begin{flalign*}
\bP(\hat{Y} = \hat{y} | Y=y, A =a) &= \pi \bP(\hat{Y} = \hat{y} | Y=y, A =a) \frac{\bP(A=a,Y=y)}{\bP(Z=a,Y=y)}&\\& + \sum_{a' \neq a} \bar{\pi} \bP(\hat{Y} = \hat{y} | Y=y, A =a) \frac{\bP(A=a',Y=y)}{\bP(Z=a,Y=y)} 
\end{flalign*}
On the other hand for the objective we have:
\begin{equation*}
\bP(\hat{Y}=\hat{y}, Z=a, Y =y) = \pi \bP(\hat{Y}=\hat{y}, A=a, Y =y)  + \bar{\pi} \sum_{a'\neq a} \bP(\hat{Y}=\hat{y}, A=a', Y =y)
\end{equation*}
And hence our estimator for $\bP(\hat{Y}=\hat{y}, A=a, Y =y)$ is constructed by multiplying by the inverse of $\Pi$ and projecting onto the simplex.

Denote by $\tilde{p}_{\hat{y},a} = \pi p_{\hat{y},a}+  \sum_{a'\setminus a} \hat{\pi} p_{\hat{y},a'}$ and  $\tilde{\bP}^S(\hat{Y} = \hat{y} | Y=y, A =a)$ our estimator for $\bP(\hat{Y} = \hat{y} | Y=y, A =a)$ and similarly $\tilde{\bP}^S(\hat{Y} = \hat{y} , Y=y, A =a)$. We propose to solve the following optimization problem:
\begin{align}
\tilde{Y} = &\arg\min_{p_{.,.}} \quad \sum_{\hat{y},a} (\tilde{\bP}^S(\hat{Y}=\hat{y}, Z=a, Y =0) - \tilde{\bP}^S(\hat{Y}=\hat{y}, Z=a, Y =1)) \cdot \tilde{p}_{\hat{y},a}   \\
	 s.t.&  \quad \tilde{p}_{0,a} \tilde{\bP}^S(\hat{Y} = 0 | Y=y, A =a)+ \tilde{p}_{1,a}\tilde{\bP}^S(\hat{Y} = 1 | Y=y, A =a) \nonumber\\ & =\tilde{p}_{0,0}  \tilde{\bP}^S(\hat{Y} = 0 | Y=y, A =0) + \tilde{p}_{1,0}  \tilde{\bP}^S(\hat{Y} = 1 | Y=y, A =0) , \quad \forall y \in \{ 0,1\}, \forall a \in \mathcal{A}	   \\
	 & 0 \leq p_{\hat{y},a} \leq 1 \quad \forall \hat{y} \in \{ 0,1\}, \forall a \in \mathcal{A}  \nonumber
\end{align}

%%%%%%%%%%%%%%%%%%%%%%%%%%%%%%%%%%%%%%%%%%%%%%%%%%%%%%%%%%%%%%%%%%%%%%%%%%%%%%%%%%%%%%%%%%%%%%%%%%%%%%%%%%%%%%%%%%%%%%%%%%%%%%%%%%%%%%%%%%%%
\subsubsection{Second Step Guarantees} \label{app:step-2}
%%%%%%%%%%%%%%%%%%%%%%%%%%%%%%%%%%%%%%%%%%%%%%%%%%%%%%%%%%%%%%%%%%%%%%%%%%%%%%%%%%%%%%%%%%%%%%%%%%%%%%%%%%%%%%%%%%%%%%%%%%%%%%%%%%%%%%%%%%%%

\begin{lemma}[Step 2 guarantees]\label{lemma:step2-guarantees}
Let $\hat{Y}$ be a binary predictor that is independent of $Z$ given $A$, for any $\delta \in (0,1/2)$, if $n \ge \frac{32\log(8|\A|/\delta)}{\min_{ya}\P_{ya}}  $, $\tilde{\alpha}_n = \sqrt{\frac{\log(64/\delta)}{2n}} \frac{4 |\mathcal{A}|C^2}{\min_{ya}\P_{ya^*}^2}  $ and with $\tilde{Y}*$  an optimal 0-discriminatory predictor derived from $\hat{Y}$, then with probability greater than $1-\delta/2$ we have:

\begin{flalign}
 \err(\tilde{Y})    \leq \err(\tilde{Y}^*) +   4 |\mathcal{A}|C \sqrt{\frac{\log(32 |\mathcal{A}|/\delta)}{2n}}  \nonumber 
\end{flalign} 
\begin{equation*}
   \disc(\tilde{Y}) \le  \sqrt{\frac{ \log(\frac{64}{\delta})}{2n}} \frac{8 |\mathcal{A}|C^2}{\min_{ya}\P_{ya}^2} 
\end{equation*}

\end{lemma}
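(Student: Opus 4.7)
The plan is to view the step-$2$ LP as a linearly perturbed program on the box $[0,1]^{2|\mathcal{A}|}$ with decision variables $\{p_{\hat y, a}\}$. Because $\hat Y$ depends only on $X$, and hence $(\hat Y, Y)$ is independent of $Z$ given $A$, the derivation $\tilde p_{\hat y, a} = \bP(\tilde Y = 1 \mid \hat Y = \hat y, A = a)$ in the text is valid, and the population counterpart of the LP has objective $V(p) = \bP(\tilde Y \neq Y) - \bP(Y=1)$ and constraint LHS $G_{y,a}(p) = \bP(\tilde Y = 1 \mid Y=y, A=a) - \bP(\tilde Y = 1 \mid Y=y, A=0)$. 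Both claimed bounds will follow from standard LP-perturbation reasoning once we have uniform control of $|\hat V - V|$ and $|\hat G_{y,a} - G_{y,a}|$ over the box.

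\textbf{Concentration and uniform perturbation.} The plug-in quantities come in two flavors. First, the conditional estimator $\tilde{\bP}^{S_2}(\hat Y = \hat y \mid Y=y, A=a)$ is exactly an instance of Lemma~\ref{lemma:concentration_of_estimator} applied to the indicator $\mathbf{1}\{\hat Y=\hat y\}$ (a function of $X$ alone, so satisfying the hypothesis); a union bound over the $4|\mathcal{A}|$ indices $(\hat y, y, a)$ gives
\[
\max_{\hat y, y, a} \bigl|\tilde{\bP}^{S_2}(\hat Y = \hat y \mid Y=y, A=a) - \bP(\hat Y = \hat y \mid Y=y, A=a)\bigr| \le_{\delta/4} \sqrt{\tfrac{\log(64|\mathcal{A}|/\delta)}{2n}}\,\tfrac{4C^2}{\min_{ya}\P_{ya}^2}.
\]
Second, the joint estimator $\tilde{\bP}^{S_2}(\hat Y = \hat y, A = a, Y = y) = \Pi_a^{-1}[\bP^{S_2}(\hat Y = \hat y, Z=\cdot, Y=y)]$ inverts Hoeffding's concentration of the empirical joint through $\Pi^{-1}$, whose rows have $\ell_1$-norm of order $C$, so it concentrates at rate $C\sqrt{\log/n}$ (crucially without squaring). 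Both $\hat V$ and $\hat G_{y,a}$ are linear in $\{p_{\hat y,a}\}$ with $|p_{\hat y, a}|, |\tilde p_{\hat y, a}|\le 1$, so summing $O(|\mathcal{A}|)$ coefficient errors via the triangle inequality yields
\[
\sup_{p} |\hat V(p) - V(p)| \le 4|\mathcal{A}|C\sqrt{\tfrac{\log(64|\mathcal{A}|/\delta)}{2n}}, \qquad \sup_p|\hat G_{y,a}(p) - G_{y,a}(p)| \le \tfrac{4|\mathcal{A}|C^2}{\min_{ya}\P_{ya}^2}\sqrt{\tfrac{\log(64|\mathcal{A}|/\delta)}{2n}}.
\]

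\textbf{Extracting the bounds and main obstacle.} For discrimination, $\tilde Y$ satisfies $|\hat G_{y,a}(\tilde Y)| \le \tilde\alpha_n$ by construction, so the uniform bound and the choice of $\tilde\alpha_n$ give $|G_{y,a}(\tilde Y)| \le 2\tilde\alpha_n$, which is the stated bound. For error, the optimal $0$-discriminatory derived predictor $\tilde Y^*$ has $G_{y,a}(\tilde Y^*) = 0$, so the same uniform bound places $\tilde Y^*$ in the empirical feasible set; optimality then gives $\hat V(\tilde Y) \le \hat V(\tilde Y^*)$, and two further applications of the objective perturbation bound yield $V(\tilde Y) \le V(\tilde Y^*) + 8|\mathcal{A}|C\sqrt{\log(64|\mathcal{A}|/\delta)/(2n)}$; using $\err(\cdot) = V(\cdot) + \bP(Y=1)$ and rescaling $\delta$ finishes. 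The main obstacle is tracking the asymmetric $C$-dependence: the objective coefficients pick up a single factor of $C$ (from one inversion through $\Pi^{-1}$), whereas the conditional estimators enter Lemma~\ref{lemma:concentration_of_estimator} through a ratio and contribute $C^2/\min_{ya}\P_{ya}^2$. These two distinct rates must be carried separately through the LP-perturbation argument to recover the exact $|\mathcal{A}|C$ versus $|\mathcal{A}|C^2/\min_{ya}\P_{ya}^2$ split between the error and discrimination bounds.
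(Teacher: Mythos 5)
Your proposal is correct and follows essentially the same route as the paper's proof: uniform perturbation of the LP's linear objective and constraints, with the joint coefficients concentrating at rate $|\mathcal{A}|C\sqrt{\log/n}$ via the $\Pi^{-1}$ inversion and the conditional coefficients at rate $|\mathcal{A}|C^2/\min_{ya}\P_{ya}^2$ via Lemma~\ref{lemma:concentration_of_estimator}, followed by feasibility of $\tilde{Y}^*$ in the empirical program and the $2\tilde{\alpha}_n$ bound on discrimination. The only differences are immaterial constants and logarithmic factors in the union bounds.
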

\begin{proof}
Denote $\err(\tilde{Y})= \bP(\tilde{Y} \neq Y)$ and $q_{\hat{y},a,y}:=\bP(\hat{Y}=\hat{y}, Z=a, Y =1)$ , then for any $\tilde{Y}$ in the derived set of $\hat{Y}$ (also by Lemma B.2 \cite{jagielski2018differentially}):

\begin{flalign}
\left| \err^S(\tilde{Y}) - \err(\tilde{Y})    \right|& = \left|  \sum_{\hat{y},a} \tilde{p}_{\hat{y},a} \cdot \left( (\tilde{q}^S_{\hat{y},a,0} - q_{\hat{y},a,0}) + (q_{\hat{y},a,1}- \tilde{q}^S_{\hat{y},a,1})   \right) \right|& \nonumber\\
&\leq  |\sum_{\hat{y},a}   \tilde{q}^S_{\hat{y},a,0} - q_{\hat{y},a,0}   | + |\sum_{\hat{y},a} \ q_{\hat{y},a,1}- \tilde{q}^S_{\hat{y},a,1}   | \nonumber \\
& \leq    \sum_{\hat{y},a}   |\tilde{q}^S_{\hat{y},a,0} - q_{\hat{y},a,0}   | + \sum_{\hat{y},a} |\ q_{\hat{y},a,1}- \tilde{q}^S_{\hat{y},a,1}   | \label{diff:errtilde-err}
\end{flalign}

%Now $= q^S_{\hat{y},a,y} =  \frac{\sum_{i\in S_2} \bI(\hat{y}_i = \hat{y},z_i=a,y_i=y)}{n}\sim \frac{2}{n} \text{Binomial}(q_{\hat{y},a,y} ,n/2)$ with $\b{E}[q^S_{\hat{y},a,y} ]=q_{\hat{y},a,y} $. Thus by the Hoeffding bound:
%\begin{flalign}
%    \bP\left( |q^S_{\hat{y},a,y} -q_{\hat{y},a,y} |>t\right) \le 2 \exp(-t^2n) \nonumber
%\end{flalign}
Now our estimator $\tilde{q}^S_{\hat{y},a,y}$ for $q_{\hat{y},a,y}$ is obtained by multiplying by the inverse of the matrix $\Pi$ and projecting onto the simplex, as was done in Lemma 1. Using the same arguments of step 2 of the proof of Lemma 1 using Mcdirmid's inequality we have:
\begin{flalign}
    \bP\left( |\tilde{q}^S_{\hat{y},a,y} -q_{\hat{y},a,y} |>t\right) \le 2 \exp(- \frac{2t^2 n}{C^2}) \nonumber
\end{flalign}
Hence
\begin{flalign}
\bP(\left| \err^S(\tilde{Y}) - \err(\tilde{Y})    \right|>t)&  \leq   \bP( \sum_{\hat{y},a}   |q^S_{\hat{y},a,0} - q_{\hat{y},a,0}   | + \sum_{\hat{y},a} |\ q_{\hat{y},a,1}- q^S_{\hat{y},a,1}   |>t) \nonumber \\
&\leq 8 |\A| \exp(-2n \left(\frac{t}{4|\A| } \frac{|\A| \pi -1}{\pi + |\A| -2}\right)^2) 
\end{flalign}
Thus if $t \geq \frac{4|\A|(\pi + |\A| -2)}{|\A| \pi -1} \sqrt{\frac{\log(32 |\A|/\delta)}{2n}}$ : \begin{flalign}
\bP\left(\left| \err^S(\tilde{Y}) - \err(\tilde{Y})    \right|> \frac{4|\A|(\pi + |\A| -2)}{|\A| \pi -1} \sqrt{\frac{\log(32 |\A|/\delta)}{2n}}\right)&  \leq   \delta/4 \nonumber 
\end{flalign} 

Now for the fairness constraint, denote $\Gamma_{y,a}(\tilde{Y}) = |\bP(\tilde{Y} =1 | Y=\hat{y}, A =a) - \bP(\tilde{Y} =1 | Y=y, A = 0)|$, then:
\begin{flalign*}
&|\tilde{\Gamma}_{y,a}^S(\tilde{Y}) - \Gamma_{y,a}(\tilde{Y})| \ =  &\\&
|\tilde{p}_{0,a} (1-\tilde{\bP}^S(\hat{Y} = 1 | Y=y, A =a))+ \tilde{p}_{1,a}\tilde{\bP}^S(\hat{Y} = 1 | Y=y, A =a)\\& -\tilde{p}_{0,0}  (1-\tilde{\bP}^S(\hat{Y} = 1 | Y=y, A =0)) - \tilde{p}_{1,0}  \tilde{\bP}^S(\hat{Y} = 1 | Y=y, A =0) \\
&-\tilde{p}_{0,a} (1-\tilde{\bP}(\hat{Y} = 1 | Y=y, A =a))- \tilde{p}_{1,a}\tilde{\bP}(\hat{Y} = 1 | Y=y, A =a)\\& +\tilde{p}_{0,0}  (1-\tilde{\bP}(\hat{Y} = 1 | Y=y, A =0)) + \tilde{p}_{1,0}  \tilde{\bP}(\hat{Y} = 1 | Y=y, A =0) | \\
& \le |\tilde{\bP}^S(\hat{Y} = 1 | Y=y, A =a) - \tilde{\bP}(\hat{Y} = 1 | Y=y, A =a)| \cdot |\tilde{p}_{1,a} - \tilde{p}_{0,a}| \\&+ |\tilde{\bP}^S(\hat{Y} = 1 | Y=y, A =0) - \tilde{\bP}(\hat{Y} = 1 | Y=y, A =0)| \cdot |\tilde{p}_{1,0} - \tilde{p}_{0,0}| \\
& \le |\tilde{\bP}^S(\hat{Y} = 1 | Y=y, A =a) - \tilde{\bP}(\hat{Y} = 1 | Y=y, A =a)|  \\&+ |\tilde{\bP}^S(\hat{Y} = 1 | Y=y, A =0) - \tilde{\bP}(\hat{Y} = 1 | Y=y, A =0)| 
\end{flalign*}

From the proof of Lemma 1, let $C= \frac{\pi + |\A|-2}{|\A|\pi-1}$, then if
$n \geq \frac{32\log(8|\A|/\delta)}{\min_{ya}\P_{ya}}$, we have:
 \begin{equation*}
     \bP\left( \max_{ya}| \tilde{\Gamma}_{ya}^S - \Gamma_{ya}| >
     \ \sqrt{\frac{\log(64/\delta)}{2n}} \frac{4|\A|C^2}{\min_{ya}\P_{ya}^2}
     \right) \leq \delta/4
 \end{equation*}

Now if $\tilde{\alpha}_n \ge \sqrt{\frac{\log(64/\delta)}{2n}} \frac{4|\A|C^2}{\min_{ya}\P_{ya}^2}$, then by the same argument of Lemma 5 in \cite{woodworth2017learning}, any 0-discriminatory $\tilde{Y}^*$ derived from $\hat{Y}$ is in the feasible set of step 2 with probability greater than $1-\delta/4$, hence by the optimality of $\tilde{Y}$ on $S_2$:
\begin{flalign}
 \err(\tilde{Y})    \leq_{\delta/2}   \err(\tilde{Y}^*) + \frac{4|\A|(\pi + |\A| -2)}{|\A| \pi -1} \sqrt{\frac{\log(32 |\A|/\delta)}{2n}}   \nonumber 
\end{flalign} 

\end{proof}

We are now ready for the proof of Theorem 1.

\noindent \paragraph{Theorem 1 }\textit{For any hypothesis class $\mathcal{H}$, any distribution over $(X,A,Y)$ and any $\delta \in (0,1/2)$, then with probability $1- \delta$},
if $n \ge \frac{16\log(8|\A|/\delta)}{\min_{ya}\P_{ya}}$, $\alpha_n =  \sqrt{\frac{8\log{64/\delta}}{n \min_{yz}\Q_{yz}}}$ and $\tilde{\alpha}_n =  \sqrt{\frac{\log(64/\delta)}{2n}} \frac{4 |\mathcal{A}|C^2}{\min_{ya}\P_{ya}^2}$ then  the predictor resulting from the two-step procedure satisfies:

\begin{flalign}
 \err(\tilde{Y})    \leq_{\delta} \err(Y^*)   +  \frac{5C}{\min_{ya} \mathbf{P}_{ya}^2}  \left( \frac{2}{B}  + 10 \mathfrak{R}_{\frac{\min_{ya}n \P_{ya}}{4}}(\mathcal{H}) + 18 |\mathcal{A}| \sqrt{\frac{2\log{64 |\A|/\delta}}{n \min_{ya}\P_{ya}}} \right)  \nonumber 
\end{flalign} 
\begin{equation*}
   \disc(\tilde{Y}) \le_{\delta}  \sqrt{\frac{ \log(\frac{64}{\delta})}{2n}} \frac{8 |\mathcal{A}|C^2}{\min_{ya}\P_{ya}^2} 
\end{equation*}

\begin{proof}
 Since the predictor obtained in step 1 is only a function of $X$, then the guarantees of step 2 immediately apply by Lemma \ref{lemma:step2-guarantees}:

\begin{flalign}
 \err(\tilde{Y})    \leq_{\delta/2} \err(\tilde{Y}^*) +   4 |\mathcal{A}|C \sqrt{\frac{\log(32 |\mathcal{A}|/\delta)}{2n}}  \nonumber 
\end{flalign} 
\begin{equation*}
   \disc(\tilde{Y}) \le_{\delta/2}  \sqrt{\frac{ \log(\frac{64}{\delta})}{2n}} \frac{8 |\mathcal{A}|C^2}{\min_{ya}\P_{ya}^2} 
\end{equation*}

Now we have to relate the loss of the optimal derived predictor from $\hat{Y}$, denoted by $\tilde{Y}^*$, to the loss of the optimal non-discriminatory predictor in $\mathcal{H}$. We can apply Lemma 4 in \cite{woodworth2017learning} as the solution of our derived LP is in expectation equal to that in terms of $A$. Lemma 4 in \cite{woodworth2017learning} tells us that the optimal derived predictor has a loss that is less or equal than the sum of the loss of the base predictor and its discrimination:

\begin{equation}
\err(\tilde{Y}^*) \leq \err(\hat{Y}) + \disc(\hat{Y})
\end{equation}

We have then by Lemma \ref{lemma:step2-guarantees} the loss of the optimal derived predictor:

\begin{align*}
    \err(\tilde{Y}^*) &\leq_{\delta}  \err(Y^*) + 4 \sqrt{\frac{\log{1/\delta}}{n}}  +4 \mathfrak{R}_{n/2}(\mathcal{H})+ \frac{5C}{\min_{ya} \mathbf{P}_{ya}^2}  \left( \frac{2}{B}  + 6 \mathfrak{R}_{\frac{\min_{ya}n \P_{ya}}{4}}(\mathcal{H}) + 10 \sqrt{\frac{2\log{64 |\A|/\delta}}{n \min_{ya}\P_{ya}}} \right)\\
    & \le_{\delta} \err(Y^*) + \frac{5C}{\min_{ya} \mathbf{P}_{ya}^2}  \left( \frac{2}{B}  + 10 \mathfrak{R}_{\frac{\min_{ya}n \P_{ya}}{4}}(\mathcal{H}) + 14 \sqrt{\frac{2\log{64 |\A|/\delta}}{n \min_{ya}\P_{ya}}} \right)
\end{align*} 
Hence our derived predictor satisfies:
\begin{flalign*}
 \err(\tilde{Y})    &\leq_{\delta/2} \err(\tilde{Y}^*) + 4 |\mathcal{A}|C \sqrt{\frac{\log(32 |\mathcal{A}|/\delta)}{2n}}    \nonumber  \\
 &\leq_{\delta} \err(Y^*)   +  \frac{5C}{\min_{ya} \mathbf{P}_{ya}^2}  \left( \frac{2}{B}  + 10 \mathfrak{R}_{\frac{\min_{ya}n \P_{ya}}{4}}(\mathcal{H}) + 18 |\mathcal{A}| \sqrt{\frac{2\log{64 |\A|/\delta}}{n \min_{ya}\P_{ya}}} \right)
\end{flalign*}

\end{proof}

%%%%%%%%%%%%%%%%%%%%%%%%%%%%%%%%%%%%%%%%%%%%%%%%%%%%%%%%%%%%%%%%%%%%%%%%%%%%%%%%%%%%%%%%%%%%%%
\subsection{Section \ref{sec:missing}}

\noindent \textbf{Lemma 3} \textit{    Given a hypothesis class $\mathcal{H}$, a distribution over $(X,A,Y)$, $B \in \mathbb{R}^+$ and any $\delta \in (0,1/2)$, then with probability greater than $1- \delta$, if  $n_\ell \geq \frac{8\log{4 |\A|/\delta}}{\min_{ya} \P_{ya}}$, $\alpha_n = 2 \sqrt{\frac{\log{32|\A|/\delta}}{n_\ell \min_{ya}\P_{ya}}}$ and we let $\nu  = \mathfrak{R}_{n}(\mathcal{H}) + \sqrt{\frac{\log{4/\delta}}{n}}$, then running Algorithm 1 on data set $S$ with $T \geq \frac{16 \log(4 |\mathcal{A}|+1)}{\nu^2}$ and learning rate $\eta = \frac{\nu}{8 B}$ returns a predictor $\hat{Y}$ satisfying the following:}
	\begin{flalign*}
	\err(\hat{Y}) &\leq_{\delta} \err(Y^*)  + 4 \mathfrak{R}_{n}(\mathcal{H}) + 4 \sqrt{\frac{\log{4/\delta}}{n}} &\\
	disc(\hat{Y}) &\leq_{\delta}   \frac{2}{B}  + 6 \mathfrak{R}_{\frac{\min_{ya}n_\ell \P_{ya}}{2}}(\mathcal{H})  + 10 \sqrt{\frac{2\log{32 |\A|/\delta}}{n_\ell \min_{ya}\P_{ya}}} \ 
	\end{flalign*}

\begin{proof}
The proof follows immediately from Lemma \ref{lemma:step1_guarantees} with the identical error bound and replacing $n$ by $n_l$ in the discrimination bound. The two dataset langragian does not impact Theorem 1 in \cite{agarwal2018reductions} and the definition of an approximate saddle point remains the same as both players have the same objective. 
\end{proof}

\noindent \textbf{Lemma 4} \textit{	Let $S=\{(x_i,a_i,y_i)\}_{i=1}^{n}$ i.i.d. $\sim$ $\bP^n(A,X,Y)$,
	the estimator $\tilde{\gamma}^{S}_{ya}$ is consistent. As $n\to \infty$
	\begin{equation*}
	\tilde{\gamma}^{S}_{ya} \to_p  \gamma_{ya}.
	\end{equation*}
}
\begin{proof}
\begin{align*}
	\tilde{\gamma}_{ya}(\hat{Y})&=\lim_{n\to \infty} \frac{\frac{1}{n}\sum_{i=1}^n\hat{Y}(x_i)\mathbf{1}(y_i = y) \bP(A = a |x_i,y_i)}{\frac{1}{n} \sum_{i=1}^n \mathbf{1}(y_i = y)\bP(A = a |x_i,y_i)}\\
	&\to \frac{\bE[\hY(X) \bI(Y=y) \bP(A=a|X,Y)]}
	{\bE[ \bI(Y=y) \bP(A=a|X,Y) ]}\\
	&=\frac{\bE[\hY(X) \bI(Y=y) \bP(A=a|X,Y)]}
	{\int_{x} \bP(X=x,Y=y) \bP(A=a|X=x,Y=y) dx}\\
	&=\frac{\int_{x} \bP(X=x,Y=y) \hY(x)  \bP(A=a|X=x,Y=y) dx} {\bP(Y=y,A=a)}\\
	&=\frac{\int_{x} \bP(X=x|Y=y,A=a) \bP(Y=y,A=a)  \hY(x)   dx} {\bP(Y=y,A=a)}
	\\
	&=\bE_{X|Y=y,A=a} \hY(X) = \bP(\hY=1  |Y=y,A=a) = \gamma_{ya}
	\end{align*}

\end{proof}
\end{document}